\title{Bayesian Fairness}
\author{Christos Dimitrakakis\\University of Oslo, Chalmers University\\ \textit{christos.dimitrakakis@gmail.com}
  \And
  Yang Liu\\UC Santa Cruz\\\textit{yangliu@ucsc.edu}
\And
David C. Parkes\\Harvard University\\\textit{parkes@eecs.harvard.edu}
\And
Goran Radanovic\\Harvard University\\\textit{gradanovic@g.harvard.edu}
}
\DeclareSymbolFont{bbold}{U}{bbold}{m}{n}
\DeclareSymbolFontAlphabet{\mathbbold}{bbold}
\pgfplotsset{compat=newest}
\tikzstyle{utility}=[diamond,draw=black,draw=blue!50,fill=blue!10,inner sep=0mm, minimum size=8mm]
\tikzstyle{select}=[rectangle,draw=black,draw=blue!50,fill=blue!10,inner sep=0mm, minimum size=6mm]
\tikzstyle{hidden}=[dashed,draw=black]
\tikzstyle{RV}=[circle,draw=black,draw=blue!50,fill=blue!10,inner sep=0mm, minimum size=6mm]
\theoremstyle{plain}
\newtheorem{theorem}{Theorem}
\theoremstyle{definition}
\newtheorem{definition}{Definition}
\theoremstyle{remark}
\newtheoremstyle{example}  
{1em}       
{1em}       
{\small}      
{}          
{\scshape}  
{.}         
{.5em}      
{}          
\theoremstyle{example}
\newtheorem{example}{Example}
\definecolor{mycolor1}{rgb}{0.00000,0,1}%
\definecolor{mycolor2}{rgb}{1,0,0}%
\newcommand \E {\mathop{\mbox{\ensuremath{\mathbb{E}}}}\nolimits}
\renewcommand \Pr {\mathop{\mbox{\ensuremath{\mathbb{P}}}}\nolimits}
\newcommand{\set}[1]{\left\{\, #1 \,\right\} }
\newcommand{\cset}[2]{\left\{\, #1 ~\middle|~ #2 \,\right\} }
\newcommand{\indep}{\mathrel{\text{\scalebox{1.07}{$\perp\mkern-10mu\perp$}}}}
\newcommand\Simplex {{\mathcal P}}
\newcommand\Reals {{\mathds{R}}}
\newcommand \defn {\mathrel{\triangleq}}
\newcommand \argmax{\mathop{\rm arg\,max}}
\newcommand \grad {\nabla}
\newcommand \dd{\,\mathrm{d}}
\DeclareMathAlphabet{\mathpzc}{OT1}{pzc}{m}{it}
\def\clap#1{\hbox to 0pt{\hss#1\hss}}
\def\mathrlap{\mathpalette\mathrlapinternal}
\def\mathrlapinternal#1#2{%
           \rlap{$\mathsurround=0pt#1{#2}$}}
\newcommand \family {\mathscr{P}}
\newcommand {\CX} {\mathcal{X}}
\newcommand {\CY} {\mathcal{Y}}
\newcommand {\CZ} {\mathcal{Z}}
\newcommand {\CA} {\mathcal{A}}
\newcommand {\CB} {\mathcal{B}}
\newcommand \fair {f}
\newcommand \util {u}
\newcommand \val {V}
\newcommand \Param {\Theta}
\newcommand \param {\theta}
\newcommand \bel {\beta}
\newcommand \Bel {\CB}
\newcommand \act {a}
\newcommand \Act {\mathcal{A}}
\newcommand \out {y}
\newcommand \Out {\mathcal{Y}}
\newcommand \obs {x}
\newcommand \sns {z}
\newcommand \Obs {\mathcal{X}}
\newcommand \Sns {\mathcal{Z}}
\newcommand \pol {\pi}
\newcommand \Pbx[1] {\Pr_{\bel}{(#1 \mid x)}}
\newcommand\ind[1]{\mathop{\mbox{\ensuremath{\mathbb{I}}}}\left\{#1\right\}}
\newcommand \cd[1] {\todo{CD: #1}}
\pgfplotsset{
  marginal/.style={
    red,
    mark=o,
    line width=2pt
  },
  sample/.style={
    blue,
    mark=square,
    line width=2pt
  },
  bayes/.style={
    black,
    line width=2pt,
    dashed
  },
}
\newlength \fwidth
\def\clap#1{\hbox to 0pt{\hss#1\hss}}
\def\mathrlap{\mathpalette\mathrlapinternal}
\def\mathrlapinternal#1#2{%
           \rlap{$\mathsurround=0pt#1{#2}$}}
\newcommand{\rev}[1]{{\color{blue!20!black}#1}} 
\newcommand{\com}[1]{\textbf{\color{red}(COMMENT: #1)}} 
\newcommand{\clar}[1]{\textbf{\color{green!50!black}(NEED CLARIFICATION: #1)}}
\newcommand{\dcp}[1]{\textbf{\color{orange!50!black}(dcp: #1)}} 
\newcommand{\yang}[1]{\textbf{\color{red!50!black}(yang: #1)}} 
\newcommand{\goran}[1]{\textbf{\color{red!50!black}(goran: #1)}} 
\renewcommand{\cd}[1]{\textbf{\color{magenta!50!black}(cd: #1)}} 
\newcommand{\response}[1]{\textbf{\color{magenta}(RESPONSE: #1)}} 
\newcommand{\rev}[1]{#1}
\newcommand{\com}[1]{}
\newcommand{\clar}[1]{}
\newcommand{\response}[1]{}
\newcommand{\dcp}[1]{} 
\newcommand{\yang}[1]{} 
\begin{document}

\maketitle

\begin{abstract}
  We consider the problem of how decision making can be fair when the
  underlying probabilistic model of the world is not known with
  certainty. We argue that recent notions of fairness in machine
  learning need to explicitly incorporate parameter uncertainty, hence
  we introduce the notion of {\em Bayesian fairness} as a suitable
  candidate for fair decision rules. Using balance, a definition of
  fairness introduced in~\citep{kleinberg2016inherent}, we show how a
  Bayesian perspective can lead to well-performing and fair decision
  rules even under high uncertainty.
  \end{abstract}

\section{Introduction}
\label{sec:introduction}
Fairness is a desirable property of policies applied to a
population of individuals. For example, college admissions should be
decided on variables that inform about merit, but fairness may also
require taking into account the fact that certain communities are
inherently disadvantaged.  At the same time, a person should not feel
that another in a similar situation obtained an unfair advantage.  All
this must be taken into account while still optimizing a decision
maker's utility function.

Much of the recent work on fairness in machine learning has focused on
analysing sometimes conflicting definitions. In this paper we do not
focus on proposing new definitions or algorithms. We instead take a
closer look at informational aspects of fairness. In particular, by adopting a Bayesian viewpoint,
we can explicitly take into account model uncertainty, something that
turns out to be crucial for fairness.

Uncertainty about the underlying reality has two main
effects. Firstly, most notions of fairness are defined with respect to
some latent variables, including model parameters. This means that we
need to take into account uncertainty in order to be fair. Secondly,
in many problems our decisions determine what data we will collect in
the future. Ignoring uncertainty may magnify subtle biases in our model.

By viewing fairness through a Bayesian decision theoretic perspective,
we avoid these problems.  In particular, we demonstrate that Bayesian
policies can optimally trade off utility and fairness by explicitly
taking into account uncertainty about model parameters.

We consider a setting where a decision maker (DM) makes a sequence of
decisions through some chosen policy $\pol$ to maximise her expected
utility $\util$. However, the DM must trade off utility with
some fairness constraint $\fair$. We assume the existence of some
underlying probability law $P$, so that the decision problem, when $P$
is known, can be written as:
\begin{align}
  \max_\pol  (1 - \lambda) \E^\pol_P \util - \lambda \E^\pol_P \fair,
  \label{eq:oracle-decision-problem-weights}
\end{align}
where $\lambda$ is the DM's trade-off between fairness and utility.\footnote{We do not consider the alternative constrained problem i.e.  $\max \cset{\E^\pol_P \util}{\E^\pol_P \fair \leq \epsilon}$, in the present paper.} In this paper we adopt a Bayesian viewpoint and assume the DM has some belief $\bel$ over some family of distributions $\family \defn \cset{P_\param}{\param \in \Param}$, which may contain the actual law, i.e. $P_{\theta^*} = P$ for some $\theta^*$.

The DM's policy $\pol$ defines what actions $\act_t \in \Act$ the DM
takes at different (discrete) times $t$ depending on the available
information. More precisely, at time $t$ the DM observes some data
$\obs_t \in \Obs$, and depending on her belief $\bel_t$ makes a
decision $\act_t \in \Act$, so that $\pol(\act_t \mid \bel_t, \obs_t)$
defines a probability over actions for every possible belief and
observation.  The DM's objective is to maximize her expected
utility. We model this as a function with structure $\util : \Act
\times \Out \to \Reals$, where $\Out$ is a set of \emph{outcomes}.
The fairness concept we focus on in this paper is a Bayesian version
of {\em balance}~\citep{kleinberg2016inherent}, which depends on the
policy at time $t$.  In the Bayesian setting, information is
central. The amount of uncertainty about the model parameters directly
influences how fairness can be achieved. Informally, the more
uncertain we are, the more stochastic the decision rule is.

\paragraph{Our contributions.} In this paper, we develop a framework for fairness that is defined as being
appropriate to the available information for the DM. The motivation for the Bayesian framework is that there can be a high degree of uncertainty, particularly when not a lot of data has been collected, or in sequential settings. This
informational notion of fairness is central to our discussion. It
entails that the DM should take into account how unfair she would be
under all possible models, weighted by their probability. While the
fairness concepts we use are grounded in conditional
independence~\citep{chouldechova2016fair,kleinberg2016inherent,HardtPNS16} type of
notions of fairness, we employ a Bayesian decision theoretic methodology. In particular, we cleanly separate model parameters from the DM's information, and the decision rule used by the DM. Fairness can thus be seen as a property of the decision rule with respect to the true model (which is used to \emph{measure} fairness), while
achieving it depends
on the DM's information (which is used to derive \emph{algorithms}).
\rev{
The Bayesian approach we adopt for fair decision making is generally applicable. In this paper, however, we focus on a simple setting so that we can work without model approximations, and proceed directly to the effect of uncertainty on fairness. The policies we obtain are qualitatively and quantitatively different when we consider uncertainty (by being Bayesian) compared to when we do not.}

The Bayesian algorithms we develop, based on gradient descent, take
into account uncertainty by considering fairness with respect to the
DM's information.  This inherent modeling of uncertainty allows us to
select better policies when those policies influence the data we
collect, and thus our knowledge about the model.  This is an important
informational feedback effect, that a Bayesian methodology can provide
in a principled way. We provide experimental results on the COMPAS
dataset~\citep{compas:dataset} as well as artificial data, showing the
robustness of the Bayesian approach, and comparing against methods that define fairness measures according to a single, marginalized model (e.g. \citep{HardtPNS16}). While we mainly treat the non-sequential setting, where the data is fixed, we can also accommodate sequential, bandits-style settings, as explained in Sections~\emph{The Sequential setting} and ~\emph{Sequential allocation}. The results provide a vivid
illustration of what can go wrong with a certainty-equivalent
approach to achieving fairness. 

All missing proofs and details can be found in our supplementary materials.

\paragraph{Related work.}
Recently algorithmic fairness has been studied quite extensively in
the context of statistical decision making.
But we are not aware of work that adopts
a Bayesian perspective. For instance,
\citep{dwork2012fairness,chouldechova2016fair,corbett2017algorithmic,kleinberg2016inherent,kilbertus2017avoiding}
studied fairness under the one-shot statistical decision making
framework. \citep{jabbari2016fair,joseph2016rawlsian} kicked off the
study of fairness in sequential decision making settings. Besides,
there is also a trending line of research on fairness in other machine
learning topics, such as clustering~\citep{Chierichetti2017fair},
natural language processing~\citep{BlodgettO17} and recommendation
systems~\citep{CelisV17}.  While the aforementioned works focused on
fairness in a specific context, such as classification,
\citep{corbett2017algorithmic} have considered how to satisfy some of
the above fairness constraints while maximizing expected utility.  For
a given model, they find a decision rule that maximizes expected
utility while satisfying fairness constraints. 
\citep{dwork2012fairness} consider
an individual-fairness approach,
and look for decision
rules that  are smooth in a sense that
similar \emph{individuals} are treated similarly.
\rev{
Finally, we'd like to mention the recent work of \citep{russell2017worlds}, which considers the problem of uncertainty from the point of view of causal modeling, with the three main differences being (a) They consider a PAC-like setting, rather than the Bayesian framework; (b) We show that the effect of uncertainty remains important even without varying the counterfactual assumptions, which is the main focus of that paper; (c) the Bayesian framework easily admits a sequential setting.}

In this paper, we focus on notions of fairness related to notions of
conditional independence, discussed next.

\section{Preliminaries}

\label{sec:preliminaries}
\citep{chouldechova2016fair} considers the problem of fair prediction
with disparate impact. She defines an action\footnote{Called a
  ``statistic'' in their paper.} $\act$ as \emph{test-fair} with respect
to the outcome $y$ and the sensitive variable $z$ if $y$ is
independent of $z$ under the action and parameter $\param$, i.e. if
$y \indep z \mid \act, \param$.  While the author does not explicitly
discuss the distribution $P_\param$, it is implicitly assumed to be that of the true model. We slightly generalize it as follows:
\begin{definition}[Calibrated decision rule]
  A decision rule $\pol(a \mid x)$ is \emph{calibrated} with respect to some distribution $P_\param$ if $y, z$ are independent for all actions $a$ taken, i.e. if
  \begin{equation}
    \label{eq:calibrated-rule}
    P^\pol_\param(y, z \mid a) =  P^\pol_\param(y \mid a) P^\pol_\param(z \mid a),
  \end{equation}
  where $P_\param^\pol$ is the distribution induced by $P_\param$ and the decision rule $\pol$.
  \label{def:calibrated-rule}
\end{definition}

\citep{kleinberg2016inherent} also consider two balance conditions, which we re-interpret as follows:
\ifdefined \longver
\begin{align}
  v(\act) = P_\param^\pol(y = 1 \mid \act, z)  \tag{calibration}\\
  \E_\param^\pol(\util \mid y, z) = \E_\param^\pol(\util \mid y, z') \tag{balance}.
\end{align}
The authors show that these cannot can be simultaneously achieved
under the distribution $P_\param^\pol$ induced by the underlying
parameter $\param$ and the decision rule $\pol$.
A sufficient condition for
\emph{balance} is the conditional independence :
$\util \indep z \mid y, \param, \pol$, i.e. when
$P_\param^\pol(\util \mid y , z) = P_\param^\pol(\util \mid y)$. This
will be the basis of our own balance definitions. 

\goran{Chouldechova (extended version) and Kleinberg conditions seems to be in spirit similar. If so, 
can we compress the above and simply state our version of them? (Or we consider this our contribution?)}
\cd{Yes, we can remove the above.}
\fi
\begin{definition}[Balanced decision rule]
  A decision rule~\footnote{Here we simplified the notation of the decision rule so that $\pol(a \mid x)$ corresponds to the probability of taking action $a$
given observation $x$.} $\pol(a \mid x)$ is balanced with respect to some distribution $P_\param$ if $a, z$ are independent for all $y$, i.e. if
  \begin{equation}
    \label{eq:balanced-rule}
    P_\param^\pol(a, z \mid y) =  P_\param^\pol(a \mid y) P_\param^\pol(z \mid y),
  \end{equation}
  where $P_\param^\pol$ is the distribution induced by $P_\param$ and the decision rule $\pol$.
  \label{def:balanced-rule}
\end{definition}
These authors also work with the true model, while we will
slightly generalize the definition, stating balance with respect to
any model parameter.

Unfortunately, the calibration and balanced conditions cannot be
achieved simultaneously for non-trivial environments
\citep{kleinberg2016inherent}. This is also true for our more general
definitions, as we show in Theorem~\ref{thm:impossible} in the
Supplementary material.  From a practitioner's perspective,
we must choose either the calibration condition or the balanced conditions
in order to find a fair decision rule.
We work with the balanced
condition, because it gracefully degrades to settings with
uncertainty. In particular, balance involves equality in the
expectation of a score function (when writing the probabilities as the
expectations of 0-1 score functions; also depending on an observation
$x$) under different values of a sensitive variable $z$, conditioned
on the true (but latent) outcome $y$. Consequently, balance can always
be satisfied---by using a trivial, for example randomized
decision rule, being independent of $x$.  The same, however, does not
hold for the calibration condition under model uncertainty. Note that there also exist other fairness notions that go beyond disparate treatment \citep{zafar2017fairness}. This merits future studies, and is out of the scope of the current draft.
 




\ifdefined \longver On \textbf{sequential decision
  problems}, such as multi-armed bandits and reinforcement learning,
\citet{joseph:fair-bandits} define an algorithm as fair if it plays
arms with highest means most of the time. \citet{jabbari:fair-mdp}
study a similar notion for Markovian environments, whereby the
algorithm's is fair the algorithm is more likely to play actions that
have a higher utility under the optimal policy. However, this notion
of fairness relies on oracle knowledge, while we focus on
subjectivity.  \fi

%


\section{Bayesian Formulation}
\label{sec:formulation}

We first introduce a concrete, statistical decision problem. The true
(latent) outcome $\out$ is generated independently of the DM's
decision, with a probability distribution that depends on the
available information $\obs$. There also exists a sensitive attribute
variable $\sns$, which may be dependent on $\obs$.\footnote{Depending
  on the application scenario, $\sns$ may actually be a subset of
  $\obs$ and thus directly observable, while in other scenarios it may
  be latent. Here we focus on the case where $\sns$ is not directly
  observed.}

\begin{definition}[Statistical decision problem]
 See Figure~\ref{fig:bayes-rule} for the decision diagram. The DM
 observes $\obs \in \Obs$, then takes a decision $\act \in \Act$ and
 obtains utility $\util(\out, \act)$ depending on a true (latent)
 outcome $\out \in \Out$ generated from some distribution
 $P_\param(\out \mid \obs)$.  The DM has a belief $\bel \in \Bel$ in
 the form of a probability distribution on parameters $\param \in
 \Param$ on a family $\family \defn \cset{P_\param(y \mid x)}{\param
   \in \Param}$ of distributions. In the Bayesian case, the belief
 $\bel$ is a posterior formed through a prior and available data. The
 DM has a utility function $\util : \CY \times \CA \to \Reals$, with
 utility depending on the DM's action and the outcome.
  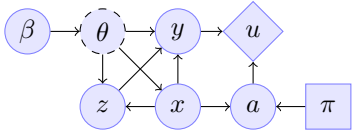
\begin{figure}
    \centering
    \begin{tikzpicture}
      \node[RV] at (-1,0) (b) {$\bel$}; 
      \node[RV,hidden] at (0,0) (p) {$\param$}; 
      \node[RV] at (0,-1) (z) {$z$}; 
      \node[RV] at (1,-1) (x) {$x$}; 
      \node[RV] at (1,0) (y) {$y$}; 
      \node[utility] at (2,0) (u) {$\util$}; 
      \node[RV] at (2,-1) (a) {$a$}; 
      \node[select] at (3,-1) (pol) {$\pol$}; 
      \draw[->] (x)--(y);
      \draw[->] (z)--(y);
      \draw[->] (x)--(z);
      \draw[->] (p)--(x);
      \draw[->] (p)--(y);
      \draw[->] (p)--(z);
      \draw[->] (b)--(p);
      \draw[->] (y)--(u);
      \draw[->] (a)--(u);
      \draw[->] (pol)--(a);
      \draw[->] (x)--(a);
    \end{tikzpicture}
    \caption{The basic Bayesian decision problem with observations $x$, outcome $y$, action $a$,  sensitive variable $z$,  utility $\util$, unknown parameter $\param$, belief $\bel$ and policy $\pol$. The joint distribution of $x,y,z$ is fully determined by the unknown parameter $\param$, while the conditional distribution of actions $a$ given observations $x$ is given by the selected policy $\pol$. The DM's  utility function is $\util$, while the fairness of the policy depends on the problem parameters.}
    \label{fig:bayes-rule}
  \end{figure}
\end{definition}

For simplicity, we will
assume that $\Obs$, $ \Act$, and $\Out$, are finite and discrete,
whereas $\Param$ will be a subset of $\Reals^n$. We focus on Bayesian
decision rules, i.e. rules whose decisions  depend upon a
posterior belief $\bel$. The Bayes-optimal
decision rule for a given posterior and utility, but ignoring
fairness, is defined below. 
\begin{definition}[Bayes-optimal decision rule]
  The Bayes-optimal decision rule $\pol^* : \Bel \times \CX \to \CA$ is a
  deterministic policy that maximizes the utility in expectation,
  i.e. takes action
  $\pol^*(\bel, x) \in \argmax_{a \in \CA} \util_{\bel}(a \mid x)$, with
  $\util_{\bel}(a \mid x) \defn \sum_y \util(y, a)  \Pbx{y}$,
  where  $\Pbx{y} \defn \int_\Param P_\param(y \mid x) \dd \bel (\param)$
  is the marginal distribution over outcomes conditional on the observations according to the DM's belief $\bel$.
  \label{def:Bayes-rule}
\end{definition}

The Bayes-optimal decision rule does not directly depend on the
sensitive variable $z$.  We are interested in operating over multiple
time periods. At time $t$, the DM observes $x_t$ and makes a
decision $a_t$ using policy $\pol_t$ and obtains some instantaneous
payoff $U_t = u(y_t, a_t)$ and fairness violation $F_t$.  As always,
the DM's utility is the sum of instantaneous payoffs over time, $U
\defn \sum_{t=1}^T u(y_t, a_t)$ and she is interested in finding a policy
maximising $U$ in expectation. Note the decision problem and its variables stay unchanged over time. 

Although the Bayes-optimal decision rule brings the highest expected reward to 
the DM, it may be unfair.  
In the sequel, we will define analogs of the \emph{balance} notion of fairness in
terms of decision rules $\pol$, and investigate appropriate decision rules, that 
possibly result in randomized policies.  In particular, we shall consider a utility function that combines the DM's utility with the societal benefit due to fairness, and search for the Bayes-optimal decision rules with respect to this new,
combined utility.

In particular, we define a Bayesian analogue of the maximization problem defined in~\eqref{eq:oracle-decision-problem-weights}:
\begin{align}
  &\max_\pol  (1 - \lambda) \E^\pol_\bel \util - \lambda \E^\pol_\bel \fair
  \nonumber \\
  =&
  \max_\pol  \int_\Param \left[(1 - \lambda) \E^\pol_\param \util - \lambda \E^\pol_\param \fair\right] \dd\bel(\param).
  \label{eq:bayes-decision-problem-weights}
\end{align}
To make this concrete, in the sequel we shall define the appropriate Bayesian version of the fairness-as-balance condition.


\subsection{Bayesian Balance}
\label{sec:bayesian-balance}

In the Bayesian setting, we would like our decisions to take into
account the impact on all possible models.
While perfect balance is generally achievable, it turns out that
sometimes only
a trivial decision rule can satisfy it
in a setting with model
uncertainty (where balance
must hold exactly, for all possible model
parameters).
%
%
\begin{theorem}
  A trivial decision rule of the form $\pol(a \mid x) = p_a$ can always satisfy balance for a Bayesian decision problem. However, it may be the only balanced decision rule, even when a non-trivial balanced policy can be found for every possible $\param \in \Param$.
  \label{lem:trivial-balance}
\end{theorem}

The proof, as well as  an example illustrating this
result, are in the supplementary materials. For this reason, we consider the the $p$-norm of the deviation from fairness with respect to our belief $\bel$:
\begin{definition}[Bayesian Balance]
  We say that a decision rule $\pol(\cdot)$ is $(\alpha,p)$-Bayes-balanced with respect to $\bel$ if:
  \begin{align}
    f(\pol) &\defn 
    \int_{\Param }
    \sum_{a, y, z}
    \biggl |\sum_x \pol(a | x)
    [P_\param(x, z | y)
    \nonumber \\
    &- P_\param(x | y) P_\param(z | y) ] 
    \biggr |^p
    \dd \bel(\param) 
    \leq \alpha^p.
      \label{eq:balanced-bayes-norm}
  \end{align}
\end{definition}

This definition captures the expected deviation from balance of  policy $\pol$, for a Bayesian DM under their belief $\bel$. It measures the deviation of the specific policy $\pol$ from perfect balance with respect to each possible parameter $\param$, and weighs it according to the probability of that model. This
provides a graceful trade-off between achieving near-balance in the most likely models, while avoiding extreme unfairness in less likely ones.

Why not use a single point estimate for the model, instead of the full Bayesian approach? This would entail simply measuring balance (and utility) with respect to the marginal model
$\Pr_\bel \defn \int_\Param P_\param \dd \bel(\param)$.
\begin{definition}[Marginal balance]
A decision rule $\pol(\cdot)$ is $(\alpha,p)$-marginal-Balanced if  $\forall a, y,  z$:
\begin{align}
  \sum_{a, y, z}  \biggl |\sum_x \pol(a | x)
    \left[\Pr_\bel(x, z | y)
    - \Pr_\bel(x | y) \Pr_\bel(z | y) \right]\biggl|^p &\leq \alpha.
  \label{eq:marginal-balance}
\end{align}
\label{def:marginal-balance}
\end{definition}

One problem with this, which we will see in our experimental results,
is that the DM would assume that the marginal model is the correct one, and
may be very unfair towards other high-probability models.

Still, both balance conditions can provide a bound on balance with respect
to the true model.
For this, denote the true underlying model as $\theta^*$, and define the $(\epsilon,\delta)$-accurate belief.
\begin{definition}
  We call $\bel(\param)$ an $(\epsilon,\delta)$-accurate belief with respect to the true model $\param^* \in \Param$, if with $\bel$-probability at least $1-\delta$, $\forall x,y,z$: $$|P_{\theta}(x|y,z) -P_{\theta^*}(x|y,z)| \leq \epsilon,~| P_{\theta}(x|y)-P_{\theta^*}(x|y)| \leq \epsilon,$$  i.e. that set $\Param_\epsilon$ for which the above conditions hold has measure $\bel(\Param_\epsilon) \geq 1 - \delta$.
\end{definition}

Under some conditions the balance achieved through either definition
provides an approximation to balance
under the true model, as shown by the following theorem.
\begin{theorem}\label{noise:model}
  If a decision rule satisfies either $(\alpha,1)$-marginal-balance or $(\alpha,1)$-Bayes-balance for $\bel$ or both, and $\bel$ is $(\epsilon,\delta)$-accurate, then the resulting decision rule is a $$(\alpha+2|\Act| \cdot |\Sns| \cdot |\Out|\cdot(\epsilon+\delta),1) \text{-balanced}$$ decision rule w.r.t. the true model $\theta^*$.
\end{theorem}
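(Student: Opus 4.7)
The plan is a triangle-inequality argument: I would bound the per-$(a,y,z)$ contribution to $f_{\theta^*}(\pol)$ by the corresponding contribution at a ``proxy'' model (a generic $\param$ drawn from $\bel$ for Bayes-balance, or the marginal $\Pr_\bel$ for marginal-balance), pick up the $\alpha$ from the hypothesis, and absorb the remaining discrepancy using $(\epsilon,\delta)$-accuracy. Let
$$g_\param(a,y,z) \defn \Bigl|\sum_x \pol(a\mid x)\bigl[P_\param(x,z\mid y) - P_\param(x\mid y)P_\param(z\mid y)\bigr]\Bigr|,$$
so $f_\param(\pol) = \sum_{a,y,z} g_\param(a,y,z)$, and use the rewriting $P(x,z\mid y) - P(x\mid y)P(z\mid y) = P(z\mid y)[P(x\mid y,z) - P(x\mid y)]$ to express $g_\param$ in terms of the quantities actually controlled by the accuracy assumption.

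For the Bayes-balance case, I would split the trivial identity $f_{\theta^*}(\pol) = \int f_{\theta^*}(\pol)\,\dd\bel(\param)$ across $\Param_\epsilon$ and its complement. On $\Param_\epsilon$, the reverse triangle inequality $\bigl|\,|\tilde g_\param| - |\tilde g_{\theta^*}|\,\bigr| \le |\tilde g_\param - \tilde g_{\theta^*}|$ together with the $\epsilon$-closeness of the conditional probabilities yields $|g_\param(a,y,z) - g_{\theta^*}(a,y,z)| \leq 2\epsilon$ per coordinate, so
$$\int_{\Param_\epsilon} f_{\theta^*}\,\dd\bel \leq \int_{\Param_\epsilon} f_\param\,\dd\bel + 2|\Act|\cdot|\Sns|\cdot|\Out|\cdot\epsilon \leq \alpha + 2|\Act|\cdot|\Sns|\cdot|\Out|\cdot\epsilon.$$
On $\Param \setminus \Param_\epsilon$, the trivial per-coordinate bound $g_{\theta^*}(a,y,z) \leq 2$ (each bracketed probability difference lies in $[-1,1]$) together with $\bel(\Param \setminus \Param_\epsilon) \leq \delta$ contributes at most $2|\Act|\cdot|\Sns|\cdot|\Out|\cdot\delta$; summing gives the claimed bound.

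For the marginal-balance case, I would first verify that $\Pr_\bel$ is itself $(\epsilon+\delta)$-close to $P_{\theta^*}$ on the relevant conditionals: splitting $\int P_\param\,\dd\bel$ across $\Param_\epsilon$ yields $|\Pr_\bel(x\mid y) - P_{\theta^*}(x\mid y)| \leq (1-\delta)\epsilon + \delta \leq \epsilon+\delta$, and similarly for $P(x\mid y,z)$. The same per-$(a,y,z)$ comparison then swaps $\Pr_\bel$ for $P_{\theta^*}$ at additive cost $2|\Act|\cdot|\Sns|\cdot|\Out|(\epsilon+\delta)$, on top of the $\alpha$ from marginal balance. The main obstacle is the bookkeeping around $P_\param(z\mid y)$, since the accuracy definition literally bounds only $P(x\mid y,z)$ and $P(x\mid y)$; factoring the violation through $P(z\mid y)[P(x\mid y,z) - P(x\mid y)]$ places the unbounded factor $P(z\mid y)\in[0,1]$ only where it multiplies an already-$O(\epsilon)$ quantity, and the leftover cross terms involving $|P_\param(z\mid y) - P_{\theta^*}(z\mid y)|$ can be controlled using the linear identity $P(x\mid y) = \sum_z P(x\mid y,z)P(z\mid y)$ when summed over $z$. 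Everything else reduces to routine triangle inequalities and summation.
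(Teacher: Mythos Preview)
Your proposal follows essentially the same route as the paper: write $g_{\theta^*}$ as the integral $\int g_{\theta^*}\,\dd\bel$, split the domain into the high-probability set $\Param_\epsilon$ (where accuracy lets you swap $\theta^*$ for $\theta$ at per-$(a,y,z)$ cost $2\epsilon$) and its complement (where the crude bound times $\delta$ suffices), then sum over $a,y,z$. You are somewhat more explicit than the paper---invoking the reverse triangle inequality, flagging the $P_\param(z\mid y)$ bookkeeping, and actually sketching the marginal case the paper only says ``resembles similarities''---but the decomposition and the arithmetic are the same.
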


This theorem says that if our belief $\bel$ is concentrated around the
true model $P_{\theta^*}$, and our decision rule is fair with respect
to either definition, then it is also fair with respect to the true
model.


\subsection{The Sequential setting}
\label{sec:sequential}
\rev{We can extend the approach to a sequential setting,
where the information learned depends on the action.
For example, if we grant a loan application, we will only later discover if the loan is going to be paid off. This will affect our future decisions.
Analogous to other sequential decision making problems such as Markov decision processes\citep{Puterman:MDP:1994},} we need to solve the following optimization problem over a
time horizon $T$:
\begin{align}
  \max_\pol \E_{\bel_1} \left[\sum_{t=1}^T (1 - \lambda) U_t - \lambda F_t\right],
\end{align}
where $\pol$ now must explicitly map future beliefs $\bel_t$ to probabilities over actions.
 If the data that the DM obtains depends on her
decisions $a_t$, then she must consider adaptive policies, as the next
belief depends on what the data obtained by the policy was.

We can reformulate the maximization problem so as to explicitly include the future changes in belief:
\begin{align}
  \val^*(\bel_t) &\defn \sup_{\pol_t} \E^{\pol_t}_{\bel_t} \left[(1 - \lambda) U_t - \lambda F_t\right]
          \nonumber \\
          &+ \sum_{\bel_{t+1}} V^*(\bel_{t+1}) \Pr_{\bel_t}^{\pol_t} (\bel_{t+1}),
\end{align}
under the mild assumption that the set of reachable next beliefs is
finite (easily satisfied when the set of outcomes is finite). This formulation is not different from standard MDP formulation (e.g., the 
reinforcement learning settings) that features the trade-offs between \emph{exploration} (obtaining new knowledges) and \emph{exploitation} (maximizing utilities). We know in these settings a myopic policy will lead to sub-optimal solutions. 

However, just as in the bandits case \citep[c.f.][]{duff2002olc}), the
above computation is intractable, as the policy space is exponential in
$T$. For this reason, in this paper we only consider
{\em myopic policies}
that select a policy (and decision) that is optimal for the current
step $t$,
trading utility and fairness
as well as the value of `single-step' information.
%
%
A specific instance of this type of sequential version of the problem
is experimentally studied in Section {\em Sequential allocation}.


\section{Algorithms} 
\label{sec:optim-balanc-bayes}
 
The algorithms we employ in this paper are based on gradient descent. We compare the full Bayesian framework with the simpler approach of assuming that the marginal model is the true one. In particular, for the Bayesian framework, we directly optimize~\eqref{eq:bayes-decision-problem-weights}. Using the marginal simplification, we maximize~\eqref{eq:oracle-decision-problem-weights} with respect to the marginal model $\Pr_\bel$.

\subsection{Balance gradient descent}
\label{sec:credible-fairness}

Again, as in the Bayesian setting, we have a family of models
$\set{P_\param}$ with a corresponding subjective distribution
$\bel(\param)$. In order to derive algorithms, we shall focus on the quantity:
\begin{equation}
  C(\pol, \param)
   \defn
   \sum_{y,z} \big\|\sum_x \pol(a \mid x)  \Delta_\param(x,y,z)\big\|_p,
\end{equation}
to be the deviation from balance for decision rule $\pol$ under
parameter $\param$, where
\begin{equation}
  \Delta_\param(x,y,z) \defn P_\param(x, z \mid y)
  - P_\param(x \mid y) P_\param(z \mid y).
\end{equation}
Then the Bayesian balance of the policy is $f(\pol) = \int_\Param C(\pol,  \param) \dd \bel(\param)$.

In order to find a rule trading off utility for balance, we can
maximize a convex combination of the expected utility and
deviation specified in \eqref{eq:bayes-decision-problem-weights}. In particular, we can look for a parametrized rule $\pol_w$
solving the following unconstrained maximization problem.
\begin{align}
  \max_{\pol_w} &
           \int_\Param
           \val_\param(\pol_w)
           \dd \bel(\param),
\nonumber \\
&
  \val_\param(\pol_w) \defn 
                (1 - \lambda) \E_\param^{\pol_w} \util 
                - \lambda C(\pol_w, \param)
  \label{eq:penalty}
\end{align}
To perform this maximization we use parametrized policies and
 stochastic gradient descent.  In particular, for a finite set
$\CX$ and $\CY$, the policies can be defined in terms of parameters
$w_{xa} = \pol(a \mid x)$. Then we can perform stochastic gradient
descent as detailed in Section \emph{Gradient calculations for optimal balance decision} of supplementary materials, by sampling $\param \sim \bel$ and calculating the gradient for each sampled $\param$.

For the \emph{marginal} decision rule, we employ the same approach, but instead of sampling the parameters from the posterior, we use the parameters of the marginal model. The approach is otherwise identical.

%

%



\setlength \fwidth {0.2\textwidth}

\begin{figure*}  
\centering
   \subfloat[$\lambda=0$]{
%

\begin{tikzpicture}

\begin{axis}[%
width=0.951\fwidth,
height=0.75\fwidth,
at={(0\fwidth,0\fwidth)},
scale only axis,
xmode=log,
xmin=1,
xmax=100,
xminorticks=true,
xlabel style={font=\color{white!15!black}},
xlabel={t},
ymin=5,
ymax=7,
ylabel style={font=\color{white!15!black}},
ylabel={V},
axis background/.style={fill=white}
]
\addplot [color=mycolor1, line width=2.0pt, forget plot]
  table[row sep=crcr]{%
1	5.43561523059217\\
2	5.90370595362012\\
3	5.86193877425643\\
4	6.1974251447707\\
5	6.37538745144895\\
6	6.29084098710727\\
7	6.37533818756081\\
8	6.50213010808347\\
9	6.5991724386949\\
10	6.52046523295858\\
11	6.72543938736623\\
12	6.79086221447693\\
13	6.73720035131321\\
14	6.73338708212267\\
15	6.72543680895541\\
16	6.69340595276163\\
17	6.7374401920428\\
18	6.76459636329051\\
19	6.81917107124367\\
20	6.81961739074537\\
21	6.80848937741292\\
22	6.76496273549776\\
23	6.81943224558462\\
24	6.82707956663492\\
25	6.82513922838762\\
26	6.80872080097762\\
27	6.8194279261825\\
28	6.8189260277632\\
29	6.80836851035401\\
30	6.80853114441057\\
31	6.81931954445528\\
32	6.73856115284411\\
33	6.73816264740434\\
34	6.73350236577814\\
35	6.70687112292105\\
36	6.6785416628419\\
37	6.67686803452494\\
38	6.73847232580075\\
39	6.81942547936638\\
40	6.82688492791769\\
41	6.82666694620967\\
42	6.82685885290022\\
43	6.82704019063763\\
44	6.82689924230517\\
45	6.82677823960367\\
46	6.8270597587393\\
47	6.8194560097514\\
48	6.82689843014005\\
49	6.82701602901863\\
50	6.82681169930566\\
51	6.81949797532045\\
52	6.82669354234299\\
53	6.81943329511497\\
54	6.82651491512469\\
55	6.82704770792639\\
56	6.82679743590289\\
57	6.82658758155496\\
58	6.82669828218253\\
59	6.82660948782845\\
60	6.82669798274291\\
61	6.8265894448791\\
62	6.82677142197074\\
63	6.82681491374407\\
64	6.82666114793132\\
65	6.80855816288338\\
66	6.81943481045244\\
67	6.82660396672482\\
68	6.82386355702123\\
69	6.81947471789097\\
70	6.82700074175287\\
71	6.82693449357647\\
72	6.82709850214994\\
73	6.82660499877464\\
74	6.82697649873285\\
75	6.82670593695921\\
76	6.82675415174255\\
77	6.82687238452097\\
78	6.82695391513762\\
79	6.82698434483159\\
80	6.82686631929662\\
81	6.82688791087935\\
82	6.82646811164595\\
83	6.82644252014829\\
84	6.82665183718036\\
85	6.82669758685249\\
86	6.82676665652696\\
87	6.82695751973057\\
88	6.82658946086894\\
89	6.82655885737069\\
90	6.82664771118609\\
91	6.82677188302957\\
92	6.82669723134228\\
93	6.82649131899508\\
94	6.82694362296807\\
95	6.82701100800417\\
96	6.82689276186491\\
97	6.82697942973237\\
98	6.8265959260873\\
99	6.82685566499548\\
100	6.82661794251553\\
};
\addplot [color=mycolor2, dashed, line width=3.0pt, forget plot]
  table[row sep=crcr]{%
1	5.41383459952623\\
2	6.03977740818528\\
3	6.34445896962323\\
4	6.36360081485984\\
5	6.5327140246455\\
6	6.54312011538435\\
7	6.60180812138142\\
8	6.73373741924754\\
9	6.73566894837812\\
10	6.79427286043087\\
11	6.79651547742057\\
12	6.81745557776205\\
13	6.82361089403465\\
14	6.82453008513155\\
15	6.81832088322606\\
16	6.78744690174988\\
17	6.8245307167606\\
18	6.82360223044389\\
19	6.82320123869577\\
20	6.82664800772996\\
21	6.82680811620926\\
22	6.82647925470442\\
23	6.82692086547104\\
24	6.8269265759928\\
25	6.82693425257449\\
26	6.8268229060682\\
27	6.82700552417735\\
28	6.82667175358748\\
29	6.82668179704755\\
30	6.82696045898621\\
31	6.82660833511297\\
32	6.82680589726841\\
33	6.82671589421294\\
34	6.82682294359754\\
35	6.82701707308604\\
36	6.82650493640409\\
37	6.82699220517708\\
38	6.82668021066387\\
39	6.82672209879424\\
40	6.82681624339904\\
41	6.82673440179816\\
42	6.82689211021934\\
43	6.82677720152432\\
44	6.82694542985608\\
45	6.82666056532852\\
46	6.82671478552557\\
47	6.82693547661731\\
48	6.82672478318047\\
49	6.82677490664466\\
50	6.82656216462366\\
51	6.82648634438907\\
52	6.82704568401014\\
53	6.826862274337\\
54	6.82707378292588\\
55	6.82701064453263\\
56	6.82655872230876\\
57	6.82680284995162\\
58	6.82675484205814\\
59	6.82669898677491\\
60	6.8269587402534\\
61	6.82659491807925\\
62	6.8270850859566\\
63	6.82673295666194\\
64	6.82699550615579\\
65	6.82685223576769\\
66	6.82671348441515\\
67	6.82673202844262\\
68	6.82712820153405\\
69	6.82689184429809\\
70	6.82683272593571\\
71	6.82648201507614\\
72	6.82674866309257\\
73	6.82671747591638\\
74	6.82699024215261\\
75	6.82717941910823\\
76	6.82694493886605\\
77	6.8267239335274\\
78	6.82696730470131\\
79	6.82694340122485\\
80	6.82682976358288\\
81	6.82660119141053\\
82	6.82686412764447\\
83	6.82699084765003\\
84	6.8267722735916\\
85	6.82674429780571\\
86	6.82694302876559\\
87	6.82675714405635\\
88	6.82696031751509\\
89	6.82689255300199\\
90	6.82702023326717\\
91	6.82696488326549\\
92	6.82660850972417\\
93	6.8268004525055\\
94	6.82679611886495\\
95	6.82682594191007\\
96	6.82686735298234\\
97	6.82672601577022\\
98	6.82680403091386\\
99	6.82704486981606\\
100	6.8267221923614\\
};
\end{axis}
\end{tikzpicture}%
  }
  \subfloat[$\lambda=0.25$]{
%
%
%
\begin{tikzpicture}

\begin{axis}[%
width=0.951\fwidth,
height=0.75\fwidth,
at={(0\fwidth,0\fwidth)},
scale only axis,
xmode=log,
xmin=1,
xmax=100,
xminorticks=true,
xlabel style={font=\color{white!15!black}},
xlabel={t},
ymin=2.6,
ymax=4,
axis background/.style={fill=white}
]
\addplot [color=mycolor1, line width=2.0pt, forget plot]
  table[row sep=crcr]{%
1	3.1225306321747\\
2	3.26014310612664\\
3	3.29418445127395\\
4	3.47879313017756\\
5	3.35564611143873\\
6	3.49131322269369\\
7	3.68174585577693\\
8	3.78562307885069\\
9	3.69494014047982\\
10	3.72674568562776\\
11	3.787407280996\\
12	3.82404009032336\\
13	3.86926786613041\\
14	3.88002640718584\\
15	3.86337187268527\\
16	3.86014177831422\\
17	3.85061996870957\\
18	3.8684063856261\\
19	3.87904334272389\\
20	3.88104037446538\\
21	3.87143700762788\\
22	3.87090910352101\\
23	3.88004509693712\\
24	3.880832160368\\
25	3.87976511107898\\
26	3.87897087972067\\
27	3.8807014825704\\
28	3.88019719087416\\
29	3.8812304870751\\
30	3.88052311027887\\
31	3.8798427816137\\
32	3.87861235876383\\
33	3.8807233122881\\
34	3.87984089474894\\
35	3.8802735035861\\
36	3.87754941698894\\
37	3.87837752879586\\
38	3.88046084969416\\
39	3.87960762987224\\
40	3.87983157966137\\
41	3.88123912305534\\
42	3.87944935826282\\
43	3.87293414584046\\
44	3.87157858607664\\
45	3.87980874478316\\
46	3.87937443446057\\
47	3.87730991191998\\
48	3.87945060825207\\
49	3.88090895100173\\
50	3.8785734519277\\
51	3.87798151347719\\
52	3.88050434508109\\
53	3.8814939766162\\
54	3.8798130196323\\
55	3.87879504351125\\
56	3.87844452517725\\
57	3.87941766592103\\
58	3.87909631541187\\
59	3.88060278968173\\
60	3.87866496543081\\
61	3.87912634146572\\
62	3.88009790597964\\
63	3.88132634873409\\
64	3.87694614361366\\
65	3.87797524969697\\
66	3.87106233317266\\
67	3.87673382579365\\
68	3.88016311938393\\
69	3.87988042738732\\
70	3.88137871848826\\
71	3.88020439983991\\
72	3.88164305705249\\
73	3.88057374163073\\
74	3.87944122973096\\
75	3.87946057691413\\
76	3.87083378953187\\
77	3.87990659133386\\
78	3.87979161675014\\
79	3.88031060736714\\
80	3.87839586303847\\
81	3.87952551007611\\
82	3.87951969538205\\
83	3.88139125450766\\
84	3.87973732140646\\
85	3.87940163393449\\
86	3.88066178195297\\
87	3.88103006714394\\
88	3.88066953653238\\
89	3.88082078319006\\
90	3.87472749688655\\
91	3.88008082069647\\
92	3.87967361812959\\
93	3.88090413136021\\
94	3.88056106501739\\
95	3.88080328962013\\
96	3.87934628830826\\
97	3.8808920865691\\
98	3.88011750144638\\
99	3.8777921466677\\
100	3.87849475700746\\
};
\addplot [color=mycolor2, dashed, line width=3.0pt, forget plot]
  table[row sep=crcr]{%
1	2.99143665012784\\
2	2.79898361574143\\
3	3.15512594897397\\
4	2.79576701999414\\
5	3.18726210134358\\
6	2.92375990338545\\
7	3.32483013400853\\
8	3.53112814539514\\
9	3.33965991778691\\
10	3.45131607217245\\
11	3.94508976123525\\
12	3.836863715312\\
13	3.87355633606793\\
14	3.8719470750431\\
15	3.86222833932847\\
16	3.88206495550576\\
17	3.88007478789528\\
18	3.87963515750772\\
19	3.8797485147608\\
20	3.87898372292093\\
21	3.88009598496287\\
22	3.87772440262873\\
23	3.8806354564985\\
24	3.87573025540419\\
25	3.87826831420734\\
26	3.87985855809266\\
27	3.87884319804875\\
28	3.88058528643557\\
29	3.88052456475016\\
30	3.87942264859486\\
31	3.87975333957797\\
32	3.87992375478627\\
33	3.88051357656132\\
34	3.87927059405848\\
35	3.87671387834963\\
36	3.87969164478351\\
37	3.88082583721043\\
38	3.87985772513412\\
39	3.87999217653948\\
40	3.87927306298158\\
41	3.8799357327431\\
42	3.88090400097951\\
43	3.88049136655767\\
44	3.87672782145846\\
45	3.87852424319418\\
46	3.88034765801848\\
47	3.8802042495979\\
48	3.87329348986675\\
49	3.8800517467665\\
50	3.87877612818924\\
51	3.8797014718911\\
52	3.8793884521176\\
53	3.88017277417863\\
54	3.879654766403\\
55	3.87686141534805\\
56	3.87973454691705\\
57	3.88048999589008\\
58	3.88008419478083\\
59	3.87982220574849\\
60	3.87973992874786\\
61	3.88016494762213\\
62	3.87906769114994\\
63	3.87848579166382\\
64	3.88055051614756\\
65	3.87960129703661\\
66	3.87930770439852\\
67	3.88088071418367\\
68	3.87869130188828\\
69	3.87877227892613\\
70	3.88050790929068\\
71	3.87751667405787\\
72	3.87981779239174\\
73	3.87826469476027\\
74	3.87916773593815\\
75	3.88076423127758\\
76	3.87933903266768\\
77	3.87926593704359\\
78	3.87972589195196\\
79	3.87951586734992\\
80	3.87668671328687\\
81	3.87551969840022\\
82	3.88067178473997\\
83	3.87959289592913\\
84	3.87939055326519\\
85	3.87973358807666\\
86	3.88055795763589\\
87	3.87942793781209\\
88	3.88155441003813\\
89	3.88059074052121\\
90	3.88063558027451\\
91	3.87983587943771\\
92	3.88062276375354\\
93	3.87944892503933\\
94	3.87978727399675\\
95	3.87842192127326\\
96	3.88008862130387\\
97	3.87957329653882\\
98	3.88065708183656\\
99	3.88163646802876\\
100	3.880945903385\\
};
\end{axis}
\end{tikzpicture}%
  }
  \subfloat[$\lambda=0.5$]{
%
%
%
\begin{tikzpicture}

\begin{axis}[%
width=0.951\fwidth,
height=0.75\fwidth,
at={(0\fwidth,0\fwidth)},
scale only axis,
xmode=log,
xmin=1,
xmax=100,
xminorticks=true,
xlabel style={font=\color{white!15!black}},
xlabel={t},
ymin=-0.5,
ymax=2,
axis background/.style={fill=white}
]
\addplot [color=mycolor1, line width=2.0pt, forget plot]
  table[row sep=crcr]{%
1	1.49482275002729\\
2	1.55189801874307\\
3	1.83882282524574\\
4	1.52676135329926\\
5	1.52250894001475\\
6	1.50531193637208\\
7	1.49370990313388\\
8	1.37096218428233\\
9	1.69252155043745\\
10	1.55460080897603\\
11	1.77758287824582\\
12	1.78571302890306\\
13	1.54890376428204\\
14	1.71169980084041\\
15	1.76677542583961\\
16	1.79953306466444\\
17	1.82405896323217\\
18	1.80562258987096\\
19	1.80346372081525\\
20	1.80490522019214\\
21	1.80251864349951\\
22	1.7995840760207\\
23	1.79674085586432\\
24	1.806276571382\\
25	1.81194647146786\\
26	1.82251291652957\\
27	1.80498094116489\\
28	1.8272796312303\\
29	1.80961959279471\\
30	1.81460137914183\\
31	1.81914916541425\\
32	1.81392379924514\\
33	1.8175432722861\\
34	1.8201478669015\\
35	1.81322273333226\\
36	1.81866634019849\\
37	1.81584318931693\\
38	1.82674652739832\\
39	1.80998697957214\\
40	1.81282450380663\\
41	1.82359620006886\\
42	1.82111823610408\\
43	1.82290286766311\\
44	1.81848658484606\\
45	1.82597066631328\\
46	1.81211306569529\\
47	1.82229032524054\\
48	1.81015879277648\\
49	1.82773686379265\\
50	1.81919753307513\\
51	1.81588515919281\\
52	1.81601964781797\\
53	1.81749265215332\\
54	1.82007077847615\\
55	1.82443336078475\\
56	1.82566220552982\\
57	1.82280628482688\\
58	1.83010292007432\\
59	1.82074163362279\\
60	1.81866781943321\\
61	1.82007236165932\\
62	1.82851746138336\\
63	1.81651773830087\\
64	1.81292309805879\\
65	1.81097040505865\\
66	1.82112946548917\\
67	1.81770401450192\\
68	1.82937997687998\\
69	1.81678649459996\\
70	1.8165514209844\\
71	1.82296124397186\\
72	1.81862636466603\\
73	1.82650919211409\\
74	1.81607019489911\\
75	1.81867067516648\\
76	1.81455612740102\\
77	1.81867226688053\\
78	1.81606266049731\\
79	1.81975419210626\\
80	1.8199782675556\\
81	1.82408715716298\\
82	1.81743782363045\\
83	1.81223449479776\\
84	1.82717780566641\\
85	1.81912057558481\\
86	1.8172207355233\\
87	1.82568116472211\\
88	1.81790553467899\\
89	1.81656810711344\\
90	1.82428308742398\\
91	1.8302964899591\\
92	1.8235600515568\\
93	1.82461711494186\\
94	1.81752760149805\\
95	1.82427771424488\\
96	1.81013748370312\\
97	1.81434628986094\\
98	1.81752237732997\\
99	1.82299730857796\\
100	1.82163132633734\\
};
\addplot [color=mycolor2, dashed, line width=3.0pt, forget plot]
  table[row sep=crcr]{%
1	0.200825888532816\\
2	-0.18008397338503\\
3	-0.223529206493137\\
4	-0.0753925034742253\\
5	0.929582939469821\\
6	0.676750152052953\\
7	1.02527985927929\\
8	1.1907138620247\\
9	1.2237426133549\\
10	1.06210547603728\\
11	1.39678777894161\\
12	1.40974585704355\\
13	1.46876264509228\\
14	1.28575761797557\\
15	1.14882024066358\\
16	1.46377660532148\\
17	1.56963239593687\\
18	1.66907185300346\\
19	1.72286071590441\\
20	1.71545125972188\\
21	1.72076857522247\\
22	1.70977648241448\\
23	1.71421934606972\\
24	1.81961918396706\\
25	1.82486972192735\\
26	1.82205160751454\\
27	1.82970447855445\\
28	1.81975762676593\\
29	1.82608890642267\\
30	1.82790882921885\\
31	1.82305138404279\\
32	1.8251774473024\\
33	1.82180970765009\\
34	1.82535183629553\\
35	1.81491795840182\\
36	1.81139862864023\\
37	1.81469578968229\\
38	1.82336600035984\\
39	1.81964088422403\\
40	1.81156044832757\\
41	1.810193776878\\
42	1.81604224661703\\
43	1.81250494134455\\
44	1.81608869175998\\
45	1.8282763978939\\
46	1.81991287660889\\
47	1.81582197002685\\
48	1.8216328147671\\
49	1.81509662525265\\
50	1.81967603345366\\
51	1.82217581494403\\
52	1.81614879299987\\
53	1.82145119528924\\
54	1.82274949930973\\
55	1.81087216488805\\
56	1.82123830998816\\
57	1.82383525525647\\
58	1.82396221554174\\
59	1.81879833466988\\
60	1.82491278694981\\
61	1.81620584157348\\
62	1.81404426119364\\
63	1.81664144185502\\
64	1.82267074046262\\
65	1.81879699133598\\
66	1.82336546099748\\
67	1.82165386661804\\
68	1.82023091505951\\
69	1.81012488462726\\
70	1.81501540952032\\
71	1.81181672885677\\
72	1.81254029915353\\
73	1.81600596980627\\
74	1.82931752912828\\
75	1.82015694163513\\
76	1.82358452377752\\
77	1.82866674935601\\
78	1.81692298347132\\
79	1.81944525961421\\
80	1.81664892791178\\
81	1.81442168852228\\
82	1.81792302272156\\
83	1.81625638592239\\
84	1.81487418368066\\
85	1.81950552074845\\
86	1.82379573251609\\
87	1.82325367665394\\
88	1.81973949148139\\
89	1.82191418084967\\
90	1.82040245967622\\
91	1.81445589110821\\
92	1.8240198622246\\
93	1.82666876362131\\
94	1.82016060196285\\
95	1.81832870479655\\
96	1.81289773601116\\
97	1.81923026308184\\
98	1.81644826216325\\
99	1.8285673621865\\
100	1.82098487160116\\
};
\end{axis}
\end{tikzpicture}%
  }
  \\
  \subfloat[$\lambda=0.75$]{
%
%
%
\begin{tikzpicture}

\begin{axis}[%
width=0.951\fwidth,
height=0.75\fwidth,
at={(0\fwidth,0\fwidth)},
scale only axis,
xmode=log,
xmin=1,
xmax=100,
xminorticks=true,
xlabel style={font=\color{white!15!black}},
xlabel={t},
ymin=-2.5,
ymax=1,
ylabel style={font=\color{white!15!black}},
ylabel={$V$},
axis background/.style={fill=white}
]
\addplot [color=mycolor1, line width=2.0pt, forget plot]
  table[row sep=crcr]{%
1	0.647144134432733\\
2	0.354850976127959\\
3	0.0168639129456241\\
4	0.33766832503612\\
5	-0.0784205215215747\\
6	0.40982240965253\\
7	0.363433709756422\\
8	0.487537525939838\\
9	0.431152195949632\\
10	0.466753420623816\\
11	0.466046683781554\\
12	0.487639867766271\\
13	0.49928611677378\\
14	0.490004904071104\\
15	0.498703670535377\\
16	0.482282069479057\\
17	0.486215750602667\\
18	0.51200210477187\\
19	0.519158867275507\\
20	0.511139624471004\\
21	0.506337079832673\\
22	0.519756013471365\\
23	0.512478132539362\\
24	0.50561853158497\\
25	0.50367451864099\\
26	0.505789589179426\\
27	0.498493731862348\\
28	0.513999762823966\\
29	0.509031336730771\\
30	0.502243676582301\\
31	0.506433275705123\\
32	0.521016485542623\\
33	0.526842976746132\\
34	0.508251384089969\\
35	0.500493352905132\\
36	0.503129210767534\\
37	0.503757032056775\\
38	0.515181885197508\\
39	0.506533240100524\\
40	0.496901036088606\\
41	0.504225507189952\\
42	0.514208053813117\\
43	0.504088815994414\\
44	0.518542650637304\\
45	0.517960786175664\\
46	0.498037687784748\\
47	0.503132758322316\\
48	0.504029263192454\\
49	0.503828931380531\\
50	0.500842613994183\\
51	0.50501721061598\\
52	0.500134221801918\\
53	0.505075687615254\\
54	0.507469697299228\\
55	0.496716583067885\\
56	0.509488749928148\\
57	0.511840097569015\\
58	0.511998689972054\\
59	0.503126545909836\\
60	0.495692655782744\\
61	0.506223884739539\\
62	0.513183980970198\\
63	0.508301694112646\\
64	0.504416408744524\\
65	0.511060119454027\\
66	0.506322788389227\\
67	0.514699572608527\\
68	0.51261061903061\\
69	0.503902880515783\\
70	0.498133699719478\\
71	0.506180412178859\\
72	0.515481595602637\\
73	0.505243395812402\\
74	0.505417707455431\\
75	0.525999253554647\\
76	0.514900594260543\\
77	0.513133569073988\\
78	0.514813914295146\\
79	0.5154428029658\\
80	0.502309066900648\\
81	0.514003087192598\\
82	0.502056969897616\\
83	0.513961366209833\\
84	0.508132541201984\\
85	0.50951474504085\\
86	0.502497616777838\\
87	0.516745077782601\\
88	0.509533550568892\\
89	0.509441419800372\\
90	0.511621119905325\\
91	0.499281497010895\\
92	0.506791977356655\\
93	0.510924531333717\\
94	0.507140657929554\\
95	0.507758822160545\\
96	0.518941046352285\\
97	0.503241690881632\\
98	0.526932315786321\\
99	0.509588662131115\\
100	0.496698046513878\\
};
\addplot [color=mycolor2, dashed, line width=3.0pt, forget plot]
  table[row sep=crcr]{%
1	-2.10972868392334\\
2	-2.19333624498975\\
3	-1.91686278456048\\
4	-2.18778897925519\\
5	-0.717836941991068\\
6	-1.00519431495389\\
7	-0.31865696501745\\
8	-0.214449450673659\\
9	0.0620315037330776\\
10	-0.0633437599414009\\
11	0.240439293753107\\
12	0.20117040629708\\
13	0.0852074084356361\\
14	0.374007133800986\\
15	0.523308465766127\\
16	0.53039286708511\\
17	0.551012210639785\\
18	0.538278671077171\\
19	0.544743920266012\\
20	0.515266804624304\\
21	0.556430402855409\\
22	0.552565315791179\\
23	0.313887880702347\\
24	0.0331619757448477\\
25	0.517352407502555\\
26	0.540052093575284\\
27	0.28886150661496\\
28	0.600047358828579\\
29	0.32982671469552\\
30	0.544235034067292\\
31	0.515184747722274\\
32	0.507180641676155\\
33	0.505475646263916\\
34	0.508548928549737\\
35	0.513534075937625\\
36	0.507474231127796\\
37	0.499904861246735\\
38	0.510236131090415\\
39	0.514855654446097\\
40	0.505060205862894\\
41	0.500826433832186\\
42	0.502292978741724\\
43	0.520676097163094\\
44	0.51597318015191\\
45	0.497647450391874\\
46	0.517890127212812\\
47	0.505297941187003\\
48	0.501818466736886\\
49	0.532976167629579\\
50	0.518933643676142\\
51	0.509803853934258\\
52	0.510472817279734\\
53	0.533884347670337\\
54	0.507901986716691\\
55	0.510845626710028\\
56	0.505955585646727\\
57	0.513293548042703\\
58	0.508550933318634\\
59	0.506468912419602\\
60	0.507577904310786\\
61	0.503009204595177\\
62	0.513329086293772\\
63	0.510445350549673\\
64	0.501183087893078\\
65	0.504290200539491\\
66	0.501087329612797\\
67	0.507266957925408\\
68	0.508421467994892\\
69	0.503527571532604\\
70	0.501016621070913\\
71	0.517349051109618\\
72	0.5083699408298\\
73	0.49581628606482\\
74	0.510944095141426\\
75	0.507049795537558\\
76	0.517292706210585\\
77	0.498966281939918\\
78	0.522936223793223\\
79	0.504252862191017\\
80	0.509890781903466\\
81	0.520460448881835\\
82	0.495246536698139\\
83	0.514663301602922\\
84	0.499815826106441\\
85	0.5071506212488\\
86	0.510366693146688\\
87	0.517792447358288\\
88	0.506120892862666\\
89	0.506836410928914\\
90	0.506534601103221\\
91	0.516362761948352\\
92	0.496097892533478\\
93	0.509368284534166\\
94	0.502859144059977\\
95	0.509198803344931\\
96	0.509419538221135\\
97	0.513944071470028\\
98	0.517357368190226\\
99	0.517365405811031\\
100	0.515137732972473\\
};
\end{axis}
\end{tikzpicture}%
  }
  \subfloat[$\lambda=1$]{
%
%
%
\begin{tikzpicture}

\begin{axis}[%
width=0.951\fwidth,
height=0.75\fwidth,
at={(0\fwidth,0\fwidth)},
scale only axis,
xmode=log,
xmin=1,
xmax=100,
xminorticks=true,
xlabel style={font=\color{white!15!black}},
xlabel={t},
ymin=-3,
ymax=-0,
axis background/.style={fill=white}
]
\addplot [color=mycolor1, line width=2.0pt, forget plot]
  table[row sep=crcr]{%
1	-0.0079538799589155\\
2	-0.029543354739046\\
3	-0.0433687510715851\\
4	-0.0267997530098269\\
5	-0.0552496970950875\\
6	-0.0426903095165145\\
7	-0.0679414709532705\\
8	-0.0337160620542988\\
9	-0.00774590526565739\\
10	-0.0133546186947674\\
11	-0.00840124016438263\\
12	-0.00820499270165407\\
13	-0.00590896597439416\\
14	-0.0176499482814558\\
15	-0.00252354607061481\\
16	-0.00717860489756771\\
17	-0.00319735702605778\\
18	-0.00187242718476113\\
19	-0.00288948427503447\\
20	-0.00496589148972686\\
21	-0.00224283116617609\\
22	-0.00150386647593838\\
23	-0.00159912641771346\\
24	-0.00368014203475327\\
25	-0.00109444086079066\\
26	-0.000446191788070531\\
27	-0.000734292624807064\\
28	-0.000401051896476695\\
29	-0.000932649177910425\\
30	-0.00141656183508094\\
31	-0.000515309722339206\\
32	-0.00100075848721307\\
33	-0.000844245985664303\\
34	-0.000208029214434817\\
35	-0.000817863343109707\\
36	-6.87462086350267e-05\\
37	-2.66168146450068e-05\\
38	-0.000119567334337687\\
39	-5.30513852849204e-05\\
40	-0.000150595335306735\\
41	-0.000156079689737337\\
42	-0.000132583609651288\\
43	-0.000105671878860136\\
44	-0.000363674074689149\\
45	-0.000332094737739339\\
46	-0.000132280430068013\\
47	-0.000360764969405453\\
48	-0.000202804047634601\\
49	-0.000914969854600546\\
50	-0.00193609011034195\\
51	-0.000485374853592484\\
52	-0.000669058007541339\\
53	-0.000100638550352485\\
54	-0.000125608139962873\\
55	-5.81618915309334e-05\\
56	-1.65360876138627e-05\\
57	-1.52893349942399e-05\\
58	-4.50288483102586e-06\\
59	-4.30220492720571e-06\\
60	-1.47759406493938e-05\\
61	-1.13574777807691e-06\\
62	-3.05814881114346e-07\\
63	-8.03477068955045e-07\\
64	-6.11637159899579e-06\\
65	-1.5135799037908e-06\\
66	-3.34000395704954e-07\\
67	-2.68482160484598e-07\\
68	-4.02673438368847e-07\\
69	-2.57633046635655e-06\\
70	-1.50530556519574e-06\\
71	-4.19887918924763e-06\\
72	-1.10386452954666e-06\\
73	-1.237121797162e-06\\
74	-8.25241630388637e-08\\
75	-9.6784364482321e-08\\
76	-5.18656159601938e-08\\
77	-2.0863147713059e-08\\
78	-2.19152522685541e-07\\
79	-4.46832491850821e-08\\
80	-7.72699679516323e-10\\
81	-8.9349853205317e-08\\
82	-5.30542790635243e-10\\
83	-1.98465955523162e-08\\
84	-8.02736813289686e-09\\
85	-3.91370113361051e-08\\
86	-8.73236313782212e-08\\
87	-7.71171273296649e-09\\
88	-4.60709804182315e-11\\
89	-5.66442994947387e-08\\
90	-2.12750392924871e-08\\
91	-4.04237672076645e-08\\
92	-3.88491488358296e-08\\
93	-2.65185909008292e-08\\
94	-2.16508821825751e-08\\
95	-4.37319159845521e-09\\
96	-7.49389154669673e-08\\
97	-2.08453943166065e-08\\
98	-2.6708469428986e-08\\
99	-5.90199758325732e-09\\
100	-1.66200369130425e-08\\
};
\addplot [color=mycolor2, dashed, line width=3.0pt, forget plot]
  table[row sep=crcr]{%
1	-2.722153411267\\
2	-1.94516327026953\\
3	-1.64508920163008\\
4	-2.46256694199538\\
5	-2.01900980362994\\
6	-0.952183343545042\\
7	-0.885305409684854\\
8	-1.48994419555813\\
9	-0.801874165878863\\
10	-1.03111506169953\\
11	-0.698290341643939\\
12	-1.28703555289472\\
13	-0.906153080690147\\
14	-0.361312065333853\\
15	-0.528382063924027\\
16	-0.5004332650923\\
17	-0.179922228143026\\
18	-0.24485383970683\\
19	-0.518467911031\\
20	-0.396631360348243\\
21	-0.286584107530347\\
22	-0.0603461255916548\\
23	-0.148105826934939\\
24	-0.0208517310325713\\
25	-0.185801109769805\\
26	-0.0389451605100446\\
27	-0.0852808488988264\\
28	-0.04951256976962\\
29	-0.00293319542166685\\
30	-0.0200069023336759\\
31	-0.0167216620657047\\
32	-0.0344243494755248\\
33	-0.040236107222278\\
34	-0.0201039315812888\\
35	-0.016858757911495\\
36	-0.0241656670145098\\
37	-0.0198255436378841\\
38	-0.0318284149608048\\
39	-0.0162088393933059\\
40	-0.039594321688047\\
41	-0.00450617792555742\\
42	-0.0109730978164902\\
43	-0.0129224627755731\\
44	-0.0864968986718645\\
45	-0.0481343182828859\\
46	-0.0556736894707966\\
47	-0.0191104076358362\\
48	-0.0412646735520683\\
49	-0.0176149892716187\\
50	-0.00441802660286454\\
51	-0.000432317205929555\\
52	-0.000240846243408926\\
53	-0.000188863174901928\\
54	-5.70154076065477e-05\\
55	-0.000208560630301867\\
56	-7.85284996924208e-05\\
57	-2.21993073117616e-05\\
58	-6.54599360020493e-06\\
59	-4.69267031139676e-05\\
60	-0.000122224255520404\\
61	-1.29640550327843e-05\\
62	-5.90687176957188e-06\\
63	-4.43604109352983e-06\\
64	-5.58923974059919e-07\\
65	-9.0581274547844e-07\\
66	-8.18019431653096e-07\\
67	-4.83986305006053e-07\\
68	-1.03839991204521e-06\\
69	-6.2129588397099e-07\\
70	-4.64933447872208e-07\\
71	-1.69060089887443e-07\\
72	-3.05849363198025e-07\\
73	-1.20927203685865e-07\\
74	-9.55692055288801e-08\\
75	-7.60571618399762e-08\\
76	-6.24518519058812e-08\\
77	-6.93872577478441e-08\\
78	-4.02800714128452e-08\\
79	-1.93759546630455e-08\\
80	-3.9343338870102e-08\\
81	-1.50059774810078e-08\\
82	-1.59401431176721e-08\\
83	-7.27357793840842e-08\\
84	-8.26336498923122e-09\\
85	-3.80585755198375e-08\\
86	-4.86056369782277e-08\\
87	-3.717722801735e-08\\
88	-1.62525653097387e-09\\
89	-1.59693921129335e-08\\
90	-6.77232274862789e-08\\
91	-4.8663724644361e-08\\
92	-1.14002947485849e-08\\
93	-4.70024474232401e-08\\
94	-1.65402175987577e-08\\
95	-3.54811160226814e-08\\
96	-1.2803379939206e-08\\
97	-6.09303501530596e-08\\
98	-1.45453130886335e-08\\
99	-5.65938338638241e-08\\
100	-3.36173018218328e-08\\
};
\end{axis}
\end{tikzpicture}%
  }
  \subfloat[legend]{
    \raisebox{4em}{\begin{tikzpicture}

  \begin{axis}[%
    hide axis,
    xmin=10,
    xmax=50,
    ymin=0,
    ymax=0.4,
    legend style={draw=white!15!black,legend cell align=left}
    ]
    \addlegendimage{color=mycolor1, line width=2.0pt}
    \addlegendentry{Bayes};
    \addlegendimage{color=mycolor2, dashed, line width=3.0pt};
    \addlegendentry{Marginal};
  \end{axis}
\end{tikzpicture}}
  }
  \caption{\textbf{Synthetic data.} Test of effect of amount of data for Bayesian versus marginal decision rules, for different values of the $\lambda$ parameter, with respect to the true model. As more weight is placed on guaranteeing fairness, we see that the Bayesian approach is better able to guarantee fairness for the true model. The plots show the average performance over 10 runs, with an initially uniform prior over a set of 8 models, one of which is the correct one. In this setting $|\CA| = |\CY| = |\CZ| = 2$ and $|\CX| = 8$.}
  \label{fig_exp_1}
\end{figure*}
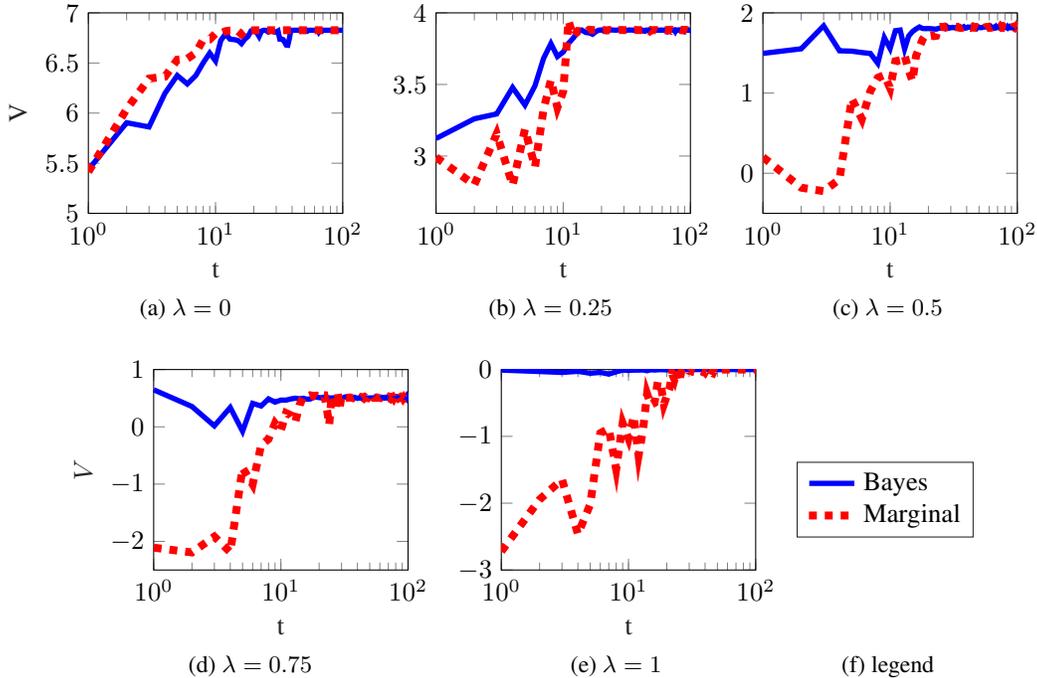

\section{Experiments} 
\label{sec:experiments}
In this section we study the utility-fairness trade-off on
artificial and real data sets.  We compare our approach, which uses a
decision rule based on the full Bayesian problem, to classical
approaches such as~\cite{HardtPNS16} which simply optimizes the DM's
policy with respect to a single model. Rather than introducing a new
fairness metric, we use a generalized version of the balance metric in
\cite{kleinberg2016inherent}, which is also a generalization of the
equality of opportunity in \cite{HardtPNS16}. We see that the Bayesian
approach very gracefully handles fairness, even with high model
uncertainty, while a marginal approach can be blatantly unfair.
For a fair comparison, in both cases we assume the same prior distribution for the
parameters. We focus on a simple model where posterior distributions can be calculated in closed-form, in order to focus on the choice of policy, rather than the case with approximate inference. However, our algorithm is generally applicable and could be combined with e.g. MCMC inference.

\textbf{Performance} is evaluated with respect to actual balance and
utility achieved: for the synthetic data this will be measured
according to the actual data-generating distribution, while for the
COMPAS data, it will be the empirical distribution on a holdout set.

\textbf{The algorithm} for optimising policies uses (stochastic) gradient descent. In particular, the Bayesian policy minimizes \eqref{eq:balanced-bayes-norm} by sampling $\param$ from the posterior distribution $\bel$ and then taking a step in the gradient direction. The marginal policy simply performs steepest gradient descent for the marginal model.

\textbf{The results} shown in
Figures~\ref{fig_exp_1}--\ref{fig:sequential-allocation} display the
performance of the corresponding (Bayesian or marginal) decision rule
for different value of $\lambda$ as more data is acquired. In the
first two experiments, we assume that no matter what the decision of
the DM is, $z_t, y_t$ are always observed after the DM's decision and
so the model is fully updated. In that setting, it is not necessary
for the DM to take into account expected future information for her
actions. However, in the third experiment, described in
Section {\em Sequential allocation}, 
the values of $z_t$ and $y_t$
are only observed when the DM makes the decision $a_t = 1$, and the DM
faces a generalized exploration problem.


\textbf{The model} we employ throughout is a discrete Bayesian network
model, with finite $\CX, \CY, \CZ, \CA$. The models are thus described
through multinomial distributions that capture the dependency between
different random variables. The available data is used to calculate a
\emph{posterior} distribution $\bel(\param)$. From this, we calculate
both an approximate marginal balanced rule as well as a Bayesian
balanced rule. The former uses the marginal model directly, while the
latter uses $k = 16$ samples from the posterior
distribution.\footnote{We found empirically that 16 was a sufficient
  number for stable behaviour and efficient computation. For $k=1$ the
  algorithm devolves into an approximation of Thompson sampling.} We
tested these approaches both on synthetic data and on the COMPAS
dataset.
The conjugate prior distribution to this model is a simple Dirichlet-product, as the network is discrete. The graphical model is fully connected, so the model uses the factorization $P_\param(x, y, z) = P_\param(y \mid x, z) P_\param(x \mid z) P_\param(z)$. We used this simple modeling choice throughout the paper, apart from the small experiment on synthetic data in the following section. In all cases where a Dirichlet prior was used, the Dirichlet prior parameters were all set equal to $1/2$.

\subsection{Experiments on synthetic data.}
Here we consider a discrete decision problem, with $|\CX| = 8$,
$|\CY| = |\CZ| = |\CA| = 2$, and $\util(y, a) = \ind{y = a}$.  In our
first experiment, we generate $100$ observations from this model. We
performed the experiment 10 times, each time generating data from a
fully connected discrete Bayesian network with uniformly randomly
selected parameters. Unlike the rest of the paper, in this example,
the prior distribution has finite support on only 8 models. This means that the posterior will have effectively converged to the true model after 100 observations.

As can be seen in Figure \ref{fig_exp_1}, the relative performance of
the Bayesian approach w.r.t. the marginal approach increases as we put
more emphasis on fairness (Figure \ref{fig_exp_1}
(a) cares nothing about fairness.). In some cases (e.g. Figure \ref{fig_exp_1}
(c)), value for the marginal approach decreases at the beginning and
eventually reaches the same value as the Bayesian approach after
sufficient amount of data is received.  This conforms with our
hypothesis that one should take into account model uncertainty.  The
fact that both approaches converge toward the maximum value is in
accordance with our formal results (Theorem \ref{noise:model}).

Finally, Figure~\ref{fig_exp_1:tradeoff} and its extended version (Figure~\ref{fig_exp_1:tradeoff_extend} in supplementary materials) more clearly shows how well the two different solutions perform with respect to the utility fairness trade-off. As we vary $\lambda$ and the amount of data, both methods achieve the same utility. However the Bayesian approach consistently achieves lower fairness violations for similar $U$.

\subsection{Experiments on COMPAS data.}
For the COMPAS dataset, we consider a discretization where fields such as the number of offenses are converted to binary features.\footnote{We arrived at the specific discretization through cross validating the performance of a discrete Bayesian classifier over possible discretizations.} 
We used the first 6000 data points for training and the remaining 1214 points for validation. Two attributes are sensitive (sex, race), while 6 attributes (relating to prior convictions and age) are used for the policy.
With discretization, there are a total of 12 distinct values for the sensitive attributes and 141 for the observables used for the underlying model. The prediction is whether or not there is recidivism in the next two years, with utility function $\util(a, y) = \ind{a = y}$.

Figure~\ref{fig:compas-dbn} and its extended version (Figure~\ref{fig:compas-dbn_extend} in supplementary materials) show the results of applying our analysis
to the COMPAS dataset used by ProPublica. Since in this case the true
model was unknown, the results are calculated with respect to the
marginal model estimated on the holdout set. In this scenario we can see that when we only focus on classification performance, the marginal and Bayesian decision rules perform equally well. However, as we place more emphasis on fairness, we observe that the Bayesian approach dominates.
\footnote{We note here that measured performance performance may not monotonically increase with respect to the (rather small) holdout set. Even if we had converged to the true model, measuring with respect to an empirical estimate is problematic, as it will be $\epsilon$-far away from the true model. This is particularly important for fairness considerations.}

\begin{figure*}
\centering
  \subfloat[$\lambda=0.25$]{
%
%
\definecolor{mycolor1}{rgb}{0.00000,0.75000,0.75000}%
\begin{tikzpicture}

\begin{axis}[%
width=0.951\fwidth,
height=0.75\fwidth,
at={(0\fwidth,0\fwidth)},
scale only axis,
xmode=log,
xmin=1,
xmax=100,
xminorticks=true,
xlabel={t},
ylabel={$U, F$},
ymin=3,
ymax=8,
axis background/.style={fill=white}
]
\addplot [color=blue, line width=2.0pt, forget plot]
  table[row sep=crcr]{%
1	5.48092030379997\\
2	5.62187002690099\\
3	5.95863155946218\\
4	6.28898882529671\\
5	6.37263646084564\\
6	6.44619863535187\\
7	6.51978070369432\\
8	6.6260112803707\\
9	6.59969215942956\\
10	6.64710138462075\\
11	6.64739151299339\\
12	6.64510042770851\\
13	6.64152382540431\\
14	6.77898626490996\\
15	6.78942211106642\\
16	6.78461368559541\\
17	6.78747870326049\\
18	6.78938642097554\\
19	6.78832161718462\\
20	6.78775050690658\\
21	6.78944722876843\\
22	6.78917577314025\\
23	6.78819139814796\\
24	6.78724347966865\\
25	6.78821620436663\\
26	6.78831214850696\\
27	6.7880922368358\\
28	6.7881723236791\\
29	6.78794233482332\\
30	6.78813814586129\\
31	6.78820354351858\\
32	6.78841099428386\\
33	6.78808353541935\\
34	6.7881908225767\\
35	6.78774827696057\\
36	6.7885187243922\\
37	6.78845419838743\\
38	6.78814836495893\\
39	6.78821228904136\\
40	6.78818735684172\\
41	6.78799405430338\\
42	6.7882713816731\\
43	6.78927591624498\\
44	6.78946030227827\\
45	6.78821020841118\\
46	6.78827185230104\\
47	6.78860337850886\\
48	6.78748352584787\\
49	6.78799354670293\\
50	6.78842672785986\\
51	6.7880902860153\\
52	6.78808388647639\\
53	6.78747738661796\\
54	6.78819916095484\\
55	6.78835895289234\\
56	6.78843293590063\\
57	6.78825725240079\\
58	6.78829796145325\\
59	6.788057466024\\
60	6.78839237562291\\
61	6.78834851740735\\
62	6.78735253906824\\
63	6.78775411240988\\
64	6.78863576035963\\
65	6.78755861187669\\
66	6.78954119015546\\
67	6.78873337301612\\
68	6.78816033804725\\
69	6.78820344967906\\
70	6.78776361064147\\
71	6.78814177015364\\
72	6.7879366996741\\
73	6.7880936081835\\
74	6.78825561226921\\
75	6.78821316644409\\
76	6.78918787026003\\
77	6.78818333235925\\
78	6.78819645711209\\
79	6.78810751942358\\
80	6.78842982644288\\
81	6.78739048420254\\
82	6.78823104668009\\
83	6.78781609618644\\
84	6.78820734692313\\
85	6.78828044289443\\
86	6.78722601773148\\
87	6.78755406937019\\
88	6.78811157856524\\
89	6.78808107645364\\
90	6.789012996106\\
91	6.7881650039973\\
92	6.78821720889247\\
93	6.78807168410838\\
94	6.78810800021703\\
95	6.78793379320975\\
96	6.78826043758827\\
97	6.78807781437123\\
98	6.78818537206819\\
99	6.78850763801694\\
100	6.7884621883911\\
};
\addplot [color=black!50!green, dashed, line width=3.0pt, forget plot]
  table[row sep=crcr]{%
1	5.46839092366965\\
2	5.75703558072284\\
3	6.21187308663476\\
4	6.39102664381444\\
5	6.55586234089632\\
6	6.38413615637934\\
7	6.42300581162868\\
8	6.67498693237809\\
9	6.63039319016224\\
10	6.58773391003488\\
11	6.75296577578321\\
12	6.786443793693\\
13	6.78805569029723\\
14	6.78745585366746\\
15	6.78832428887165\\
16	6.78830066490763\\
17	6.78839106970021\\
18	6.78808802788535\\
19	6.78798694582777\\
20	6.78817693494755\\
21	6.7881008909547\\
22	6.78842032667698\\
23	6.78783128904056\\
24	6.78875615083487\\
25	6.78845727850269\\
26	6.78819401413074\\
27	6.78834196285126\\
28	6.7872101033022\\
29	6.7879980507075\\
30	6.78824430728795\\
31	6.78816970586208\\
32	6.78818801908122\\
33	6.78814017664448\\
34	6.78825078323049\\
35	6.7881484859051\\
36	6.78818691493551\\
37	6.78808987102091\\
38	6.78816298338296\\
39	6.78799846176474\\
40	6.78828094214078\\
41	6.78727871939925\\
42	6.78803975010666\\
43	6.78725571301125\\
44	6.78871518089493\\
45	6.78840508970577\\
46	6.7875707596091\\
47	6.7881728716072\\
48	6.78846952964538\\
49	6.78817772400779\\
50	6.78840021851439\\
51	6.78822370152359\\
52	6.78740754507343\\
53	6.78818214694285\\
54	6.78826729796029\\
55	6.78794709000275\\
56	6.78819769453097\\
57	6.78773687436855\\
58	6.78760780764942\\
59	6.78820024812653\\
60	6.78822638564773\\
61	6.78816361161253\\
62	6.78832451810031\\
63	6.78840398227284\\
64	6.78810379421081\\
65	6.7882192655374\\
66	6.7882812746018\\
67	6.78767994390469\\
68	6.78789414506478\\
69	6.78833152743326\\
70	6.78812570838307\\
71	6.78853051769768\\
72	6.78820591270197\\
73	6.78840134179175\\
74	6.78833646862341\\
75	6.78714175770295\\
76	6.78827649125506\\
77	6.78761416139175\\
78	6.78818828234791\\
79	6.78820835330531\\
80	6.78869172117444\\
81	6.78888773725641\\
82	6.78807839890497\\
83	6.78822911534438\\
84	6.7882605284278\\
85	6.78824522127325\\
86	6.78813733439761\\
87	6.78828150957633\\
88	6.787818875421\\
89	6.78808023524246\\
90	6.78808690828464\\
91	6.78820378585089\\
92	6.78810395152641\\
93	6.78826222843939\\
94	6.78821516715624\\
95	6.7884244130633\\
96	6.78815086989971\\
97	6.78823200461007\\
98	6.78762008169212\\
99	6.78764861589493\\
100	6.78759288324044\\
};
\addplot [color=red, dashdotted, line width=2.0pt, forget plot]
  table[row sep=crcr]{%
1	3.95263838270111\\
2	3.82503765619641\\
3	4.69915687329073\\
4	4.95179395517991\\
5	5.69532493678198\\
6	5.37334301528087\\
7	4.83235868797524\\
8	4.73554152570934\\
9	5.01931591636941\\
10	5.03432141135122\\
11	4.79254541499617\\
12	4.63914092183212\\
13	4.44750001169128\\
14	4.81685316598652\\
15	4.9147788424582\\
16	4.91327394352937\\
17	4.95995623494322\\
18	4.8945337204222\\
19	4.84879148065828\\
20	4.83909002285821\\
21	4.88259365579378\\
22	4.88389090533673\\
23	4.8443938066954\\
24	4.83840179753397\\
25	4.84558816878397\\
26	4.84905292663818\\
27	4.84147078022581\\
28	4.84372820754065\\
29	4.83890505616955\\
30	4.84232199646838\\
31	4.84523950410094\\
32	4.85078354779625\\
33	4.84135735710565\\
34	4.84520888873435\\
35	4.84215081653731\\
36	4.85535850522084\\
37	4.85185247997884\\
38	4.84260169610015\\
39	4.84620634763515\\
40	4.84523575187968\\
41	4.83902567068879\\
42	4.84701671196802\\
43	4.87609116537311\\
44	4.88206656252824\\
45	4.84539564610091\\
46	4.84731781906082\\
47	4.85657048784665\\
48	4.84464814453533\\
49	4.84034483610184\\
50	4.85098637586876\\
51	4.85234480413716\\
52	4.84223427910482\\
53	4.83645625338907\\
54	4.84534540433531\\
55	4.84989668463203\\
56	4.85152070699286\\
57	4.84710109351825\\
58	4.84850862271226\\
59	4.8417612393451\\
60	4.8505172651455\\
61	4.84854018635913\\
62	4.84166599328614\\
63	4.83795694229326\\
64	4.85812270662427\\
65	4.8507748368422\\
66	4.88437423777575\\
67	4.85926481587375\\
68	4.84382853660602\\
69	4.8450886394879\\
70	4.83777595797139\\
71	4.8436077111013\\
72	4.83723787081233\\
73	4.8419858580276\\
74	4.84700191788378\\
75	4.84679719167575\\
76	4.88422845265261\\
77	4.84492363174228\\
78	4.84542290433572\\
79	4.84308012880218\\
80	4.85170602717477\\
81	4.84406941230317\\
82	4.84661435851207\\
83	4.83788327052865\\
84	4.84567275514356\\
85	4.84723479294532\\
86	4.83903092538253\\
87	4.83854193953482\\
88	4.84165658956618\\
89	4.84096009660066\\
90	4.86812900077183\\
91	4.84417172920604\\
92	4.84595715415907\\
93	4.84059852688428\\
94	4.84207974058152\\
95	4.84058822114876\\
96	4.84739615953176\\
97	4.84066509683728\\
98	4.84408611041905\\
99	4.85435432738002\\
100	4.85140753714343\\
};
\addplot [color=mycolor1, dotted, line width=3.0pt, forget plot]
  table[row sep=crcr]{%
1	4.43942617049761\\
2	6.07517227920279\\
3	6.01511546400837\\
4	7.99001185146673\\
5	6.91853861731467\\
6	7.45736885559621\\
7	5.9696968988519\\
8	5.90044821555372\\
9	6.53253989933909\\
10	5.95793744141482\\
11	4.47853828240861\\
12	5.01187651983101\\
13	4.86994172661999\\
14	4.87457926082996\\
15	4.91605950930105\\
16	4.83664217269984\\
17	4.84487405751951\\
18	4.84572345362514\\
19	4.84496677844009\\
20	4.84859591315895\\
21	4.8439187330126\\
22	4.85436336951604\\
23	4.84095204112765\\
24	4.86334743088784\\
25	4.85229857867871\\
26	4.84514781002158\\
27	4.84965309635877\\
28	4.83928916416435\\
29	4.84189589312187\\
30	4.84704232748438\\
31	4.84549575927436\\
32	4.84486903809857\\
33	4.84236622368815\\
34	4.84766997345754\\
35	4.85758994431681\\
36	4.84579416567247\\
37	4.84096626422101\\
38	4.84505804961238\\
39	4.8440266791363\\
40	4.84775057449601\\
41	4.84209322722535\\
42	4.84050324640196\\
43	4.83980167280306\\
44	4.85923425685094\\
45	4.85111829634059\\
46	4.84132164675338\\
47	4.84370161642999\\
48	4.87223462946916\\
49	4.84432618495737\\
50	4.8500961427862\\
51	4.84586521700634\\
52	4.84466882674989\\
53	4.84385534411402\\
54	4.84618282826888\\
55	4.85639560861603\\
56	4.8456548959247\\
57	4.84125063954534\\
58	4.84248664382494\\
59	4.84531192138565\\
60	4.84571944195174\\
61	4.84383104434909\\
62	4.84870278970119\\
63	4.85126878016323\\
64	4.84210931804219\\
65	4.84625260846576\\
66	4.84761300621132\\
67	4.83951697497941\\
68	4.84891722764123\\
69	4.84990546659528\\
70	4.84234548798647\\
71	4.85552485686154\\
72	4.84534656853895\\
73	4.85214524633418\\
74	4.84833846211763\\
75	4.83836834799854\\
76	4.84747334309443\\
77	4.84577873600088\\
78	4.84566127923588\\
79	4.84656159051625\\
80	4.85932831037586\\
81	4.86458441816834\\
82	4.84154805775504\\
83	4.84631576231663\\
84	4.84721937222264\\
85	4.8458013115131\\
86	4.84218017264925\\
87	4.84713277748063\\
88	4.83723898611049\\
89	4.84187774364254\\
90	4.84171840375588\\
91	4.8452678398018\\
92	4.84182079956507\\
93	4.84699098516083\\
94	4.84549640548173\\
95	4.85158555409688\\
96	4.84409812448365\\
97	4.84640282767493\\
98	4.84023191773009\\
99	4.83639997556977\\
100	4.83899503618132\\
};
\end{axis}
\end{tikzpicture}%
  }
  \subfloat[$\lambda=0.5$]{
%
%
\definecolor{mycolor1}{rgb}{0.00000,0.75000,0.75000}%
\begin{tikzpicture}

\begin{axis}[%
width=0.951\fwidth,
height=0.75\fwidth,
at={(0\fwidth,0\fwidth)},
scale only axis,
xmode=log,
xmin=1,
xmax=100,
xminorticks=true,
xlabel={t},
ymin=2,
ymax=7,
axis background/.style={fill=white}
]
\addplot [color=blue, line width=2.0pt, forget plot]
  table[row sep=crcr]{%
1	5.43626321740328\\
2	5.62955957449209\\
3	5.83892212841107\\
4	6.08455180148673\\
5	6.45389213646402\\
6	6.51404201900693\\
7	6.55425288332807\\
8	6.47841299626561\\
9	6.62069875641757\\
10	6.51422248577047\\
11	6.59838520559784\\
12	6.59032275016397\\
13	6.52849619086474\\
14	6.68240075733416\\
15	6.70628180193069\\
16	6.70366440054659\\
17	6.70078300565614\\
18	6.70264005866184\\
19	6.70272040838255\\
20	6.70215960026927\\
21	6.7035515874975\\
22	6.70405002626636\\
23	6.70380004965903\\
24	6.70280860477349\\
25	6.70188832548547\\
26	6.70154419712915\\
27	6.70290303559077\\
28	6.70099710415468\\
29	6.70193379267451\\
30	6.70120150981987\\
31	6.70173152738914\\
32	6.70159866600691\\
33	6.70148331010774\\
34	6.70024700158092\\
35	6.70162388648996\\
36	6.70122567452083\\
37	6.70119760380918\\
38	6.7008942387301\\
39	6.70207192686816\\
40	6.70206106174765\\
41	6.70118803780591\\
42	6.7011708537815\\
43	6.70121592271\\
44	6.70090701893929\\
45	6.70090952627623\\
46	6.70172791977211\\
47	6.7004604519536\\
48	6.70214417070081\\
49	6.70087927793523\\
50	6.7009280299354\\
51	6.70192571875951\\
52	6.70171886250275\\
53	6.70150820728044\\
54	6.70126159777429\\
55	6.70041242169488\\
56	6.7002853540649\\
57	6.70046583835858\\
58	6.70020648356859\\
59	6.70116548685459\\
60	6.70050519866534\\
61	6.7015615201992\\
62	6.7007685612289\\
63	6.70095050594716\\
64	6.70154935714809\\
65	6.70152349860255\\
66	6.70070897977991\\
67	6.7009292344296\\
68	6.70068914149036\\
69	6.70163551712463\\
70	6.70142476387294\\
71	6.70069739926064\\
72	6.7017336129983\\
73	6.70128008473107\\
74	6.70162543894496\\
75	6.70146658599244\\
76	6.70170456840915\\
77	6.7013768002497\\
78	6.70182944797324\\
79	6.70116360226985\\
80	6.701438682881\\
81	6.70092732271296\\
82	6.70135911617293\\
83	6.70188062229687\\
84	6.70075474316751\\
85	6.70097329900569\\
86	6.70040857565787\\
87	6.70120199869424\\
88	6.70120379847468\\
89	6.70170383955328\\
90	6.70113156603443\\
91	6.70058910840515\\
92	6.70055443379444\\
93	6.70091509145021\\
94	6.70170584002308\\
95	6.70130045964099\\
96	6.70227846487977\\
97	6.70187435109027\\
98	6.70119790741984\\
99	6.70133993723436\\
100	6.70064902183277\\
};
\addplot [color=black!50!green, dashed, line width=3.0pt, forget plot]
  table[row sep=crcr]{%
1	5.51006868459902\\
2	5.94349925651812\\
3	6.15110188956242\\
4	6.02712399624356\\
5	6.32392569534522\\
6	6.36115851019674\\
7	6.36007362082859\\
8	6.42304614951736\\
9	6.46234178919198\\
10	6.46733988080431\\
11	6.45899027408367\\
12	6.3374431395461\\
13	6.54985803215858\\
14	6.3167027540054\\
15	6.46001002676741\\
16	6.56942958143311\\
17	6.61343634104224\\
18	6.63292515036324\\
19	6.63448250199397\\
20	6.63488106426994\\
21	6.63474315974286\\
22	6.68991621855335\\
23	6.6335990864829\\
24	6.69904685642871\\
25	6.70141875769818\\
26	6.7012806526023\\
27	6.70070437469069\\
28	6.70101301806078\\
29	6.70067654491554\\
30	6.70099908199053\\
31	6.70110335421526\\
32	6.70109454031283\\
33	6.70048011400453\\
34	6.70015802246409\\
35	6.70146803260837\\
36	6.70137519147395\\
37	6.70074615877402\\
38	6.70091163421494\\
39	6.7008946179437\\
40	6.70109518272528\\
41	6.70205145033657\\
42	6.70115273392061\\
43	6.70186718313822\\
44	6.70177380992648\\
45	6.7003804780666\\
46	6.70153952966726\\
47	6.70174797129619\\
48	6.70057765354134\\
49	6.70081922974977\\
50	6.70164333602487\\
51	6.70125515905557\\
52	6.70152742403787\\
53	6.70163646612874\\
54	6.70157301629276\\
55	6.701414269396\\
56	6.70151509569165\\
57	6.70113067220305\\
58	6.70067348960663\\
59	6.70094302840244\\
60	6.70066241369534\\
61	6.70171037678393\\
62	6.7017258902379\\
63	6.70100013577623\\
64	6.70084519318045\\
65	6.70123086292297\\
66	6.70103013736592\\
67	6.70117679885823\\
68	6.70136115786824\\
69	6.70167814502445\\
70	6.70122192741145\\
71	6.70183309433243\\
72	6.70100999249549\\
73	6.70135356970916\\
74	6.7002459876897\\
75	6.70146912361473\\
76	6.70130258398882\\
77	6.70075589014854\\
78	6.70080348452352\\
79	6.70112183391194\\
80	6.70120314765165\\
81	6.70119447031542\\
82	6.70144475842244\\
83	6.70137848453271\\
84	6.70168043308462\\
85	6.70077627802382\\
86	6.70071854612631\\
87	6.70101623736037\\
88	6.70104533420828\\
89	6.70135513010366\\
90	6.70056144078923\\
91	6.70102557689087\\
92	6.7006300362861\\
93	6.70107085890124\\
94	6.70153290457477\\
95	6.70133276766025\\
96	6.70119862303415\\
97	6.70089139040385\\
98	6.70152026042981\\
99	6.70007620915822\\
100	6.70036852816079\\
};
\addplot [color=red, dashdotted, line width=2.0pt, forget plot]
  table[row sep=crcr]{%
1	2.44661771734869\\
2	2.52576353700595\\
3	2.16127647791958\\
4	3.0310290948882\\
5	3.40887425643453\\
6	3.50341814626276\\
7	3.56683307706032\\
8	3.73648862770094\\
9	3.23565565554268\\
10	3.40502086781842\\
11	3.04321944910621\\
12	3.01889669235785\\
13	3.43068866230066\\
14	3.25900115565335\\
15	3.17273095025148\\
16	3.10459827121771\\
17	3.0526650791918\\
18	3.09139487891992\\
19	3.09579296675206\\
20	3.09234915988499\\
21	3.09851430049847\\
22	3.10488187422497\\
23	3.11031833793038\\
24	3.09025546200949\\
25	3.07799538254974\\
26	3.05651836407002\\
27	3.09294115326098\\
28	3.04643784169409\\
29	3.08269460708509\\
30	3.07199875153622\\
31	3.06343319656065\\
32	3.07375106751663\\
33	3.06639676553553\\
34	3.05995126777791\\
35	3.07517841982544\\
36	3.06389299412385\\
37	3.06951122517533\\
38	3.04740118393345\\
39	3.08209796772388\\
40	3.07641205413439\\
41	3.05399563766818\\
42	3.05893438157334\\
43	3.05541018738379\\
44	3.06393384924717\\
45	3.04896819364968\\
46	3.07750178838152\\
47	3.05587980147252\\
48	3.08182658514786\\
49	3.04540555034994\\
50	3.06253296378515\\
51	3.07015540037389\\
52	3.06967956686681\\
53	3.0665229029738\\
54	3.06112004082199\\
55	3.05154570012539\\
56	3.04896094300526\\
57	3.05485326870482\\
58	3.04000064341995\\
59	3.059682219609\\
60	3.06316955979892\\
61	3.06141679688057\\
62	3.04373363846218\\
63	3.06791502934542\\
64	3.0757031610305\\
65	3.07958268848526\\
66	3.05845004880157\\
67	3.06552120542576\\
68	3.0419291877304\\
69	3.06806252792471\\
70	3.06832192190414\\
71	3.05477491131693\\
72	3.06448088366623\\
73	3.04826170050289\\
74	3.06948504914674\\
75	3.06412523565948\\
76	3.07259231360712\\
77	3.06403226648865\\
78	3.06970412697863\\
79	3.06165521805732\\
80	3.0614821477698\\
81	3.052753008387\\
82	3.06648346891204\\
83	3.07741163270134\\
84	3.04639913183469\\
85	3.06273214783606\\
86	3.06596710461126\\
87	3.04983966925001\\
88	3.06539272911671\\
89	3.06856762532641\\
90	3.05256539118647\\
91	3.03999612848695\\
92	3.05343433068084\\
93	3.05168086156649\\
94	3.06665063702698\\
95	3.05274503115123\\
96	3.08200349747352\\
97	3.07318177136839\\
98	3.06615315275989\\
99	3.05534532007844\\
100	3.05738636915808\\
};
\addplot [color=mycolor1, dotted, line width=3.0pt, forget plot]
  table[row sep=crcr]{%
1	5.10841690753339\\
2	6.30366720328818\\
3	6.59816030254869\\
4	6.17790900319201\\
5	4.46475981640557\\
6	5.00765820609084\\
7	4.30951390227002\\
8	4.04161842546795\\
9	4.01485656248218\\
10	4.34312892872975\\
11	3.66541471620045\\
12	3.517951425459\\
13	3.61233274197403\\
14	3.74518751805426\\
15	4.16236954544025\\
16	3.64187637079015\\
17	3.4741715491685\\
18	3.29478144435632\\
19	3.18876107018515\\
20	3.20397854482619\\
21	3.19320600929792\\
22	3.27036325372438\\
23	3.20516039434347\\
24	3.05980848849459\\
25	3.05167931384347\\
26	3.05717743757322\\
27	3.0412954175818\\
28	3.06149776452893\\
29	3.04849873207019\\
30	3.04518142355283\\
31	3.05500058612968\\
32	3.05073964570803\\
33	3.05686069870436\\
34	3.04945434987303\\
35	3.07163211580473\\
36	3.07857793419349\\
37	3.07135457940943\\
38	3.05417963349527\\
39	3.06161284949563\\
40	3.07797428607014\\
41	3.08166389658056\\
42	3.06906824068655\\
43	3.07685730044911\\
44	3.06959642640651\\
45	3.04382768227879\\
46	3.06171377644948\\
47	3.07010403124248\\
48	3.05731202400714\\
49	3.07062597924447\\
50	3.06229126911754\\
51	3.05690352916751\\
52	3.06922983803813\\
53	3.05873407555027\\
54	3.0560740176733\\
55	3.0796699396199\\
56	3.05903847571533\\
57	3.05346016169011\\
58	3.05274905852315\\
59	3.06334635906267\\
60	3.05083683979572\\
61	3.06929869363696\\
62	3.07363736785062\\
63	3.06771725206619\\
64	3.0555037122552\\
65	3.06363688025101\\
66	3.05429921537096\\
67	3.05786906562215\\
68	3.06089932774921\\
69	3.08142837576992\\
70	3.07119110837081\\
71	3.07819963661889\\
72	3.07592939418843\\
73	3.06934163009663\\
74	3.04161092943315\\
75	3.06115524034447\\
76	3.05413353643378\\
77	3.04342239143652\\
78	3.06695751758088\\
79	3.06223131468353\\
80	3.06790529182809\\
81	3.07235109327085\\
82	3.06559871297931\\
83	3.06886571268794\\
84	3.07193206572329\\
85	3.06176523652693\\
86	3.05312708109413\\
87	3.05450888405249\\
88	3.0615663512455\\
89	3.05752676840432\\
90	3.05975652143678\\
91	3.07211379467446\\
92	3.0525903118369\\
93	3.04773333165863\\
94	3.06121170064906\\
95	3.06467535806715\\
96	3.07540315101183\\
97	3.06243086424018\\
98	3.06862373610331\\
99	3.04294148478522\\
100	3.05839878495847\\
};
\end{axis}
\end{tikzpicture}%
  }
  \subfloat[$\lambda=0.75$]{
%
%
\definecolor{mycolor1}{rgb}{0.00000,0.75000,0.75000}%
\begin{tikzpicture}

\begin{axis}[%
width=0.951\fwidth,
height=0.75\fwidth,
at={(0\fwidth,0\fwidth)},
scale only axis,
xmode=log,
xmin=1,
xmax=100,
xminorticks=true,
xlabel={t},
ymin=0,
ymax=7,
axis background/.style={fill=white}
]
\addplot [color=blue, line width=2.0pt, forget plot]
  table[row sep=crcr]{%
1	5.47461127098968\\
2	5.60478890856896\\
3	5.77191033058464\\
4	5.90011496868576\\
5	6.14377790957163\\
6	6.31041948276384\\
7	6.43552914365433\\
8	6.5151531251348\\
9	6.54729734736349\\
10	6.57798495912498\\
11	6.57849571632306\\
12	6.58765985244295\\
13	6.58900819766999\\
14	6.58807470034135\\
15	6.58544669469427\\
16	6.58533577780727\\
17	6.58370067428244\\
18	6.58971365581165\\
19	6.5878980368345\\
20	6.59048092832187\\
21	6.59125044154283\\
22	6.5896710150567\\
23	6.58987286386257\\
24	6.58946117658065\\
25	6.58999333445314\\
26	6.59031718167873\\
27	6.59139413998349\\
28	6.59090165755066\\
29	6.59051023277067\\
30	6.59219941170473\\
31	6.59124669937017\\
32	6.58902226344559\\
33	6.58930522013265\\
34	6.59156801046039\\
35	6.59140118319131\\
36	6.59071800018758\\
37	6.5904962121109\\
38	6.58994289370681\\
39	6.59067144643273\\
40	6.5908473033638\\
41	6.58935869164842\\
42	6.59079752619829\\
43	6.59045257663437\\
44	6.59038156874226\\
45	6.58922214845872\\
46	6.5920430806679\\
47	6.59061552462121\\
48	6.5910853518908\\
49	6.5904707507656\\
50	6.59147532593014\\
51	6.59088905580135\\
52	6.59228441522875\\
53	6.59101992136476\\
54	6.59064064465195\\
55	6.59183783311411\\
56	6.59023817047242\\
57	6.59061017708232\\
58	6.5906193777462\\
59	6.59140530114315\\
60	6.5912579079603\\
61	6.59103059374945\\
62	6.59073429065668\\
63	6.5902464349662\\
64	6.59060596841733\\
65	6.59015307164583\\
66	6.59133709302133\\
67	6.59065526384808\\
68	6.58958255899406\\
69	6.59157679882054\\
70	6.59033990286414\\
71	6.59149880127007\\
72	6.59081749662089\\
73	6.58978538380844\\
74	6.59092637473552\\
75	6.58927529131999\\
76	6.58850343746042\\
77	6.59070961917545\\
78	6.59053467074177\\
79	6.58991848266028\\
80	6.59074090596838\\
81	6.59116797494771\\
82	6.59120811456219\\
83	6.58951958489378\\
84	6.59076403901614\\
85	6.58992010133594\\
86	6.59063253300245\\
87	6.59074681389064\\
88	6.59131520405029\\
89	6.58979941235916\\
90	6.58994888558284\\
91	6.59168176024839\\
92	6.59066396864525\\
93	6.59105539706456\\
94	6.59127950787074\\
95	6.59064023962094\\
96	6.59008138513551\\
97	6.59140183039189\\
98	6.58838036017117\\
99	6.5909343332388\\
100	6.59170125491083\\
};
\addplot [color=black!50!green, dashed, line width=3.0pt, forget plot]
  table[row sep=crcr]{%
1	5.62056595098105\\
2	5.9249699465451\\
3	6.01582827122022\\
4	5.95571514939206\\
5	5.92294247321893\\
6	5.81452251942304\\
7	6.13426812770014\\
8	6.04595676543718\\
9	6.29227294231064\\
10	6.27799612179181\\
11	6.32051997859251\\
12	6.36527532407389\\
13	6.32925991780952\\
14	6.45465237806581\\
15	6.58223240737318\\
16	6.57987442531153\\
17	6.55874072799058\\
18	6.5753041864462\\
19	6.58119554829056\\
20	6.58728407958621\\
21	6.57809056851983\\
22	6.56764059133101\\
23	6.41013588735602\\
24	6.38917378777253\\
25	6.58721628658142\\
26	6.58203616141645\\
27	6.4153585168044\\
28	6.56214565465418\\
29	6.42451672084935\\
30	6.57980421789207\\
31	6.59042602754726\\
32	6.59044529564649\\
33	6.59080569187187\\
34	6.59089611744954\\
35	6.58916534077888\\
36	6.59030181816158\\
37	6.59088695127249\\
38	6.58937832439878\\
39	6.59148671433531\\
40	6.59073721108426\\
41	6.59093404750414\\
42	6.59087742004054\\
43	6.58928541970452\\
44	6.59058564316579\\
45	6.59132828384597\\
46	6.59058851035584\\
47	6.59110263530547\\
48	6.59147553116448\\
49	6.58668393005854\\
50	6.59062363782734\\
51	6.59101515662169\\
52	6.59176251951947\\
53	6.58857868597947\\
54	6.59013423143927\\
55	6.59111909588808\\
56	6.5911457483472\\
57	6.59003852850639\\
58	6.59143982256265\\
59	6.59048536674524\\
60	6.59039176821099\\
61	6.59137154542976\\
62	6.5912535226034\\
63	6.58981582043099\\
64	6.5908918306807\\
65	6.59083300972301\\
66	6.59079121501016\\
67	6.59030883873119\\
68	6.59129857301594\\
69	6.59128229744416\\
70	6.59095332658505\\
71	6.59023081358638\\
72	6.59081865364851\\
73	6.59094026619255\\
74	6.59040142378658\\
75	6.59163684757572\\
76	6.59110676104285\\
77	6.59119701707623\\
78	6.58989897914213\\
79	6.5918191325617\\
80	6.59000392128006\\
81	6.59005624567473\\
82	6.59175293524183\\
83	6.5908999928299\\
84	6.59095496039431\\
85	6.58972726779356\\
86	6.59027090444657\\
87	6.59089103275845\\
88	6.59083663036135\\
89	6.59079717575128\\
90	6.59042755723988\\
91	6.59032670713473\\
92	6.59144685236426\\
93	6.5909022764905\\
94	6.59173282877921\\
95	6.58975668648628\\
96	6.58944811703204\\
97	6.5909704505714\\
98	6.58887279368135\\
99	6.59056883095371\\
100	6.59080516986559\\
};
\addplot [color=red, dashdotted, line width=2.0pt, forget plot]
  table[row sep=crcr]{%
1	0.962011577752915\\
2	1.39512833468571\\
3	1.90148489293405\\
4	1.51648055618043\\
5	2.15248666521931\\
6	1.55704328138457\\
7	1.66059810154288\\
8	1.52166767379182\\
9	1.60756285452165\\
10	1.57032375887657\\
11	1.57143632706561\\
12	1.54570012712595\\
13	1.53062124352496\\
14	1.54268502801898\\
15	1.53021067085092\\
16	1.55206916663035\\
17	1.54627922395726\\
18	1.51390174557473\\
19	1.50375418924415\\
20	1.51530747681262\\
21	1.52196737407071\\
22	1.50354898705708\\
23	1.51332011123504\\
24	1.52232901674692\\
25	1.52509841996306\\
26	1.52238627498701\\
27	1.5324730708447\\
28	1.51163420208493\\
29	1.51812829528253\\
30	1.52774156845851\\
31	1.52183786551656\\
32	1.5016521070917\\
33	1.49397777104937\\
34	1.51952082470017\\
35	1.52980925719026\\
36	1.52606705237248\\
37	1.52515602796127\\
38	1.50973845097226\\
39	1.52151282867688\\
40	1.53441438633646\\
41	1.52415222096287\\
42	1.51132177031528\\
43	1.5246991042189\\
44	1.50540365539768\\
45	1.50579300125202\\
46	1.5332974431763\\
47	1.52602816377732\\
48	1.52498943304033\\
49	1.52505167508116\\
50	1.52936828998447\\
51	1.52360673777914\\
52	1.53058250934036\\
53	1.52357239030125\\
54	1.52025395181835\\
55	1.53499050028086\\
56	1.51742772358661\\
57	1.51441659560209\\
58	1.51420820595266\\
59	1.5262997058346\\
60	1.53616242827644\\
61	1.52204501826377\\
62	1.51266612225863\\
63	1.5190132195052\\
64	1.52431344447975\\
65	1.51530419794324\\
66	1.52201531315481\\
67	1.51061899113799\\
68	1.51304669429054\\
69	1.52532175891914\\
70	1.53260170132874\\
71	1.52225905085154\\
72	1.50963037140345\\
73	1.52293726685294\\
74	1.52308518163793\\
75	1.4950927590338\\
76	1.50963368680608\\
77	1.51272511429317\\
78	1.51042633785373\\
79	1.50938242359903\\
80	1.5271682127886\\
81	1.5117185420591\\
82	1.52766007832391\\
83	1.51122470668482\\
84	1.51941129140273\\
85	1.51728704039085\\
86	1.52688068863037\\
87	1.50792216758675\\
88	1.51772700059157\\
89	1.51734457771922\\
90	1.51448813532051\\
91	1.53151859073494\\
92	1.52116535307287\\
93	1.51578575724323\\
94	1.52090562538418\\
95	1.51986831699292\\
96	1.50477239990879\\
97	1.52614502228845\\
98	1.4935503656753\\
99	1.51752656157145\\
100	1.53496968961844\\
};
\addplot [color=mycolor1, dotted, line width=3.0pt, forget plot]
  table[row sep=crcr]{%
1	4.6864935622248\\
2	4.8994383088347\\
3	4.56109313648738\\
4	4.90229035547094\\
5	2.9314300803944\\
6	3.2784332597462\\
7	2.46963199592331\\
8	2.30125152271061\\
9	2.01471564245944\\
10	2.17712372051914\\
11	1.78625426786003\\
12	1.85353123296186\\
13	1.99614342802232\\
14	1.65287461428729\\
15	1.49633284810289\\
16	1.48610098565703\\
17	1.45156396181048\\
18	1.47406316737917\\
19	1.46740662240884\\
20	1.50873895369633\\
21	1.45078965236607\\
22	1.45245977605543\\
23	1.71819478818221\\
24	2.08550862826438\\
25	1.5059355521904\\
26	1.47394259570511\\
27	1.75330416344819\\
28	1.38731873977995\\
29	1.70173662068909\\
30	1.46762136054097\\
31	1.50989567888606\\
32	1.52057424298062\\
33	1.52296770227207\\
34	1.51890013441686\\
35	1.51167634567613\\
36	1.52013496455013\\
37	1.53042250209518\\
38	1.51614460001237\\
39	1.51068803218364\\
40	1.52349879587756\\
41	1.5292094373918\\
42	1.52723516835788\\
43	1.50219367701738\\
44	1.50889764085272\\
45	1.53357949409282\\
46	1.5063426671682\\
47	1.52330362351915\\
48	1.52806722140564\\
49	1.48492641984674\\
50	1.50496302104093\\
51	1.51726658029489\\
52	1.51662375013351\\
53	1.48434709843271\\
54	1.51950876152417\\
55	1.51591219634932\\
56	1.52244113525343\\
57	1.51228811211186\\
58	1.51907869642937\\
59	1.52153657235561\\
60	1.52002671698928\\
61	1.52644490901635\\
62	1.51264572580944\\
63	1.5160114727441\\
64	1.52871982636947\\
65	1.52455740252168\\
66	1.52881396551966\\
67	1.52041366900985\\
68	1.51920423367879\\
69	1.52572400377125\\
70	1.52896228076713\\
71	1.50694486971597\\
72	1.5191129634431\\
73	1.53589170731109\\
74	1.51554168107362\\
75	1.52114588847516\\
76	1.5073119787335\\
77	1.53177729643885\\
78	1.49938469465641\\
79	1.52493589459921\\
80	1.51681359788873\\
81	1.50273815004913\\
82	1.53692226281642\\
83	1.51074892880607\\
84	1.53056388532285\\
85	1.52037492759945\\
86	1.51626804395327\\
87	1.5065737477751\\
88	1.52211768630356\\
89	1.52115051067854\\
90	1.521429717609\\
91	1.50829188644711\\
92	1.53568509407678\\
93	1.51780971278461\\
94	1.5267654175131\\
95	1.51765382436885\\
96	1.51725665471583\\
97	1.51173138823043\\
98	1.50648110697348\\
99	1.50703573590319\\
100	1.5100847459919\\
};
\end{axis}
\end{tikzpicture}%
  }
  \subfloat[legend]{
    \raisebox{4em}{\definecolor{mycolor1}{rgb}{0.00000,0.75000,0.75000}%

\begin{tikzpicture}

  \begin{axis}[%
    hide axis,
    xmin=10,
    xmax=50,
    ymin=0,
    ymax=0.4,
    legend style={draw=white!15!black,legend cell align=left}
    ]
    \addlegendimage{color=blue, line width=2.0pt}
    \addlegendentry{Bayes $U$};
    \addlegendimage{color=black!50!green, dashed, line width=3.0pt};
    \addlegendentry{Marginal $U$};
    \addlegendimage{color=red, line width=2.0pt, dashdotted}
    \addlegendentry{Bayes $F$};
    \addlegendimage{color=mycolor1, dotted, line width=3.0pt};
    \addlegendentry{Marginal $F$};
    
  \end{axis}
\end{tikzpicture}}
  }
  \caption{\textbf{Synthetic data, utility-fairness trade-off.} This plot is generated from the same data as Figure~\ref{fig_exp_1}. However, now we are plotting the utility and fairness of each individual policy separately. In all cases, it can be seen that the Bayesian policy achieves the same utility as the non-Bayesian policy, while achieving a lower fairness violation.}
  \label{fig_exp_1:tradeoff}
\end{figure*}

\begin{figure*}
\centering
  \subfloat[$\lambda=0$]{
%
%
\begin{tikzpicture}

\begin{axis}[%
width=0.951\fwidth,
height=0.75\fwidth,
at={(0\fwidth,0\fwidth)},
scale only axis,
xmode=log,
xmin=1,
xmax=100,
xminorticks=true,
xlabel={$t \times 10$},
ymin=0.45,
ymax=0.7,
ylabel={V},
axis background/.style={fill=white}
]
\addplot [color=blue, line width=2.0pt, forget plot]
  table[row sep=crcr]{%
1	0.457219642968273\\
2	0.592769351348169\\
3	0.628816705720447\\
4	0.62505185351612\\
5	0.628217785699876\\
6	0.639218851111152\\
7	0.630873927371614\\
8	0.636227797683053\\
9	0.638026015044111\\
10	0.636823616047142\\
11	0.632410660316369\\
12	0.633597617267562\\
13	0.636527784056834\\
14	0.637892469025015\\
15	0.633703787028147\\
16	0.636328078671473\\
17	0.637880712599377\\
18	0.634684165215899\\
19	0.629978414138595\\
20	0.628159353629191\\
21	0.637486382196772\\
22	0.630647724209231\\
23	0.636203699395604\\
24	0.632324272807024\\
25	0.643830720098772\\
26	0.633034238724461\\
27	0.635123838924277\\
28	0.636057361395139\\
29	0.642104432395723\\
30	0.63957063775151\\
31	0.636676944667785\\
32	0.638328034343207\\
33	0.642713754763223\\
34	0.638813995795796\\
35	0.637222326954388\\
36	0.641876694490855\\
37	0.637397077631842\\
38	0.641452323880008\\
39	0.639449683802413\\
40	0.63998585281562\\
41	0.645562739043969\\
42	0.638373912813148\\
43	0.63869610092111\\
44	0.639248573866283\\
45	0.64553183974159\\
46	0.64574217834242\\
47	0.647159828876256\\
48	0.642734787921472\\
49	0.645351749272455\\
50	0.647986777824105\\
51	0.648498084835137\\
52	0.641666027891812\\
53	0.649788940171897\\
54	0.644719769480268\\
55	0.642816500136809\\
56	0.648563504192992\\
57	0.647484270514486\\
58	0.646522917641747\\
59	0.646362741362458\\
60	0.64943324763383\\
61	0.650725119238089\\
};
\addplot [color=red, dashed, line width=3.0pt, forget plot]
  table[row sep=crcr]{%
1	0.50533782292246\\
2	0.59699112902373\\
3	0.625888847773672\\
4	0.624084738953654\\
5	0.630607004491168\\
6	0.632831760088705\\
7	0.636757087308418\\
8	0.643065048813731\\
9	0.635831074924727\\
10	0.640172268918583\\
11	0.637618500393285\\
12	0.637063543459606\\
13	0.637801195651245\\
14	0.639132776275881\\
15	0.635237491522697\\
16	0.63711453971561\\
17	0.635829219303754\\
18	0.631722475647075\\
19	0.634325927457996\\
20	0.631533283036592\\
21	0.634408642218423\\
22	0.633190718318277\\
23	0.634003706915972\\
24	0.637844298466811\\
25	0.632637216101121\\
26	0.630133780104625\\
27	0.629845209068525\\
28	0.633468263655565\\
29	0.636356158428604\\
30	0.641203029255701\\
31	0.638991861329888\\
32	0.640709523818283\\
33	0.638816379513034\\
34	0.63827201778754\\
35	0.639309083888535\\
36	0.640322622034011\\
37	0.636038032575251\\
38	0.641624574388145\\
39	0.640674600649206\\
40	0.636610965813643\\
41	0.637518273518509\\
42	0.637490556877938\\
43	0.642085811731694\\
44	0.642900366689167\\
45	0.64514581492819\\
46	0.644982417065964\\
47	0.645169810588366\\
48	0.646840576097694\\
49	0.649222992097267\\
50	0.645768342344984\\
51	0.646975527164982\\
52	0.648342200211879\\
53	0.646810289670806\\
54	0.648718371906363\\
55	0.649127848411779\\
56	0.650500132527732\\
57	0.650312507421561\\
58	0.650414667141449\\
59	0.648262622639421\\
60	0.646204756681934\\
61	0.64655970208432\\
};
\end{axis}
\end{tikzpicture}%
  }
  \subfloat[$\lambda=0.5$]{
%
%
\begin{tikzpicture}

\begin{axis}[%
width=0.951\fwidth,
height=0.75\fwidth,
at={(0\fwidth,0\fwidth)},
scale only axis,
xmode=log,
xmin=1,
xmax=100,
xminorticks=true,
xlabel={$t \times 10$},
ymin=-1.4,
ymax=0,
axis background/.style={fill=white}
]
\addplot [color=blue, line width=2.0pt, forget plot]
  table[row sep=crcr]{%
1	-1.09622117340752\\
2	-0.439738831631238\\
3	-0.477010262732245\\
4	-0.459566994825697\\
5	-0.267438768251134\\
6	-0.261884198822098\\
7	-0.314800806678737\\
8	-0.323934739140698\\
9	-0.204202454295437\\
10	-0.220866546747302\\
11	-0.224784320984486\\
12	-0.174850401323677\\
13	-0.204336810453115\\
14	-0.279220020694667\\
15	-0.304215110722433\\
16	-0.111899168962922\\
17	-0.196698433914746\\
18	-0.212212607848973\\
19	-0.234718297036081\\
20	-0.322668180703975\\
21	-0.286071421580403\\
22	-0.259499649273306\\
23	-0.316375891732036\\
24	-0.273840483206877\\
25	-0.355624719979204\\
26	-0.379757992540423\\
27	-0.288379194513057\\
28	-0.259927407512447\\
29	-0.331340818267421\\
30	-0.342494252379114\\
31	-0.343352143939611\\
32	-0.415313928296665\\
33	-0.422017029459941\\
34	-0.335179784452087\\
35	-0.440060447624601\\
36	-0.514029990888593\\
37	-0.43325890065029\\
38	-0.381527909556519\\
39	-0.357585870240933\\
40	-0.342513238337931\\
41	-0.351232807159593\\
42	-0.491960446194131\\
43	-0.414521352705201\\
44	-0.426725954088604\\
45	-0.442441956965056\\
46	-0.48463572780352\\
47	-0.539087119210198\\
48	-0.444488011314339\\
49	-0.50689510900555\\
50	-0.577616030570053\\
51	-0.438450894005509\\
52	-0.538629948619004\\
53	-0.427097161747581\\
54	-0.449574311025686\\
55	-0.549606168957645\\
56	-0.592101430233288\\
57	-0.633608692550753\\
58	-0.517199531938076\\
59	-0.606102743738442\\
60	-0.513954328614164\\
61	-0.657646711598371\\
};
\addplot [color=red, dashed, line width=3.0pt, forget plot]
  table[row sep=crcr]{%
1	-0.138097439995355\\
2	-0.748893406957869\\
3	-0.82460686564522\\
4	-0.686300787106743\\
5	-0.590927424830682\\
6	-0.644914108373764\\
7	-0.621536393076511\\
8	-0.732227183784745\\
9	-0.678985832465308\\
10	-0.677538882070215\\
11	-0.649492934072179\\
12	-0.626953188541119\\
13	-0.514430272564223\\
14	-0.552610078526062\\
15	-0.651427496731807\\
16	-0.610373827755142\\
17	-0.618730957216695\\
18	-0.62662953336171\\
19	-0.647029602765731\\
20	-0.771071949704786\\
21	-0.881634832338758\\
22	-0.787624432290519\\
23	-0.749027999750425\\
24	-0.768887901516103\\
25	-0.766265274524894\\
26	-0.754819034015115\\
27	-0.682709696959525\\
28	-0.755854713298122\\
29	-0.795893759226808\\
30	-0.87634212964454\\
31	-0.837761717305747\\
32	-0.800158278551069\\
33	-0.788079320268578\\
34	-0.757587585669062\\
35	-0.735977528675732\\
36	-0.792743783163313\\
37	-0.826465829671237\\
38	-0.79949075466688\\
39	-0.763326650871933\\
40	-0.812995026125147\\
41	-0.841258496638301\\
42	-0.811392209795322\\
43	-0.925172456274953\\
44	-0.981952927632142\\
45	-0.903004676499468\\
46	-0.985923064692685\\
47	-1.00004897431729\\
48	-0.981936952406439\\
49	-0.955363797496084\\
50	-0.95804465183494\\
51	-1.02561492499815\\
52	-1.00549051871228\\
53	-0.993986917287425\\
54	-1.03469402307911\\
55	-1.0713271054811\\
56	-1.11079025820582\\
57	-1.12157352569169\\
58	-1.13388551393297\\
59	-1.10161020503734\\
60	-1.10864470943151\\
61	-1.03972034327087\\
};
\end{axis}
\end{tikzpicture}%
  }
  \subfloat[$\lambda=1$]{
%
%
\begin{tikzpicture}

\begin{axis}[%
width=0.951\fwidth,
height=0.75\fwidth,
at={(0\fwidth,0\fwidth)},
scale only axis,
xmode=log,
xmin=1,
xmax=100,
xminorticks=true,
xlabel={$t \times 10$},
ymin=-2,
ymax=-0,
axis background/.style={fill=white}
]
\addplot [color=blue, line width=2.0pt, forget plot]
  table[row sep=crcr]{%
1	-0.00232770753700901\\
2	-1.97313695873102e-05\\
3	-2.37767934892516e-06\\
4	-5.41730805759902e-07\\
5	-7.32149168349694e-08\\
6	-2.39811834130854e-08\\
7	-5.18082968701592e-09\\
8	-5.80788316935145e-09\\
9	-1.76256829256548e-09\\
10	-1.38567073595432e-09\\
11	-2.7766696901406e-10\\
12	-2.3480673996106e-10\\
13	-8.92285754128819e-11\\
14	-1.19961233710043e-10\\
15	-2.67771625177127e-11\\
16	-2.1807283694217e-11\\
17	-9.888893758119e-12\\
18	-2.04587275550225e-11\\
19	-1.00294880288958e-11\\
20	-7.27273007814714e-12\\
21	-2.18409079376077e-12\\
22	-1.90552725474449e-12\\
23	-3.08552681136802e-12\\
24	-1.04407443807789e-12\\
25	-6.89534351010657e-13\\
26	-7.04452116553748e-13\\
27	-5.33265371320719e-13\\
28	-2.21733300835158e-13\\
29	-1.09917521624577e-13\\
30	-1.97092353458009e-13\\
31	-1.06901072975012e-13\\
32	-1.02563119096526e-13\\
33	-8.84436177317179e-14\\
34	-3.79100124163196e-14\\
35	-4.2067887937253e-14\\
36	-2.70027547549659e-14\\
37	-2.66635307650848e-14\\
38	-2.67287785600447e-14\\
39	-2.64342999326352e-14\\
40	-2.65179986462852e-14\\
41	-2.68229872585042e-14\\
42	-2.66997278710527e-14\\
43	-2.69707767201082e-14\\
44	-2.67044068810534e-14\\
45	-2.67210612428623e-14\\
46	-2.65715988911875e-14\\
47	-2.65002787170287e-14\\
48	-2.64873004782108e-14\\
49	-2.61218989350948e-14\\
50	-2.64084264642284e-14\\
51	-2.65721748735916e-14\\
52	-2.66424481150276e-14\\
53	-2.65125945760817e-14\\
54	-2.61390327175519e-14\\
55	-2.63298827930954e-14\\
56	-2.66575354658841e-14\\
57	-2.65806977191069e-14\\
58	-2.63970270948243e-14\\
59	-2.63521868646625e-14\\
60	-2.62708242678811e-14\\
61	-2.67337836777301e-14\\
};
\addplot [color=red, dashed, line width=3.0pt, forget plot]
  table[row sep=crcr]{%
1	-1.61749222996759\\
2	-0.501016985730073\\
3	-0.230141055920469\\
4	-0.211363679210412\\
5	-0.180351722603156\\
6	-0.199490315348852\\
7	-0.167503225130075\\
8	-0.197680427138255\\
9	-0.168326964955281\\
10	-0.108154945624742\\
11	-0.121796007045675\\
12	-0.123244092080481\\
13	-0.120059074298961\\
14	-0.12768082663036\\
15	-0.118695747205944\\
16	-0.14620525838291\\
17	-0.156154755993665\\
18	-0.15210970699618\\
19	-0.171018095781681\\
20	-0.159637204861449\\
21	-0.147168529193469\\
22	-0.282154360398568\\
23	-0.245469097598784\\
24	-0.252947839856794\\
25	-0.242482705863201\\
26	-0.244915288625687\\
27	-0.224359689620484\\
28	-0.214533978554422\\
29	-0.216114585568234\\
30	-0.208860180236654\\
31	-0.221529291908727\\
32	-0.205563531932023\\
33	-0.233055721377914\\
34	-0.23298981535382\\
35	-0.215062245554676\\
36	-0.226954153936872\\
37	-0.222039885512241\\
38	-0.198543948266946\\
39	-0.191065849011378\\
40	-0.189777971418282\\
41	-0.188793398961995\\
42	-0.190732460573578\\
43	-0.185246409681183\\
44	-0.184603807555449\\
45	-0.188647955865062\\
46	-0.191033781025243\\
47	-0.179745728112148\\
48	-0.178803266307534\\
49	-0.163926246089407\\
50	-0.173423292392264\\
51	-0.179302094659936\\
52	-0.181333239906458\\
53	-0.177046092214897\\
54	-0.184951112414626\\
55	-0.181338109208384\\
56	-0.147815635344293\\
57	-0.149881515239679\\
58	-0.14463523042435\\
59	-0.144488753661275\\
60	-0.150296785419916\\
61	-0.157708726945062\\
};
\end{axis}
\end{tikzpicture}%
  }
  \subfloat[legend]{
    \raisebox{4em}{\begin{tikzpicture}

  \begin{axis}[%
    hide axis,
    xmin=10,
    xmax=50,
    ymin=0,
    ymax=0.4,
    legend style={draw=white!15!black,legend cell align=left}
    ]
    \addlegendimage{color=mycolor1, line width=2.0pt}
    \addlegendentry{Bayes};
    \addlegendimage{color=mycolor2, dashed, line width=3.0pt};
    \addlegendentry{Marginal};
  \end{axis}
\end{tikzpicture}}
  }

  \caption{\textbf{COMPAS dataset.} Demonstration of balance on the COMPAS dataset. The plots show the value measured on the holdout set for the \textbf{Bayes} and \textbf{Marginal} balance.
  Figures (a-c) show the utility achieved under different choices of $\lambda$ as we we observe each of the  6,000 training data points. Utility and fairness are measured on the empirical distribution of the remaining data and it can be seen that the Bayesian approach dominates as soon as fairness becomes important, i.e. $\lambda > 0$.  }
  \label{fig:compas-dbn}
\end{figure*}

\begin{figure*}
\centering
  \subfloat[$\lambda=0$]{
%
%
\begin{tikzpicture}

\begin{axis}[%
width=0.951\fwidth,
height=0.75\fwidth,
at={(0\fwidth,0\fwidth)},
scale only axis,
xmode=log,
xmin=1,
xmax=100,
xminorticks=true,
xlabel style={font=\color{white!15!black}},
xlabel={t},
ymin=0.5,
ymax=0.64,
ylabel style={font=\color{white!15!black}},
ylabel={V},
axis background/.style={fill=white}
]
\addplot [color=blue, line width=2.0pt, forget plot]
  table[row sep=crcr]{%
1	0.533404412801252\\
2	0.567059367237681\\
3	0.574983684728794\\
4	0.595440941591605\\
5	0.597329784902714\\
6	0.598376899476387\\
7	0.608381899604062\\
8	0.614299903055019\\
9	0.614682328515722\\
10	0.622047920597586\\
11	0.624110283420211\\
12	0.623040076545587\\
13	0.622922815525939\\
14	0.622693703044924\\
15	0.624972493335543\\
16	0.623768267741923\\
17	0.625076223581217\\
18	0.624417108492841\\
19	0.624469368091222\\
20	0.626122024040929\\
21	0.627990230164747\\
22	0.627328710101433\\
23	0.627229984116765\\
24	0.627284500665436\\
25	0.628167660760683\\
26	0.62745907557843\\
27	0.627233400174376\\
28	0.626561170176999\\
29	0.626140421275828\\
30	0.627206627096838\\
31	0.626985313402765\\
32	0.627098543124935\\
33	0.626653517790801\\
34	0.62669323165553\\
35	0.626164136918594\\
36	0.625226096500675\\
37	0.624375614738648\\
38	0.626043052672986\\
39	0.625037401619658\\
40	0.624754173295781\\
41	0.625734326489491\\
42	0.624858959455676\\
43	0.624748862220927\\
44	0.625474170196668\\
45	0.624772055502031\\
46	0.624434789925371\\
47	0.624134265867323\\
48	0.624465786011358\\
49	0.62488377850369\\
50	0.624799243863898\\
51	0.623188769161788\\
52	0.622787015457619\\
53	0.625093107691527\\
54	0.62416250160529\\
55	0.624845855606283\\
56	0.625039701294087\\
57	0.625218860435963\\
58	0.626623043783051\\
59	0.626765171689463\\
60	0.626873002508487\\
};
\addplot [color=red, dashed, line width=3.0pt, forget plot]
  table[row sep=crcr]{%
1	0.537691550449067\\
2	0.52692425944197\\
3	0.51952806147473\\
4	0.516060917995437\\
5	0.525734464151558\\
6	0.526857982759685\\
7	0.527688486772251\\
8	0.553006242367054\\
9	0.580639537966924\\
10	0.583055435798432\\
11	0.583845485337254\\
12	0.585535021854244\\
13	0.589716220033718\\
14	0.591727972382982\\
15	0.592321109916687\\
16	0.593212336283605\\
17	0.59331788850473\\
18	0.592639117087132\\
19	0.594276492899578\\
20	0.593631864589812\\
21	0.593134373439542\\
22	0.592526975573227\\
23	0.59343823978082\\
24	0.59471503376963\\
25	0.594382031634667\\
26	0.59347186036185\\
27	0.594295880998185\\
28	0.592974355499971\\
29	0.594610054709994\\
30	0.594316274450942\\
31	0.59587687254927\\
32	0.594384171167264\\
33	0.59368169358206\\
34	0.593672061191477\\
35	0.59309575835021\\
36	0.591830149985295\\
37	0.592248365155305\\
38	0.592785655298101\\
39	0.592284775003569\\
40	0.591763395381349\\
41	0.591416781456517\\
42	0.593000957180953\\
43	0.591369641506618\\
44	0.592580298295271\\
45	0.592677941655292\\
46	0.591707641469623\\
47	0.592929785718232\\
48	0.591688464121018\\
49	0.591343754331541\\
50	0.589724575501398\\
51	0.590358577218326\\
52	0.590327436945083\\
53	0.590665615799822\\
54	0.595774337247164\\
55	0.595592315397091\\
56	0.595804057900644\\
57	0.59578874458756\\
58	0.596457066440588\\
59	0.597530717491227\\
60	0.597648444447298\\
};
\end{axis}
\end{tikzpicture}%
  }
  \subfloat[$\lambda=0.5$]{
%
%
\begin{tikzpicture}

\begin{axis}[%
width=0.951\fwidth,
height=0.75\fwidth,
at={(0\fwidth,0\fwidth)},
scale only axis,
xmode=log,
xmin=1,
xmax=100,
xminorticks=true,
xlabel style={font=\color{white!15!black}},
xlabel={t},
ymin=-2,
ymax=-0,
axis background/.style={fill=white}
]
\addplot [color=blue, line width=2.0pt, forget plot]
  table[row sep=crcr]{%
1	-0.139557730363354\\
2	-0.238920667169255\\
3	-0.0997916326764938\\
4	-0.182229548458017\\
5	-0.205267446075831\\
6	-0.279826712648142\\
7	-0.332249305176689\\
8	-0.323237535828966\\
9	-0.334848601807552\\
10	-0.312990041492882\\
11	-0.324619975130561\\
12	-0.328870423071226\\
13	-0.313326571427902\\
14	-0.319808076558834\\
15	-0.30709006893331\\
16	-0.317914410723469\\
17	-0.331045983578041\\
18	-0.342385254226511\\
19	-0.358246869022327\\
20	-0.369248314322897\\
21	-0.377973473224499\\
22	-0.415440383538921\\
23	-0.38990705668132\\
24	-0.36686729244785\\
25	-0.375042658748737\\
26	-0.366006828119552\\
27	-0.383213880975782\\
28	-0.386114407046102\\
29	-0.380250046047834\\
30	-0.399759884881641\\
31	-0.437509783568944\\
32	-0.439357807489347\\
33	-0.443493819937183\\
34	-0.43823346019647\\
35	-0.448743781654721\\
36	-0.46057428729186\\
37	-0.465056676021326\\
38	-0.455095054216753\\
39	-0.463602980734887\\
40	-0.497134299702848\\
41	-0.456405475996486\\
42	-0.461607953659762\\
43	-0.467431543238734\\
44	-0.481699178907957\\
45	-0.497488409531242\\
46	-0.491566651303831\\
47	-0.489462737779279\\
48	-0.508962806633321\\
49	-0.497498682067592\\
50	-0.49354033957002\\
51	-0.497912056057912\\
52	-0.496093017678467\\
53	-0.505412512936155\\
54	-0.503382028933859\\
55	-0.52416214442533\\
56	-0.522603926762207\\
57	-0.544550008258789\\
58	-0.532153628446791\\
59	-0.520440892587704\\
60	-0.532834541878595\\
};
\addplot [color=red, dashed, line width=3.0pt, forget plot]
  table[row sep=crcr]{%
1	-0.759593109224573\\
2	-0.851910476827388\\
3	-0.931248398068395\\
4	-0.899099878562075\\
5	-0.7313280766324\\
6	-0.866207448068707\\
7	-0.990998285997023\\
8	-1.0272085536674\\
9	-1.04372995832914\\
10	-1.07714261034557\\
11	-1.07998626920429\\
12	-1.1166916978057\\
13	-1.09921640121892\\
14	-1.11870116207787\\
15	-1.14149626270435\\
16	-1.13602637450519\\
17	-1.19256259507435\\
18	-1.18855636890274\\
19	-1.16210813506113\\
20	-1.19312997954371\\
21	-1.16517236023971\\
22	-1.09072259193678\\
23	-1.07693553149219\\
24	-1.09320599058905\\
25	-1.12532578022789\\
26	-1.14329006175479\\
27	-1.17276265772882\\
28	-1.15208234230952\\
29	-1.18083392582687\\
30	-1.20703772222562\\
31	-1.22005481633382\\
32	-1.21874241865254\\
33	-1.26208489068812\\
34	-1.22937949908594\\
35	-1.26117480001724\\
36	-1.2690068818918\\
37	-1.26700389402755\\
38	-1.27433797411748\\
39	-1.28632957244659\\
40	-1.31543453440127\\
41	-1.32705144995612\\
42	-1.33602820271486\\
43	-1.3294527996667\\
44	-1.31996093710498\\
45	-1.32018176557428\\
46	-1.32494053379554\\
47	-1.33735592039034\\
48	-1.34339401770564\\
49	-1.33957969692087\\
50	-1.35053901850318\\
51	-1.34476886404259\\
52	-1.37641480485034\\
53	-1.35196334938376\\
54	-1.38384298807204\\
55	-1.40343614175046\\
56	-1.38977437984097\\
57	-1.37450630646827\\
58	-1.39440827751891\\
59	-1.41230095819747\\
60	-1.42020708154538\\
};
\end{axis}
\end{tikzpicture}%
  }
  \subfloat[$\lambda=1$]{
%
%
\begin{tikzpicture}

\begin{axis}[%
width=0.951\fwidth,
height=0.75\fwidth,
at={(0\fwidth,0\fwidth)},
scale only axis,
xmode=log,
xmin=1,
xmax=100,
xminorticks=true,
xlabel style={font=\color{white!15!black}},
xlabel={t},
ymin=-0.7,
ymax=-0,
axis background/.style={fill=white}
]
\addplot [color=blue, line width=2.0pt, forget plot]
  table[row sep=crcr]{%
1	-0.015991389253571\\
2	-8.07710251594484e-14\\
3	-2.59936539286564e-14\\
4	-2.60348785986828e-14\\
5	-2.60224200315188e-14\\
6	-2.59820525418944e-14\\
7	-2.59965938614317e-14\\
8	-2.60060611148019e-14\\
9	-2.59506757310226e-14\\
10	-2.58498969706858e-14\\
11	-2.58919987094477e-14\\
12	-2.59483989064604e-14\\
13	-2.595303495495e-14\\
14	-2.59150661948695e-14\\
15	-2.59299891535716e-14\\
16	-2.59070821300713e-14\\
17	-2.58991891382556e-14\\
18	-2.59151399206172e-14\\
19	-2.59036430407802e-14\\
20	-2.59284539232954e-14\\
21	-2.58943058916708e-14\\
22	-2.58760956319817e-14\\
23	-2.59459746304027e-14\\
24	-2.59175295022054e-14\\
25	-2.59825035699981e-14\\
26	-2.58938635371844e-14\\
27	-2.58310058320324e-14\\
28	-2.58420733678091e-14\\
29	-2.58783117412223e-14\\
30	-2.58440162581022e-14\\
31	-2.58429971080601e-14\\
32	-2.58351474843313e-14\\
33	-2.5869000612965e-14\\
34	-2.59091421141991e-14\\
35	-2.58739836061497e-14\\
36	-2.59131536622373e-14\\
37	-2.5829626726869e-14\\
38	-2.58271634195331e-14\\
39	-2.59047142325266e-14\\
40	-2.58443241715192e-14\\
41	-2.58396707757949e-14\\
42	-2.58215472522796e-14\\
43	-2.57845499373457e-14\\
44	-2.57547777456893e-14\\
45	-2.58627556084514e-14\\
46	-2.58798513083072e-14\\
47	-2.58572825558847e-14\\
48	-2.58443111610931e-14\\
49	-2.58880695607746e-14\\
50	-2.60105757326481e-14\\
51	-2.59875602889306e-14\\
52	-2.60626044265013e-14\\
53	-2.58993130083538e-14\\
54	-2.5891459860968e-14\\
55	-2.58690168759975e-14\\
56	-2.59416063798498e-14\\
57	-2.59585459545927e-14\\
58	-2.5878549181498e-14\\
59	-2.57975375951699e-14\\
60	-2.59211019483637e-14\\
};
\addplot [color=red, dashed, line width=3.0pt, forget plot]
  table[row sep=crcr]{%
1	-0.677360003940489\\
2	-0.319188261527385\\
3	-0.238573589451842\\
4	-0.200742996733391\\
5	-0.188801791973792\\
6	-0.157848169305053\\
7	-0.147363581418297\\
8	-0.151590768638053\\
9	-0.164301512639608\\
10	-0.198659955535581\\
11	-0.185472157543024\\
12	-0.189488927671266\\
13	-0.156665545344133\\
14	-0.141346012616817\\
15	-0.142828076379696\\
16	-0.143980205895357\\
17	-0.14494774267628\\
18	-0.145557100703442\\
19	-0.14609092309342\\
20	-0.137324402147981\\
21	-0.134764797162596\\
22	-0.140554448009126\\
23	-0.142899121135694\\
24	-0.143712900049346\\
25	-0.138774590312461\\
26	-0.13836037206626\\
27	-0.135282806601259\\
28	-0.149453779259262\\
29	-0.147470735870606\\
30	-0.154207490480793\\
31	-0.155796490189568\\
32	-0.158732344859596\\
33	-0.163488711963298\\
34	-0.166258826196893\\
35	-0.172242532053593\\
36	-0.168635982124319\\
37	-0.170553091320478\\
38	-0.171303646165357\\
39	-0.185837521836149\\
40	-0.180259389616782\\
41	-0.174263442475532\\
42	-0.162013378918257\\
43	-0.167114368469105\\
44	-0.169568455307836\\
45	-0.173271142995335\\
46	-0.191054413001525\\
47	-0.202850261177353\\
48	-0.209494088522992\\
49	-0.213131370871494\\
50	-0.211058535404276\\
51	-0.207988868337055\\
52	-0.20444195101075\\
53	-0.209951638264728\\
54	-0.218554354326512\\
55	-0.227181804255464\\
56	-0.233332843632866\\
57	-0.240435706083566\\
58	-0.246103352664639\\
59	-0.243526218854078\\
60	-0.236038905669898\\
};
\end{axis}
\end{tikzpicture}%
  }
  \subfloat[legend]{
    \raisebox{4em}{\begin{tikzpicture}

  \begin{axis}[%
    hide axis,
    xmin=10,
    xmax=50,
    ymin=0,
    ymax=0.4,
    legend style={draw=white!15!black,legend cell align=left}
    ]
    \addlegendimage{color=mycolor1, line width=2.0pt}
    \addlegendentry{Bayes};
    \addlegendimage{color=mycolor2, dashed, line width=3.0pt};
    \addlegendentry{Marginal};
  \end{axis}
\end{tikzpicture}}
  }
\caption{\textbf{Sequential allocation.} Performance measured with respect to the empirical model of the holdout COMPAS data, when the DM's actions affect which data will be seen. This means that whenever a prisoner was not released, then the dependent variable $y$ will remain unseen. For that reason, the performance of the Bayesian approach dominates the classical approach even when fairness is not an issue, i.e. $\lambda = 0$.}
\label{fig:sequential-allocation}
\end{figure*}
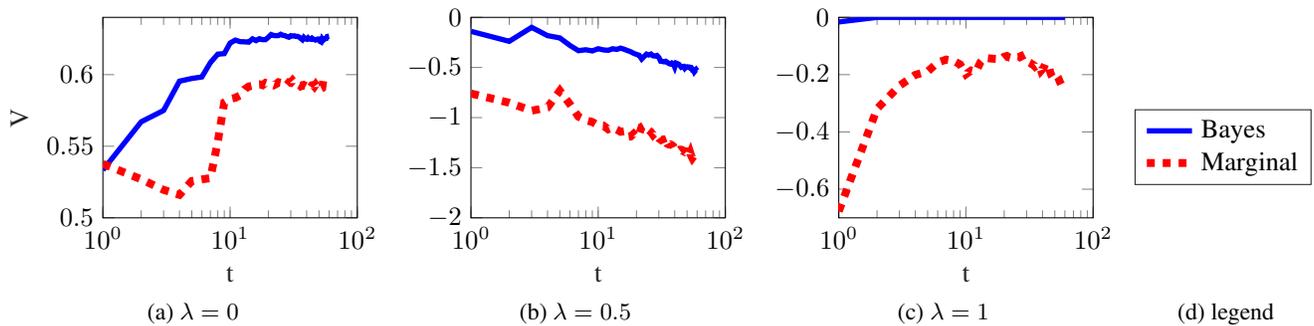

\subsection{Sequential allocation.}
\label{sec:sequential-experiment}
Here the DM, at each time $t$ observes $x_t$ and has a choice of actions $a_t \in \{0,1\}$. The action both predicts $y_t \in \{0,1\}$ and has the following side-effect: the DM only observes $y_t, z_t$ after he makes the choice $a_t=1$, otherwise only $x_t$. 
The utility is not directly observed by the DM, and is measured against the empirical model in the holdout set, as before. We use the same COMPAS dataset, and the results are broadly similar, apart from the fact that the Bayesian decision rule appears to remain robust in this setting, while the marginal one's performance degrades.
We presume that this is because that the Bayesian decision rule explicitly taking into account uncertainty leads to more robust performance relative to the marginal decision rule, which does not.
The results are shown 
in Figure~\ref{fig:sequential-allocation} and its extended version (Figure~\ref{fig:sequential-allocation_extend} in supplementary materials).
The larger discrepancy between for the Bayesian case
in Figure~\ref{fig:sequential-allocation}(a) implies that explicitly modelling uncertainty is also crucial for utility in this case.

\section{Conclusion and future directions}
\label{sec:conclusion}

Existing fairness criteria may be hard to satisfy or verify in a
learning setting because they are defined for the true model. For that
reason, we develop a Bayesian fairness framework, which 
deals explicitly with the information available to the decision maker.
Our framework allows us to more adequately incorporate uncertainty into fairness considerations.
We believe that a further exploration of the informational aspects of
fairness, and in particular for sequential decision problems in the Bayesian setting, will be extremely
fruitful.



\bibliographystyle{aaai}
\bibliography{bibliography}

\newpage
\onecolumn
\appendix

\newpage
\section*{\LARGE Supplementary materials for ``Bayesian Fairness''}

\setcounter{section}{0}

~\\

\section{Impossibility result}

\begin{theorem}\label{thm:impossible}
Calibration and balance conditions cannot hold simultaneously, except: (i) if there exists perfect decision rules that there exists $a,y$ s.t. $ P_\param^\pol(y\mid a) = 0 $ or $ P_\param^\pol(a\mid y) = 0 $, or (ii) $z$ is independent of $y$ that for each $z$, $ P_\param^\pol(z|y) \equiv \text{const.},~\forall y$.
\end{theorem}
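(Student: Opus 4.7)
}
The plan is to translate each of the two fairness conditions into a statement about the conditional distribution $P_\param^\pol(z \mid a, y)$ and then show that requiring both simultaneously forces either a degeneracy in the joint distribution (case (i)) or independence of $z$ and $y$ (case (ii)). First I would rewrite the balance condition \eqref{eq:balanced-rule} in its equivalent conditional form: whenever $P_\param^\pol(a \mid y) > 0$ (equivalently $P_\param^\pol(a,y) > 0$), balance is exactly the statement
\begin{equation*}
  P_\param^\pol(z \mid a, y) = P_\param^\pol(z \mid y).
\end{equation*}
Analogously, I would rewrite the calibration condition \eqref{eq:calibrated-rule}: whenever $P_\param^\pol(y \mid a) > 0$, calibration is equivalent to
\begin{equation*}
  P_\param^\pol(z \mid a, y) = P_\param^\pol(z \mid a).
\end{equation*}

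Next, assuming we are \emph{not} in case (i), every pair $(a,y)$ satisfies $P_\param^\pol(a \mid y) > 0$ and $P_\param^\pol(y \mid a) > 0$, so both rewrites are valid for every $(a,y)$. Chaining them gives, for every pair $(a,y)$ with $P_\param^\pol(a,y) > 0$,
\begin{equation*}
  P_\param^\pol(z \mid a) \;=\; P_\param^\pol(z \mid a, y) \;=\; P_\param^\pol(z \mid y).
\end{equation*}
Since the left-hand side does not depend on $y$ and the right-hand side does not depend on $a$, each side must be constant in its free argument: fixing any $a_0$, $P_\param^\pol(z \mid y) = P_\param^\pol(z \mid a_0)$ for all $y$, so $P_\param^\pol(z \mid y)$ does not depend on $y$. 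This is precisely case (ii), which by standard manipulation also yields $z \indep y$ under $P_\param^\pol$.

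The main obstacles I anticipate are purely bookkeeping: (a) making the equivalence between the product-form definitions and the conditional-independence restatements precise so that the zero-probability escape routes align exactly with the exception in case (i); and (b) being careful that the implication ``$P_\param^\pol(a,y) > 0$ for all $(a,y)$'' really follows from negating case (i) as stated in the theorem, i.e.\ that $P_\param^\pol(y \mid a) > 0$ and $P_\param^\pol(a \mid y) > 0$ jointly cover all marginals needed. Once these small points are nailed down, the argument is a two-line chain of equalities plus a constancy observation, so no further work beyond the schema above is required.
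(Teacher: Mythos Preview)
Your proposal is correct and follows essentially the same route as the paper: both arguments reduce the two fairness conditions to the single identity $P_\param^\pol(z\mid a)=P_\param^\pol(z\mid y)$ for all $(a,y)$ with positive joint mass, and then read off that $P_\param^\pol(z\mid y)$ must be constant in $y$. The only cosmetic difference is that the paper reaches that identity via a Bayes-rule manipulation of $P_\param^\pol(a,z\mid y)$, whereas you pass through the common conditional $P_\param^\pol(z\mid a,y)$; your version is slightly tidier about the zero-probability bookkeeping and makes the final constancy step explicit, which the paper leaves implicit.
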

\begin{proof}
We prove by contradiction. Using Bayes rule we first have
\begin{align}
    P_\param^\pol(a, z \mid y) =     P_\param^\pol(y, z \mid a) \cdot \frac{P_\param^\pol(a \mid y)}{P_\param^\pol(y \mid a)} ~.\label{eqn:1}
\end{align}
Suppose calibration condition holds, that is 
\[
P_\param^\pol(y, z \mid a) = P_\param^\pol(y \mid a) P_\param^\pol(z \mid a)
\]
Plug above into Eqn. (\ref{eqn:1}) we have 
\begin{align*}
    P_\param^\pol(a, z \mid y) &=    P_\param^\pol(y \mid a) P_\param^\pol(z \mid a) \cdot \frac{P_\param^\pol(a \mid y)}{P_\param^\pol(y \mid a)} \\
    &=P_\param^\pol(z \mid a) \cdot P_\param^\pol(a \mid y).
    \end{align*}
    On the other hand, if balanced condition holds too, we have
    \begin{align*}
    P_\param^\pol(a, z \mid y) =P_\param^\pol(a \mid y) \cdot P_\param^\pol(z \mid y) 
    \end{align*}
    Together we have that 
    \[
  P_\param^\pol(z \mid a) \cdot P_\param^\pol(a \mid y)  = P_\param^\pol(a \mid y) \cdot P_\param^\pol(z \mid y) \rightarrow   P_\param^\pol(z \mid a) =P_\param^\pol(z \mid y),
    \]
    which does not hold when condition (ii) does not hold, completing the proof. 
\end{proof}

~\\

\section{Trivial decision rules for balance}
\label{sec:counterexample}
 
\begin{theorem}
  A trivial decision rule of the form $\pol(a \mid x) = p_a$ can always satisfy balance for a Bayesian decision problem. However, it may be the only balanced decision rule, even when a non-trivial balanced policy can be found for every possible $\param \in \Param$.
  \label{lem:trivial-balance}
\end{theorem}
\begin{proof}
  For the first part, notice that Eqn. \eqref{eq:balanced-rule}
  can be always satisfied trivially if $\pol(a \mid x) = p_a$, i.e. we ignore the observations when taking our actions.
  For the second part, we can rewrite Eqn. \eqref{eq:balanced-rule} as
  \begin{align*}
    \sum_x \pol(a | x) \left[P_\theta(x, z | y) - P_\theta(x | y) P_\theta(z | y) \right] &= 0\\
    \sum_x \pol(a | x) \Delta_\theta(x, y, z) &= 0,
  \end{align*}
  where the $\Delta$ term is only dependent on the parameters.  This
  condition can be satisfied in two ways: the first is if the model
  $\param$ makes $x, z$ conditionally independent on $y$. The second
  is if the vector $\pol(a \mid \cdot)$ is orthogonal to
  $\Delta_\param(\cdot, y, z)$. If $|\CX| > |\CY \times \CZ|$, then,
  for any $\param$, we can always find a policy vector
  $\pol(a \mid \cdot)$ that is orthogonal to all vectors
  $\Delta_\param(\cdot, y, z)$. However, if these vectors across $\param$ have exactly degree of freedom being 1 (since they add up to 0, thus the rank of them can be at most the full rank - 1), then no single policy can be orthogonal
  to all, as otherwise the degree of freedom for this set of vectors will be at least 2. 
\end{proof}

\begin{example}
  In this balance example, there are two models. In the first model, for some value $y$, we have:
  \begin{align}
    P_{\param}(x=0|y) &= 1/4, &
                                P_{\param}(x=0|y,z=1) &= 1/4 - \epsilon, \\
    P_{\param}(x=1|y) &= 1/4, &
                                P_{\param}(x=1|y,z=1) &= 1/4 + \epsilon, \\
    P_{\param}(x=2|y) &= 1/4, &
                                P_{\param}(x=2|y,z=1) &= 1/4 + \epsilon, \\
  \end{align}
  so that
  \begin{align}
    P_{\param}(x=0|y) -  P_{\param}(x=0|y,z=1) &= \epsilon, \\
    P_{\param}(x=1|y) - P_{\param}(x=1|y,z=1) &=  - \epsilon\\
    P_{\param}(x=2|y) - P_{\param}(x=2|y,z=1) &=  - \epsilon
  \end{align}
  Similarly, we can construct models $\param'$ and $\param''$ so that the corresponding differences are $(-\epsilon, \epsilon, -\epsilon)$ and $(\epsilon, \epsilon, -\epsilon)$.
  For any policy $\pol(a \mid x)$ consider the vector $\pol_a = (\pol(a = 1 \mid x))_{x=1}^3$. Note that we can make the policy orthogonal to the first model simply by setting $\pol_a = (1/2, 1/2, 1)$.
\end{example}

~\\

\section{Proof of Theorem \ref{noise:model}}
\begin{proof}
We show the proof for Bayes-balance condition, while the proof for Marginal-balance resembles similarities. Denote the $(1-\delta)$-event that $\theta$ drawn from $\beta(\theta)$ that is $\epsilon$ close to the true model $\theta^*$ in all the conditional probabilities $P_{\theta}(x|y,z), P_{\theta}(x|y)$ as $\mathcal E$, then we have:
\begin{align*}
  &~~~~~\bigl|\sum_x \pol(a | x)
    \left[P_{\param^*}(x, z | y)
  - P_{\param^*}(x | y) P_{\param^*}(z | y) \right]\bigr|\\
  &= \biggl|\int_{\theta \in \mathcal E} \sum_x \pol(a | x)
    \left[P_{\param^*}(x, z | y)
  - P_{\param^*}(x | y) P_{\param^*}(z | y) \right]\\
  &+\int_{\theta \notin \mathcal E}\sum_x \pol(a | x) \left[P_{\param^*}(x, z | y)
  - P_{\param^*}(x | y) P_{\param^*}(z | y) \right]\biggr|\\
  &\leq \biggl|\int_{\theta \in \mathcal E} \sum_x \pol(a | x)
    \left[P_{\param}(x, z | y)
  - P_{\param}(x | y) P_{\param}(z | y) \right] +2\epsilon\\
  &+\int_{\theta \notin \mathcal E}\sum_x \pol(a | x) \left[P_{\param^*}(x, z | y)
  - P_{\param}(x | y) P_{\param}(z | y) \right] + 2\delta\biggr|\\
  &\le \bigl |\sum_x \pol(a | x)
  \int_{\mathrlap{\Param }}
    \left[P_\param(x, z | y)
  - P_\param(x | y) P_\param(z | y) \right]\bigr| +  2(\epsilon+\delta).
  \end{align*}
  Summing over all $a,y,z$ gives us the results. 
\end{proof}
~\\

\section{Gradient calculations for optimal balance decision}
\label{sec:gradient}

For simplicity, let us define the vector in $\Simplex^{\CA}$:
\[
c_w(y,z) = \sum_x \pol_w(\cdot \mid x) \Delta(x, y, z),
\]
so that
\[
f_\lambda(w) = \util(\bel, \pol_w) -  \lambda \sum_{y,z} c_w(y,z)^\top c_w(y,z).
\]
Now
\begin{align*}
  &~~~~\grad_w \left(c_w(y,z)^\top c_w(y,z)\right)
  \\
  &= 
  \grad_w \sum_a c_w(y,z)_a^2\\
  &= 
  \sum_a 2 c_w(y,z)_a
  \grad_w c_w(y, z)_a
  \\
  \grad_w & c_w(y, z)_a
  = 
    \sum_x \grad_w \pol_w(a \mid x) \Delta(x, y, z),
\end{align*}
while
\begin{align}
  \grad \util(\bel, \pol_w)
  &=
    \int_\CX \dd \Pr_\bel(x) \grad_w \pol_w(a \mid x) \E_\bel (\util \mid x, a) 
\end{align}
Combining the two terms, we have
\begin{align*}
  \grad_w f_\lambda(w)
  &= 
    \int_\CX \grad_w \pol_w(a \mid x)
    \bigl[
    \dd \Pr_\bel(x) \E_\bel(U \mid x, a)
  \\
  &- 2 \lambda \sum_{y,z} c_w(y, z)_a \Delta(x, y, z) \dd \Lambda(x),
    \bigr].
\end{align*}
where $\Lambda$ is the Lebesgue measure.
We now derive the gradient for the $\grad_w \pol_w$ term. We consider
two parameterizations.
\paragraph{Independent policy parameters.} When $\pol(a \mid x) = w_{ax}$, we obtain
$$\partial \pol(a' \mid x') / \partial ax = \ind{ax = a'x'}$$.
This unfortunately requires projecting the policy parameters back to  the simplex. For this reason, it might be better to use a parameterization that allows unconstrained optimization. 
\paragraph{Softmax policy parameters.} When 
$$\pol(a \mid x) = e^{w_{ax}} / \sum_{a'} e^{w_{a'x}},$$
we have the following gradients:
\begin{align*}
  \partial \pol(a \mid x) / \partial ax
  &= 
    e^{w_{ax}} \sum_{a' \neq a} e^{w_{a'x}} \left(\sum_{a'} e^{w_{a'x}}\right)^{-2}
  \\
  \partial \pol(a \mid x) / \partial a'x
  &= 
    e^{w_{ax} + w_{a'x}}\left(\sum_{a''} e^{w_{a''x}}\right)^{-2}, ~ a \neq a'\\
  \partial \pol(a \mid x) / \partial a'x'
  &= 
    0, ~ ax \neq a'x'.
\end{align*}

~\\

\section{Empirical formulation.}
For infinite $\CX$, it may be more efficient to rewrite
\eqref{eq:balanced-bayes-constraint} as
\begin{align}
  0
  &=
    \int_{\CX} \pol(a \mid x) 
    \dd \left[ P(x, z \mid y)
    - P(x \mid y) P(z \mid y)\right]
  \\
  &= 
    \int_{\CX} \pol(a \mid x) 
    \left[ P(z \mid y, x)
    - P(z \mid y)\right] \dd P(x \mid y)\\
  &=
    \int_{\CX} \pol(a \mid x) 
    \left[ P(z \mid y, x)
    - P(z \mid y)\right] \frac{P(y \mid x)}{P(y)} \dd P(x)\\
  &\approx
    \sum_{x \sim P_\param(x)} \pol(a \mid x) 
    \left[ P(z \mid y, x)
    - P(z \mid y)\right] \frac{P(y \mid x)}{P(y)}
\intertext{simplifying by dropping the $P(y)$ term:}
  0 &\approx
    \sum_{x \sim P_\param(x)} \pol(a \mid x) 
    \left[ P(z \mid y, x)
    - P(z \mid y)\right] P(y \mid x),
\end{align}
This allows us to approximate the integral by sampling $x$, and can be
useful for e.g. regression problems.

\section{Complete figures}
This section has complete versions of the figures which could not fit in the main text.
\begin{figure*}
\centering
 \subfloat[$\lambda=0$]{
%
%
\definecolor{mycolor1}{rgb}{0.00000,0.75000,0.75000}%
\begin{tikzpicture}

\begin{axis}[%
width=0.951\fwidth,
height=0.75\fwidth,
at={(0\fwidth,0\fwidth)},
scale only axis,
xmode=log,
xmin=1,
xmax=100,
xminorticks=true,
xlabel={t},
ymin=4.5,
ymax=8,
ylabel={$U,F$},
axis background/.style={fill=white}
]
\addplot [color=blue, line width=2.0pt, forget plot]
  table[row sep=crcr]{%
1	5.43561523059217\\
2	5.90370595362012\\
3	5.86193877425643\\
4	6.1974251447707\\
5	6.37538745144895\\
6	6.29084098710727\\
7	6.37533818756081\\
8	6.50213010808347\\
9	6.5991724386949\\
10	6.52046523295858\\
11	6.72543938736623\\
12	6.79086221447693\\
13	6.73720035131321\\
14	6.73338708212267\\
15	6.72543680895541\\
16	6.69340595276163\\
17	6.7374401920428\\
18	6.76459636329051\\
19	6.81917107124367\\
20	6.81961739074537\\
21	6.80848937741292\\
22	6.76496273549776\\
23	6.81943224558462\\
24	6.82707956663492\\
25	6.82513922838762\\
26	6.80872080097762\\
27	6.8194279261825\\
28	6.8189260277632\\
29	6.80836851035401\\
30	6.80853114441057\\
31	6.81931954445528\\
32	6.73856115284411\\
33	6.73816264740434\\
34	6.73350236577814\\
35	6.70687112292105\\
36	6.6785416628419\\
37	6.67686803452494\\
38	6.73847232580075\\
39	6.81942547936638\\
40	6.82688492791769\\
41	6.82666694620967\\
42	6.82685885290022\\
43	6.82704019063763\\
44	6.82689924230517\\
45	6.82677823960367\\
46	6.8270597587393\\
47	6.8194560097514\\
48	6.82689843014005\\
49	6.82701602901863\\
50	6.82681169930566\\
51	6.81949797532045\\
52	6.82669354234299\\
53	6.81943329511497\\
54	6.82651491512469\\
55	6.82704770792639\\
56	6.82679743590289\\
57	6.82658758155496\\
58	6.82669828218253\\
59	6.82660948782845\\
60	6.82669798274291\\
61	6.8265894448791\\
62	6.82677142197074\\
63	6.82681491374407\\
64	6.82666114793132\\
65	6.80855816288338\\
66	6.81943481045244\\
67	6.82660396672482\\
68	6.82386355702123\\
69	6.81947471789097\\
70	6.82700074175287\\
71	6.82693449357647\\
72	6.82709850214994\\
73	6.82660499877464\\
74	6.82697649873285\\
75	6.82670593695921\\
76	6.82675415174255\\
77	6.82687238452097\\
78	6.82695391513762\\
79	6.82698434483159\\
80	6.82686631929662\\
81	6.82688791087935\\
82	6.82646811164595\\
83	6.82644252014829\\
84	6.82665183718036\\
85	6.82669758685249\\
86	6.82676665652696\\
87	6.82695751973057\\
88	6.82658946086894\\
89	6.82655885737069\\
90	6.82664771118609\\
91	6.82677188302957\\
92	6.82669723134228\\
93	6.82649131899508\\
94	6.82694362296807\\
95	6.82701100800417\\
96	6.82689276186491\\
97	6.82697942973237\\
98	6.8265959260873\\
99	6.82685566499548\\
100	6.82661794251553\\
};
\addplot [color=black!50!green, dashed, line width=3.0pt, forget plot]
  table[row sep=crcr]{%
1	5.41383459952623\\
2	6.03977740818528\\
3	6.34445896962323\\
4	6.36360081485984\\
5	6.5327140246455\\
6	6.54312011538435\\
7	6.60180812138142\\
8	6.73373741924754\\
9	6.73566894837812\\
10	6.79427286043087\\
11	6.79651547742057\\
12	6.81745557776205\\
13	6.82361089403465\\
14	6.82453008513155\\
15	6.81832088322606\\
16	6.78744690174988\\
17	6.8245307167606\\
18	6.82360223044389\\
19	6.82320123869577\\
20	6.82664800772996\\
21	6.82680811620926\\
22	6.82647925470442\\
23	6.82692086547104\\
24	6.8269265759928\\
25	6.82693425257449\\
26	6.8268229060682\\
27	6.82700552417735\\
28	6.82667175358748\\
29	6.82668179704755\\
30	6.82696045898621\\
31	6.82660833511297\\
32	6.82680589726841\\
33	6.82671589421294\\
34	6.82682294359754\\
35	6.82701707308604\\
36	6.82650493640409\\
37	6.82699220517708\\
38	6.82668021066387\\
39	6.82672209879424\\
40	6.82681624339904\\
41	6.82673440179816\\
42	6.82689211021934\\
43	6.82677720152432\\
44	6.82694542985608\\
45	6.82666056532852\\
46	6.82671478552557\\
47	6.82693547661731\\
48	6.82672478318047\\
49	6.82677490664466\\
50	6.82656216462366\\
51	6.82648634438907\\
52	6.82704568401014\\
53	6.826862274337\\
54	6.82707378292588\\
55	6.82701064453263\\
56	6.82655872230876\\
57	6.82680284995162\\
58	6.82675484205814\\
59	6.82669898677491\\
60	6.8269587402534\\
61	6.82659491807925\\
62	6.8270850859566\\
63	6.82673295666194\\
64	6.82699550615579\\
65	6.82685223576769\\
66	6.82671348441515\\
67	6.82673202844262\\
68	6.82712820153405\\
69	6.82689184429809\\
70	6.82683272593571\\
71	6.82648201507614\\
72	6.82674866309257\\
73	6.82671747591638\\
74	6.82699024215261\\
75	6.82717941910823\\
76	6.82694493886605\\
77	6.8267239335274\\
78	6.82696730470131\\
79	6.82694340122485\\
80	6.82682976358288\\
81	6.82660119141053\\
82	6.82686412764447\\
83	6.82699084765003\\
84	6.8267722735916\\
85	6.82674429780571\\
86	6.82694302876559\\
87	6.82675714405635\\
88	6.82696031751509\\
89	6.82689255300199\\
90	6.82702023326717\\
91	6.82696488326549\\
92	6.82660850972417\\
93	6.8268004525055\\
94	6.82679611886495\\
95	6.82682594191007\\
96	6.82686735298234\\
97	6.82672601577022\\
98	6.82680403091386\\
99	6.82704486981606\\
100	6.8267221923614\\
};
\addplot [color=red, dashdotted, line width=2.0pt, forget plot]
  table[row sep=crcr]{%
1	4.65679844872143\\
2	5.69596508675862\\
3	6.12283278302887\\
4	6.64373241409889\\
5	7.32558978567296\\
6	6.08163155109078\\
7	6.19748304313167\\
8	6.87388547503056\\
9	7.08389848708165\\
10	7.66590296789734\\
11	7.53380043435034\\
12	7.62021550620541\\
13	6.55198691005539\\
14	7.31957601971931\\
15	7.35795487617778\\
16	6.56844918258317\\
17	6.60268271810485\\
18	6.93824349667414\\
19	6.54115540687113\\
20	6.91918785115645\\
21	7.21517127808337\\
22	7.13583366054737\\
23	6.66321201520444\\
24	6.98638303153576\\
25	6.78324605322824\\
26	7.38817377199521\\
27	6.63843044254195\\
28	6.75672123473148\\
29	7.3764343870945\\
30	7.19570861786257\\
31	6.65820831042757\\
32	6.9430024004874\\
33	6.85869875035141\\
34	6.84454259485917\\
35	6.98880616760368\\
36	7.08431244310177\\
37	7.25857722575256\\
38	6.98686399745706\\
39	6.6537099802175\\
40	6.7214629047179\\
41	6.57818486200004\\
42	6.78856195660524\\
43	6.98657442393879\\
44	6.70851618145228\\
45	6.77231060119333\\
46	6.91138081525664\\
47	6.80900520532078\\
48	6.8955319793459\\
49	6.84792233580796\\
50	6.79275017251076\\
51	6.71426555740713\\
52	6.59004662459356\\
53	6.77652926179091\\
54	6.65083295234543\\
55	6.85126864447383\\
56	6.74854673311024\\
57	6.75003213226614\\
58	6.89126758286218\\
59	6.82121018801547\\
60	6.73429925858514\\
61	6.55866795761166\\
62	6.77542717061313\\
63	6.7505053995152\\
64	6.61001242197726\\
65	7.25454977658437\\
66	6.66630203216666\\
67	6.49701316241904\\
68	6.74842532476086\\
69	6.71504163949096\\
70	6.97439204366755\\
71	6.76367082195242\\
72	6.96309910967014\\
73	6.82447935225019\\
74	6.97808966701071\\
75	6.92487707804123\\
76	6.66720140412107\\
77	6.79555773655483\\
78	6.8885209752185\\
79	6.82374109038333\\
80	6.72908264481633\\
81	6.86784379199626\\
82	6.63745368452247\\
83	6.5131420448435\\
84	6.88807751914145\\
85	6.72286398044753\\
86	6.73792752347492\\
87	6.93224164162795\\
88	6.837553331591\\
89	6.70329794971067\\
90	6.77161494678466\\
91	6.92391725886576\\
92	6.64034499050567\\
93	6.80661355109981\\
94	6.90661745847731\\
95	6.90906010305707\\
96	6.73117108750913\\
97	6.96005426572143\\
98	6.54290590591496\\
99	6.69200091471541\\
100	6.67305945104694\\
};
\addplot [color=mycolor1, dotted, line width=3.0pt, forget plot]
  table[row sep=crcr]{%
1	4.63856673683556\\
2	7.65090449353503\\
3	7.29720941968353\\
4	6.39411547781226\\
5	5.59180859447572\\
6	5.91103616867134\\
7	5.65345033980548\\
8	6.99157334254097\\
9	6.98378088352546\\
10	6.36878678535445\\
11	6.5703513033626\\
12	6.72120527280937\\
13	7.18827355673923\\
14	7.07400805356152\\
15	7.07676946422607\\
16	6.68918545343691\\
17	6.87716882988162\\
18	7.21295461204102\\
19	6.89059757703387\\
20	6.6418700279056\\
21	6.72969261054924\\
22	6.58527483721895\\
23	6.84422565408741\\
24	6.76291708398274\\
25	6.78586446833003\\
26	6.97929349712728\\
27	6.91929251936603\\
28	6.59430908038756\\
29	6.82168411870536\\
30	6.86077963582398\\
31	6.71389518734896\\
32	6.78498708606475\\
33	6.79899086704355\\
34	6.85315001850439\\
35	6.96486748159995\\
36	6.73627790220896\\
37	6.8592233857706\\
38	6.56505333926661\\
39	6.73635246282633\\
40	6.75269318472154\\
41	6.67726773578806\\
42	6.90187689510759\\
43	6.78042022470713\\
44	6.77019613843506\\
45	6.73856118706226\\
46	6.57185409950028\\
47	6.96761071311978\\
48	6.66130453339595\\
49	6.80708379997048\\
50	6.85748230435172\\
51	6.70135826348809\\
52	6.9938569734353\\
53	6.72539117699799\\
54	6.94640793627233\\
55	6.9385051726723\\
56	6.61631698807355\\
57	6.788520931626\\
58	6.88572558664489\\
59	6.68380106971717\\
60	6.86253282851936\\
61	6.65810790284873\\
62	6.8793638877059\\
63	6.78922309796063\\
64	6.94456473813911\\
65	6.70260082131938\\
66	6.72230045188807\\
67	6.69964002067404\\
68	7.00732110672733\\
69	6.81079303773709\\
70	6.68424630783847\\
71	6.52353247112277\\
72	6.73012734112837\\
73	6.69496855949307\\
74	6.92703451903552\\
75	7.02396178656437\\
76	6.81686683630093\\
77	6.59425523559705\\
78	6.95528299863492\\
79	6.75296627805458\\
80	6.78184138131174\\
81	6.63659860007914\\
82	6.88566575528246\\
83	6.92879059314807\\
84	6.64710033562532\\
85	6.65151555001541\\
86	6.9180522788148\\
87	6.71736971916958\\
88	6.90775109592753\\
89	6.89321397542376\\
90	6.80705343437269\\
91	6.93499379578587\\
92	6.59613086758874\\
93	6.86088232672981\\
94	6.67935174396831\\
95	6.81958814386339\\
96	6.78885095717227\\
97	6.62347677209479\\
98	6.66807026959861\\
99	6.99366307038507\\
100	6.7131226265173\\
};
\end{axis}
\end{tikzpicture}%
  }
  \subfloat[$\lambda=0.25$]{
%
%
\definecolor{mycolor1}{rgb}{0.00000,0.75000,0.75000}%
\begin{tikzpicture}

\begin{axis}[%
width=0.951\fwidth,
height=0.75\fwidth,
at={(0\fwidth,0\fwidth)},
scale only axis,
xmode=log,
xmin=1,
xmax=100,
xminorticks=true,
xlabel={t},
ylabel={$U, F$},
ymin=3,
ymax=8,
axis background/.style={fill=white}
]
\addplot [color=blue, line width=2.0pt, forget plot]
  table[row sep=crcr]{%
1	5.48092030379997\\
2	5.62187002690099\\
3	5.95863155946218\\
4	6.28898882529671\\
5	6.37263646084564\\
6	6.44619863535187\\
7	6.51978070369432\\
8	6.6260112803707\\
9	6.59969215942956\\
10	6.64710138462075\\
11	6.64739151299339\\
12	6.64510042770851\\
13	6.64152382540431\\
14	6.77898626490996\\
15	6.78942211106642\\
16	6.78461368559541\\
17	6.78747870326049\\
18	6.78938642097554\\
19	6.78832161718462\\
20	6.78775050690658\\
21	6.78944722876843\\
22	6.78917577314025\\
23	6.78819139814796\\
24	6.78724347966865\\
25	6.78821620436663\\
26	6.78831214850696\\
27	6.7880922368358\\
28	6.7881723236791\\
29	6.78794233482332\\
30	6.78813814586129\\
31	6.78820354351858\\
32	6.78841099428386\\
33	6.78808353541935\\
34	6.7881908225767\\
35	6.78774827696057\\
36	6.7885187243922\\
37	6.78845419838743\\
38	6.78814836495893\\
39	6.78821228904136\\
40	6.78818735684172\\
41	6.78799405430338\\
42	6.7882713816731\\
43	6.78927591624498\\
44	6.78946030227827\\
45	6.78821020841118\\
46	6.78827185230104\\
47	6.78860337850886\\
48	6.78748352584787\\
49	6.78799354670293\\
50	6.78842672785986\\
51	6.7880902860153\\
52	6.78808388647639\\
53	6.78747738661796\\
54	6.78819916095484\\
55	6.78835895289234\\
56	6.78843293590063\\
57	6.78825725240079\\
58	6.78829796145325\\
59	6.788057466024\\
60	6.78839237562291\\
61	6.78834851740735\\
62	6.78735253906824\\
63	6.78775411240988\\
64	6.78863576035963\\
65	6.78755861187669\\
66	6.78954119015546\\
67	6.78873337301612\\
68	6.78816033804725\\
69	6.78820344967906\\
70	6.78776361064147\\
71	6.78814177015364\\
72	6.7879366996741\\
73	6.7880936081835\\
74	6.78825561226921\\
75	6.78821316644409\\
76	6.78918787026003\\
77	6.78818333235925\\
78	6.78819645711209\\
79	6.78810751942358\\
80	6.78842982644288\\
81	6.78739048420254\\
82	6.78823104668009\\
83	6.78781609618644\\
84	6.78820734692313\\
85	6.78828044289443\\
86	6.78722601773148\\
87	6.78755406937019\\
88	6.78811157856524\\
89	6.78808107645364\\
90	6.789012996106\\
91	6.7881650039973\\
92	6.78821720889247\\
93	6.78807168410838\\
94	6.78810800021703\\
95	6.78793379320975\\
96	6.78826043758827\\
97	6.78807781437123\\
98	6.78818537206819\\
99	6.78850763801694\\
100	6.7884621883911\\
};
\addplot [color=black!50!green, dashed, line width=3.0pt, forget plot]
  table[row sep=crcr]{%
1	5.46839092366965\\
2	5.75703558072284\\
3	6.21187308663476\\
4	6.39102664381444\\
5	6.55586234089632\\
6	6.38413615637934\\
7	6.42300581162868\\
8	6.67498693237809\\
9	6.63039319016224\\
10	6.58773391003488\\
11	6.75296577578321\\
12	6.786443793693\\
13	6.78805569029723\\
14	6.78745585366746\\
15	6.78832428887165\\
16	6.78830066490763\\
17	6.78839106970021\\
18	6.78808802788535\\
19	6.78798694582777\\
20	6.78817693494755\\
21	6.7881008909547\\
22	6.78842032667698\\
23	6.78783128904056\\
24	6.78875615083487\\
25	6.78845727850269\\
26	6.78819401413074\\
27	6.78834196285126\\
28	6.7872101033022\\
29	6.7879980507075\\
30	6.78824430728795\\
31	6.78816970586208\\
32	6.78818801908122\\
33	6.78814017664448\\
34	6.78825078323049\\
35	6.7881484859051\\
36	6.78818691493551\\
37	6.78808987102091\\
38	6.78816298338296\\
39	6.78799846176474\\
40	6.78828094214078\\
41	6.78727871939925\\
42	6.78803975010666\\
43	6.78725571301125\\
44	6.78871518089493\\
45	6.78840508970577\\
46	6.7875707596091\\
47	6.7881728716072\\
48	6.78846952964538\\
49	6.78817772400779\\
50	6.78840021851439\\
51	6.78822370152359\\
52	6.78740754507343\\
53	6.78818214694285\\
54	6.78826729796029\\
55	6.78794709000275\\
56	6.78819769453097\\
57	6.78773687436855\\
58	6.78760780764942\\
59	6.78820024812653\\
60	6.78822638564773\\
61	6.78816361161253\\
62	6.78832451810031\\
63	6.78840398227284\\
64	6.78810379421081\\
65	6.7882192655374\\
66	6.7882812746018\\
67	6.78767994390469\\
68	6.78789414506478\\
69	6.78833152743326\\
70	6.78812570838307\\
71	6.78853051769768\\
72	6.78820591270197\\
73	6.78840134179175\\
74	6.78833646862341\\
75	6.78714175770295\\
76	6.78827649125506\\
77	6.78761416139175\\
78	6.78818828234791\\
79	6.78820835330531\\
80	6.78869172117444\\
81	6.78888773725641\\
82	6.78807839890497\\
83	6.78822911534438\\
84	6.7882605284278\\
85	6.78824522127325\\
86	6.78813733439761\\
87	6.78828150957633\\
88	6.787818875421\\
89	6.78808023524246\\
90	6.78808690828464\\
91	6.78820378585089\\
92	6.78810395152641\\
93	6.78826222843939\\
94	6.78821516715624\\
95	6.7884244130633\\
96	6.78815086989971\\
97	6.78823200461007\\
98	6.78762008169212\\
99	6.78764861589493\\
100	6.78759288324044\\
};
\addplot [color=red, dashdotted, line width=2.0pt, forget plot]
  table[row sep=crcr]{%
1	3.95263838270111\\
2	3.82503765619641\\
3	4.69915687329073\\
4	4.95179395517991\\
5	5.69532493678198\\
6	5.37334301528087\\
7	4.83235868797524\\
8	4.73554152570934\\
9	5.01931591636941\\
10	5.03432141135122\\
11	4.79254541499617\\
12	4.63914092183212\\
13	4.44750001169128\\
14	4.81685316598652\\
15	4.9147788424582\\
16	4.91327394352937\\
17	4.95995623494322\\
18	4.8945337204222\\
19	4.84879148065828\\
20	4.83909002285821\\
21	4.88259365579378\\
22	4.88389090533673\\
23	4.8443938066954\\
24	4.83840179753397\\
25	4.84558816878397\\
26	4.84905292663818\\
27	4.84147078022581\\
28	4.84372820754065\\
29	4.83890505616955\\
30	4.84232199646838\\
31	4.84523950410094\\
32	4.85078354779625\\
33	4.84135735710565\\
34	4.84520888873435\\
35	4.84215081653731\\
36	4.85535850522084\\
37	4.85185247997884\\
38	4.84260169610015\\
39	4.84620634763515\\
40	4.84523575187968\\
41	4.83902567068879\\
42	4.84701671196802\\
43	4.87609116537311\\
44	4.88206656252824\\
45	4.84539564610091\\
46	4.84731781906082\\
47	4.85657048784665\\
48	4.84464814453533\\
49	4.84034483610184\\
50	4.85098637586876\\
51	4.85234480413716\\
52	4.84223427910482\\
53	4.83645625338907\\
54	4.84534540433531\\
55	4.84989668463203\\
56	4.85152070699286\\
57	4.84710109351825\\
58	4.84850862271226\\
59	4.8417612393451\\
60	4.8505172651455\\
61	4.84854018635913\\
62	4.84166599328614\\
63	4.83795694229326\\
64	4.85812270662427\\
65	4.8507748368422\\
66	4.88437423777575\\
67	4.85926481587375\\
68	4.84382853660602\\
69	4.8450886394879\\
70	4.83777595797139\\
71	4.8436077111013\\
72	4.83723787081233\\
73	4.8419858580276\\
74	4.84700191788378\\
75	4.84679719167575\\
76	4.88422845265261\\
77	4.84492363174228\\
78	4.84542290433572\\
79	4.84308012880218\\
80	4.85170602717477\\
81	4.84406941230317\\
82	4.84661435851207\\
83	4.83788327052865\\
84	4.84567275514356\\
85	4.84723479294532\\
86	4.83903092538253\\
87	4.83854193953482\\
88	4.84165658956618\\
89	4.84096009660066\\
90	4.86812900077183\\
91	4.84417172920604\\
92	4.84595715415907\\
93	4.84059852688428\\
94	4.84207974058152\\
95	4.84058822114876\\
96	4.84739615953176\\
97	4.84066509683728\\
98	4.84408611041905\\
99	4.85435432738002\\
100	4.85140753714343\\
};
\addplot [color=mycolor1, dotted, line width=3.0pt, forget plot]
  table[row sep=crcr]{%
1	4.43942617049761\\
2	6.07517227920279\\
3	6.01511546400837\\
4	7.99001185146673\\
5	6.91853861731467\\
6	7.45736885559621\\
7	5.9696968988519\\
8	5.90044821555372\\
9	6.53253989933909\\
10	5.95793744141482\\
11	4.47853828240861\\
12	5.01187651983101\\
13	4.86994172661999\\
14	4.87457926082996\\
15	4.91605950930105\\
16	4.83664217269984\\
17	4.84487405751951\\
18	4.84572345362514\\
19	4.84496677844009\\
20	4.84859591315895\\
21	4.8439187330126\\
22	4.85436336951604\\
23	4.84095204112765\\
24	4.86334743088784\\
25	4.85229857867871\\
26	4.84514781002158\\
27	4.84965309635877\\
28	4.83928916416435\\
29	4.84189589312187\\
30	4.84704232748438\\
31	4.84549575927436\\
32	4.84486903809857\\
33	4.84236622368815\\
34	4.84766997345754\\
35	4.85758994431681\\
36	4.84579416567247\\
37	4.84096626422101\\
38	4.84505804961238\\
39	4.8440266791363\\
40	4.84775057449601\\
41	4.84209322722535\\
42	4.84050324640196\\
43	4.83980167280306\\
44	4.85923425685094\\
45	4.85111829634059\\
46	4.84132164675338\\
47	4.84370161642999\\
48	4.87223462946916\\
49	4.84432618495737\\
50	4.8500961427862\\
51	4.84586521700634\\
52	4.84466882674989\\
53	4.84385534411402\\
54	4.84618282826888\\
55	4.85639560861603\\
56	4.8456548959247\\
57	4.84125063954534\\
58	4.84248664382494\\
59	4.84531192138565\\
60	4.84571944195174\\
61	4.84383104434909\\
62	4.84870278970119\\
63	4.85126878016323\\
64	4.84210931804219\\
65	4.84625260846576\\
66	4.84761300621132\\
67	4.83951697497941\\
68	4.84891722764123\\
69	4.84990546659528\\
70	4.84234548798647\\
71	4.85552485686154\\
72	4.84534656853895\\
73	4.85214524633418\\
74	4.84833846211763\\
75	4.83836834799854\\
76	4.84747334309443\\
77	4.84577873600088\\
78	4.84566127923588\\
79	4.84656159051625\\
80	4.85932831037586\\
81	4.86458441816834\\
82	4.84154805775504\\
83	4.84631576231663\\
84	4.84721937222264\\
85	4.8458013115131\\
86	4.84218017264925\\
87	4.84713277748063\\
88	4.83723898611049\\
89	4.84187774364254\\
90	4.84171840375588\\
91	4.8452678398018\\
92	4.84182079956507\\
93	4.84699098516083\\
94	4.84549640548173\\
95	4.85158555409688\\
96	4.84409812448365\\
97	4.84640282767493\\
98	4.84023191773009\\
99	4.83639997556977\\
100	4.83899503618132\\
};
\end{axis}
\end{tikzpicture}%
  }
  \subfloat[$\lambda=0.5$]{
%
%
\definecolor{mycolor1}{rgb}{0.00000,0.75000,0.75000}%
\begin{tikzpicture}

\begin{axis}[%
width=0.951\fwidth,
height=0.75\fwidth,
at={(0\fwidth,0\fwidth)},
scale only axis,
xmode=log,
xmin=1,
xmax=100,
xminorticks=true,
xlabel={t},
ymin=2,
ymax=7,
axis background/.style={fill=white}
]
\addplot [color=blue, line width=2.0pt, forget plot]
  table[row sep=crcr]{%
1	5.43626321740328\\
2	5.62955957449209\\
3	5.83892212841107\\
4	6.08455180148673\\
5	6.45389213646402\\
6	6.51404201900693\\
7	6.55425288332807\\
8	6.47841299626561\\
9	6.62069875641757\\
10	6.51422248577047\\
11	6.59838520559784\\
12	6.59032275016397\\
13	6.52849619086474\\
14	6.68240075733416\\
15	6.70628180193069\\
16	6.70366440054659\\
17	6.70078300565614\\
18	6.70264005866184\\
19	6.70272040838255\\
20	6.70215960026927\\
21	6.7035515874975\\
22	6.70405002626636\\
23	6.70380004965903\\
24	6.70280860477349\\
25	6.70188832548547\\
26	6.70154419712915\\
27	6.70290303559077\\
28	6.70099710415468\\
29	6.70193379267451\\
30	6.70120150981987\\
31	6.70173152738914\\
32	6.70159866600691\\
33	6.70148331010774\\
34	6.70024700158092\\
35	6.70162388648996\\
36	6.70122567452083\\
37	6.70119760380918\\
38	6.7008942387301\\
39	6.70207192686816\\
40	6.70206106174765\\
41	6.70118803780591\\
42	6.7011708537815\\
43	6.70121592271\\
44	6.70090701893929\\
45	6.70090952627623\\
46	6.70172791977211\\
47	6.7004604519536\\
48	6.70214417070081\\
49	6.70087927793523\\
50	6.7009280299354\\
51	6.70192571875951\\
52	6.70171886250275\\
53	6.70150820728044\\
54	6.70126159777429\\
55	6.70041242169488\\
56	6.7002853540649\\
57	6.70046583835858\\
58	6.70020648356859\\
59	6.70116548685459\\
60	6.70050519866534\\
61	6.7015615201992\\
62	6.7007685612289\\
63	6.70095050594716\\
64	6.70154935714809\\
65	6.70152349860255\\
66	6.70070897977991\\
67	6.7009292344296\\
68	6.70068914149036\\
69	6.70163551712463\\
70	6.70142476387294\\
71	6.70069739926064\\
72	6.7017336129983\\
73	6.70128008473107\\
74	6.70162543894496\\
75	6.70146658599244\\
76	6.70170456840915\\
77	6.7013768002497\\
78	6.70182944797324\\
79	6.70116360226985\\
80	6.701438682881\\
81	6.70092732271296\\
82	6.70135911617293\\
83	6.70188062229687\\
84	6.70075474316751\\
85	6.70097329900569\\
86	6.70040857565787\\
87	6.70120199869424\\
88	6.70120379847468\\
89	6.70170383955328\\
90	6.70113156603443\\
91	6.70058910840515\\
92	6.70055443379444\\
93	6.70091509145021\\
94	6.70170584002308\\
95	6.70130045964099\\
96	6.70227846487977\\
97	6.70187435109027\\
98	6.70119790741984\\
99	6.70133993723436\\
100	6.70064902183277\\
};
\addplot [color=black!50!green, dashed, line width=3.0pt, forget plot]
  table[row sep=crcr]{%
1	5.51006868459902\\
2	5.94349925651812\\
3	6.15110188956242\\
4	6.02712399624356\\
5	6.32392569534522\\
6	6.36115851019674\\
7	6.36007362082859\\
8	6.42304614951736\\
9	6.46234178919198\\
10	6.46733988080431\\
11	6.45899027408367\\
12	6.3374431395461\\
13	6.54985803215858\\
14	6.3167027540054\\
15	6.46001002676741\\
16	6.56942958143311\\
17	6.61343634104224\\
18	6.63292515036324\\
19	6.63448250199397\\
20	6.63488106426994\\
21	6.63474315974286\\
22	6.68991621855335\\
23	6.6335990864829\\
24	6.69904685642871\\
25	6.70141875769818\\
26	6.7012806526023\\
27	6.70070437469069\\
28	6.70101301806078\\
29	6.70067654491554\\
30	6.70099908199053\\
31	6.70110335421526\\
32	6.70109454031283\\
33	6.70048011400453\\
34	6.70015802246409\\
35	6.70146803260837\\
36	6.70137519147395\\
37	6.70074615877402\\
38	6.70091163421494\\
39	6.7008946179437\\
40	6.70109518272528\\
41	6.70205145033657\\
42	6.70115273392061\\
43	6.70186718313822\\
44	6.70177380992648\\
45	6.7003804780666\\
46	6.70153952966726\\
47	6.70174797129619\\
48	6.70057765354134\\
49	6.70081922974977\\
50	6.70164333602487\\
51	6.70125515905557\\
52	6.70152742403787\\
53	6.70163646612874\\
54	6.70157301629276\\
55	6.701414269396\\
56	6.70151509569165\\
57	6.70113067220305\\
58	6.70067348960663\\
59	6.70094302840244\\
60	6.70066241369534\\
61	6.70171037678393\\
62	6.7017258902379\\
63	6.70100013577623\\
64	6.70084519318045\\
65	6.70123086292297\\
66	6.70103013736592\\
67	6.70117679885823\\
68	6.70136115786824\\
69	6.70167814502445\\
70	6.70122192741145\\
71	6.70183309433243\\
72	6.70100999249549\\
73	6.70135356970916\\
74	6.7002459876897\\
75	6.70146912361473\\
76	6.70130258398882\\
77	6.70075589014854\\
78	6.70080348452352\\
79	6.70112183391194\\
80	6.70120314765165\\
81	6.70119447031542\\
82	6.70144475842244\\
83	6.70137848453271\\
84	6.70168043308462\\
85	6.70077627802382\\
86	6.70071854612631\\
87	6.70101623736037\\
88	6.70104533420828\\
89	6.70135513010366\\
90	6.70056144078923\\
91	6.70102557689087\\
92	6.7006300362861\\
93	6.70107085890124\\
94	6.70153290457477\\
95	6.70133276766025\\
96	6.70119862303415\\
97	6.70089139040385\\
98	6.70152026042981\\
99	6.70007620915822\\
100	6.70036852816079\\
};
\addplot [color=red, dashdotted, line width=2.0pt, forget plot]
  table[row sep=crcr]{%
1	2.44661771734869\\
2	2.52576353700595\\
3	2.16127647791958\\
4	3.0310290948882\\
5	3.40887425643453\\
6	3.50341814626276\\
7	3.56683307706032\\
8	3.73648862770094\\
9	3.23565565554268\\
10	3.40502086781842\\
11	3.04321944910621\\
12	3.01889669235785\\
13	3.43068866230066\\
14	3.25900115565335\\
15	3.17273095025148\\
16	3.10459827121771\\
17	3.0526650791918\\
18	3.09139487891992\\
19	3.09579296675206\\
20	3.09234915988499\\
21	3.09851430049847\\
22	3.10488187422497\\
23	3.11031833793038\\
24	3.09025546200949\\
25	3.07799538254974\\
26	3.05651836407002\\
27	3.09294115326098\\
28	3.04643784169409\\
29	3.08269460708509\\
30	3.07199875153622\\
31	3.06343319656065\\
32	3.07375106751663\\
33	3.06639676553553\\
34	3.05995126777791\\
35	3.07517841982544\\
36	3.06389299412385\\
37	3.06951122517533\\
38	3.04740118393345\\
39	3.08209796772388\\
40	3.07641205413439\\
41	3.05399563766818\\
42	3.05893438157334\\
43	3.05541018738379\\
44	3.06393384924717\\
45	3.04896819364968\\
46	3.07750178838152\\
47	3.05587980147252\\
48	3.08182658514786\\
49	3.04540555034994\\
50	3.06253296378515\\
51	3.07015540037389\\
52	3.06967956686681\\
53	3.0665229029738\\
54	3.06112004082199\\
55	3.05154570012539\\
56	3.04896094300526\\
57	3.05485326870482\\
58	3.04000064341995\\
59	3.059682219609\\
60	3.06316955979892\\
61	3.06141679688057\\
62	3.04373363846218\\
63	3.06791502934542\\
64	3.0757031610305\\
65	3.07958268848526\\
66	3.05845004880157\\
67	3.06552120542576\\
68	3.0419291877304\\
69	3.06806252792471\\
70	3.06832192190414\\
71	3.05477491131693\\
72	3.06448088366623\\
73	3.04826170050289\\
74	3.06948504914674\\
75	3.06412523565948\\
76	3.07259231360712\\
77	3.06403226648865\\
78	3.06970412697863\\
79	3.06165521805732\\
80	3.0614821477698\\
81	3.052753008387\\
82	3.06648346891204\\
83	3.07741163270134\\
84	3.04639913183469\\
85	3.06273214783606\\
86	3.06596710461126\\
87	3.04983966925001\\
88	3.06539272911671\\
89	3.06856762532641\\
90	3.05256539118647\\
91	3.03999612848695\\
92	3.05343433068084\\
93	3.05168086156649\\
94	3.06665063702698\\
95	3.05274503115123\\
96	3.08200349747352\\
97	3.07318177136839\\
98	3.06615315275989\\
99	3.05534532007844\\
100	3.05738636915808\\
};
\addplot [color=mycolor1, dotted, line width=3.0pt, forget plot]
  table[row sep=crcr]{%
1	5.10841690753339\\
2	6.30366720328818\\
3	6.59816030254869\\
4	6.17790900319201\\
5	4.46475981640557\\
6	5.00765820609084\\
7	4.30951390227002\\
8	4.04161842546795\\
9	4.01485656248218\\
10	4.34312892872975\\
11	3.66541471620045\\
12	3.517951425459\\
13	3.61233274197403\\
14	3.74518751805426\\
15	4.16236954544025\\
16	3.64187637079015\\
17	3.4741715491685\\
18	3.29478144435632\\
19	3.18876107018515\\
20	3.20397854482619\\
21	3.19320600929792\\
22	3.27036325372438\\
23	3.20516039434347\\
24	3.05980848849459\\
25	3.05167931384347\\
26	3.05717743757322\\
27	3.0412954175818\\
28	3.06149776452893\\
29	3.04849873207019\\
30	3.04518142355283\\
31	3.05500058612968\\
32	3.05073964570803\\
33	3.05686069870436\\
34	3.04945434987303\\
35	3.07163211580473\\
36	3.07857793419349\\
37	3.07135457940943\\
38	3.05417963349527\\
39	3.06161284949563\\
40	3.07797428607014\\
41	3.08166389658056\\
42	3.06906824068655\\
43	3.07685730044911\\
44	3.06959642640651\\
45	3.04382768227879\\
46	3.06171377644948\\
47	3.07010403124248\\
48	3.05731202400714\\
49	3.07062597924447\\
50	3.06229126911754\\
51	3.05690352916751\\
52	3.06922983803813\\
53	3.05873407555027\\
54	3.0560740176733\\
55	3.0796699396199\\
56	3.05903847571533\\
57	3.05346016169011\\
58	3.05274905852315\\
59	3.06334635906267\\
60	3.05083683979572\\
61	3.06929869363696\\
62	3.07363736785062\\
63	3.06771725206619\\
64	3.0555037122552\\
65	3.06363688025101\\
66	3.05429921537096\\
67	3.05786906562215\\
68	3.06089932774921\\
69	3.08142837576992\\
70	3.07119110837081\\
71	3.07819963661889\\
72	3.07592939418843\\
73	3.06934163009663\\
74	3.04161092943315\\
75	3.06115524034447\\
76	3.05413353643378\\
77	3.04342239143652\\
78	3.06695751758088\\
79	3.06223131468353\\
80	3.06790529182809\\
81	3.07235109327085\\
82	3.06559871297931\\
83	3.06886571268794\\
84	3.07193206572329\\
85	3.06176523652693\\
86	3.05312708109413\\
87	3.05450888405249\\
88	3.0615663512455\\
89	3.05752676840432\\
90	3.05975652143678\\
91	3.07211379467446\\
92	3.0525903118369\\
93	3.04773333165863\\
94	3.06121170064906\\
95	3.06467535806715\\
96	3.07540315101183\\
97	3.06243086424018\\
98	3.06862373610331\\
99	3.04294148478522\\
100	3.05839878495847\\
};
\end{axis}
\end{tikzpicture}%
  }
  \\
  \subfloat[$\lambda=0.75$]{
%
%
\definecolor{mycolor1}{rgb}{0.00000,0.75000,0.75000}%
\begin{tikzpicture}

\begin{axis}[%
width=0.951\fwidth,
height=0.75\fwidth,
at={(0\fwidth,0\fwidth)},
scale only axis,
xmode=log,
xmin=1,
xmax=100,
xminorticks=true,
xlabel={t},
ymin=0,
ymax=7,
ylabel={$U, F$},
axis background/.style={fill=white}
]
\addplot [color=blue, line width=2.0pt, forget plot]
  table[row sep=crcr]{%
1	5.47461127098968\\
2	5.60478890856896\\
3	5.77191033058464\\
4	5.90011496868576\\
5	6.14377790957163\\
6	6.31041948276384\\
7	6.43552914365433\\
8	6.5151531251348\\
9	6.54729734736349\\
10	6.57798495912498\\
11	6.57849571632306\\
12	6.58765985244295\\
13	6.58900819766999\\
14	6.58807470034135\\
15	6.58544669469427\\
16	6.58533577780727\\
17	6.58370067428244\\
18	6.58971365581165\\
19	6.5878980368345\\
20	6.59048092832187\\
21	6.59125044154283\\
22	6.5896710150567\\
23	6.58987286386257\\
24	6.58946117658065\\
25	6.58999333445314\\
26	6.59031718167873\\
27	6.59139413998349\\
28	6.59090165755066\\
29	6.59051023277067\\
30	6.59219941170473\\
31	6.59124669937017\\
32	6.58902226344559\\
33	6.58930522013265\\
34	6.59156801046039\\
35	6.59140118319131\\
36	6.59071800018758\\
37	6.5904962121109\\
38	6.58994289370681\\
39	6.59067144643273\\
40	6.5908473033638\\
41	6.58935869164842\\
42	6.59079752619829\\
43	6.59045257663437\\
44	6.59038156874226\\
45	6.58922214845872\\
46	6.5920430806679\\
47	6.59061552462121\\
48	6.5910853518908\\
49	6.5904707507656\\
50	6.59147532593014\\
51	6.59088905580135\\
52	6.59228441522875\\
53	6.59101992136476\\
54	6.59064064465195\\
55	6.59183783311411\\
56	6.59023817047242\\
57	6.59061017708232\\
58	6.5906193777462\\
59	6.59140530114315\\
60	6.5912579079603\\
61	6.59103059374945\\
62	6.59073429065668\\
63	6.5902464349662\\
64	6.59060596841733\\
65	6.59015307164583\\
66	6.59133709302133\\
67	6.59065526384808\\
68	6.58958255899406\\
69	6.59157679882054\\
70	6.59033990286414\\
71	6.59149880127007\\
72	6.59081749662089\\
73	6.58978538380844\\
74	6.59092637473552\\
75	6.58927529131999\\
76	6.58850343746042\\
77	6.59070961917545\\
78	6.59053467074177\\
79	6.58991848266028\\
80	6.59074090596838\\
81	6.59116797494771\\
82	6.59120811456219\\
83	6.58951958489378\\
84	6.59076403901614\\
85	6.58992010133594\\
86	6.59063253300245\\
87	6.59074681389064\\
88	6.59131520405029\\
89	6.58979941235916\\
90	6.58994888558284\\
91	6.59168176024839\\
92	6.59066396864525\\
93	6.59105539706456\\
94	6.59127950787074\\
95	6.59064023962094\\
96	6.59008138513551\\
97	6.59140183039189\\
98	6.58838036017117\\
99	6.5909343332388\\
100	6.59170125491083\\
};
\addplot [color=black!50!green, dashed, line width=3.0pt, forget plot]
  table[row sep=crcr]{%
1	5.62056595098105\\
2	5.9249699465451\\
3	6.01582827122022\\
4	5.95571514939206\\
5	5.92294247321893\\
6	5.81452251942304\\
7	6.13426812770014\\
8	6.04595676543718\\
9	6.29227294231064\\
10	6.27799612179181\\
11	6.32051997859251\\
12	6.36527532407389\\
13	6.32925991780952\\
14	6.45465237806581\\
15	6.58223240737318\\
16	6.57987442531153\\
17	6.55874072799058\\
18	6.5753041864462\\
19	6.58119554829056\\
20	6.58728407958621\\
21	6.57809056851983\\
22	6.56764059133101\\
23	6.41013588735602\\
24	6.38917378777253\\
25	6.58721628658142\\
26	6.58203616141645\\
27	6.4153585168044\\
28	6.56214565465418\\
29	6.42451672084935\\
30	6.57980421789207\\
31	6.59042602754726\\
32	6.59044529564649\\
33	6.59080569187187\\
34	6.59089611744954\\
35	6.58916534077888\\
36	6.59030181816158\\
37	6.59088695127249\\
38	6.58937832439878\\
39	6.59148671433531\\
40	6.59073721108426\\
41	6.59093404750414\\
42	6.59087742004054\\
43	6.58928541970452\\
44	6.59058564316579\\
45	6.59132828384597\\
46	6.59058851035584\\
47	6.59110263530547\\
48	6.59147553116448\\
49	6.58668393005854\\
50	6.59062363782734\\
51	6.59101515662169\\
52	6.59176251951947\\
53	6.58857868597947\\
54	6.59013423143927\\
55	6.59111909588808\\
56	6.5911457483472\\
57	6.59003852850639\\
58	6.59143982256265\\
59	6.59048536674524\\
60	6.59039176821099\\
61	6.59137154542976\\
62	6.5912535226034\\
63	6.58981582043099\\
64	6.5908918306807\\
65	6.59083300972301\\
66	6.59079121501016\\
67	6.59030883873119\\
68	6.59129857301594\\
69	6.59128229744416\\
70	6.59095332658505\\
71	6.59023081358638\\
72	6.59081865364851\\
73	6.59094026619255\\
74	6.59040142378658\\
75	6.59163684757572\\
76	6.59110676104285\\
77	6.59119701707623\\
78	6.58989897914213\\
79	6.5918191325617\\
80	6.59000392128006\\
81	6.59005624567473\\
82	6.59175293524183\\
83	6.5908999928299\\
84	6.59095496039431\\
85	6.58972726779356\\
86	6.59027090444657\\
87	6.59089103275845\\
88	6.59083663036135\\
89	6.59079717575128\\
90	6.59042755723988\\
91	6.59032670713473\\
92	6.59144685236426\\
93	6.5909022764905\\
94	6.59173282877921\\
95	6.58975668648628\\
96	6.58944811703204\\
97	6.5909704505714\\
98	6.58887279368135\\
99	6.59056883095371\\
100	6.59080516986559\\
};
\addplot [color=red, dashdotted, line width=2.0pt, forget plot]
  table[row sep=crcr]{%
1	0.962011577752915\\
2	1.39512833468571\\
3	1.90148489293405\\
4	1.51648055618043\\
5	2.15248666521931\\
6	1.55704328138457\\
7	1.66059810154288\\
8	1.52166767379182\\
9	1.60756285452165\\
10	1.57032375887657\\
11	1.57143632706561\\
12	1.54570012712595\\
13	1.53062124352496\\
14	1.54268502801898\\
15	1.53021067085092\\
16	1.55206916663035\\
17	1.54627922395726\\
18	1.51390174557473\\
19	1.50375418924415\\
20	1.51530747681262\\
21	1.52196737407071\\
22	1.50354898705708\\
23	1.51332011123504\\
24	1.52232901674692\\
25	1.52509841996306\\
26	1.52238627498701\\
27	1.5324730708447\\
28	1.51163420208493\\
29	1.51812829528253\\
30	1.52774156845851\\
31	1.52183786551656\\
32	1.5016521070917\\
33	1.49397777104937\\
34	1.51952082470017\\
35	1.52980925719026\\
36	1.52606705237248\\
37	1.52515602796127\\
38	1.50973845097226\\
39	1.52151282867688\\
40	1.53441438633646\\
41	1.52415222096287\\
42	1.51132177031528\\
43	1.5246991042189\\
44	1.50540365539768\\
45	1.50579300125202\\
46	1.5332974431763\\
47	1.52602816377732\\
48	1.52498943304033\\
49	1.52505167508116\\
50	1.52936828998447\\
51	1.52360673777914\\
52	1.53058250934036\\
53	1.52357239030125\\
54	1.52025395181835\\
55	1.53499050028086\\
56	1.51742772358661\\
57	1.51441659560209\\
58	1.51420820595266\\
59	1.5262997058346\\
60	1.53616242827644\\
61	1.52204501826377\\
62	1.51266612225863\\
63	1.5190132195052\\
64	1.52431344447975\\
65	1.51530419794324\\
66	1.52201531315481\\
67	1.51061899113799\\
68	1.51304669429054\\
69	1.52532175891914\\
70	1.53260170132874\\
71	1.52225905085154\\
72	1.50963037140345\\
73	1.52293726685294\\
74	1.52308518163793\\
75	1.4950927590338\\
76	1.50963368680608\\
77	1.51272511429317\\
78	1.51042633785373\\
79	1.50938242359903\\
80	1.5271682127886\\
81	1.5117185420591\\
82	1.52766007832391\\
83	1.51122470668482\\
84	1.51941129140273\\
85	1.51728704039085\\
86	1.52688068863037\\
87	1.50792216758675\\
88	1.51772700059157\\
89	1.51734457771922\\
90	1.51448813532051\\
91	1.53151859073494\\
92	1.52116535307287\\
93	1.51578575724323\\
94	1.52090562538418\\
95	1.51986831699292\\
96	1.50477239990879\\
97	1.52614502228845\\
98	1.4935503656753\\
99	1.51752656157145\\
100	1.53496968961844\\
};
\addplot [color=mycolor1, dotted, line width=3.0pt, forget plot]
  table[row sep=crcr]{%
1	4.6864935622248\\
2	4.8994383088347\\
3	4.56109313648738\\
4	4.90229035547094\\
5	2.9314300803944\\
6	3.2784332597462\\
7	2.46963199592331\\
8	2.30125152271061\\
9	2.01471564245944\\
10	2.17712372051914\\
11	1.78625426786003\\
12	1.85353123296186\\
13	1.99614342802232\\
14	1.65287461428729\\
15	1.49633284810289\\
16	1.48610098565703\\
17	1.45156396181048\\
18	1.47406316737917\\
19	1.46740662240884\\
20	1.50873895369633\\
21	1.45078965236607\\
22	1.45245977605543\\
23	1.71819478818221\\
24	2.08550862826438\\
25	1.5059355521904\\
26	1.47394259570511\\
27	1.75330416344819\\
28	1.38731873977995\\
29	1.70173662068909\\
30	1.46762136054097\\
31	1.50989567888606\\
32	1.52057424298062\\
33	1.52296770227207\\
34	1.51890013441686\\
35	1.51167634567613\\
36	1.52013496455013\\
37	1.53042250209518\\
38	1.51614460001237\\
39	1.51068803218364\\
40	1.52349879587756\\
41	1.5292094373918\\
42	1.52723516835788\\
43	1.50219367701738\\
44	1.50889764085272\\
45	1.53357949409282\\
46	1.5063426671682\\
47	1.52330362351915\\
48	1.52806722140564\\
49	1.48492641984674\\
50	1.50496302104093\\
51	1.51726658029489\\
52	1.51662375013351\\
53	1.48434709843271\\
54	1.51950876152417\\
55	1.51591219634932\\
56	1.52244113525343\\
57	1.51228811211186\\
58	1.51907869642937\\
59	1.52153657235561\\
60	1.52002671698928\\
61	1.52644490901635\\
62	1.51264572580944\\
63	1.5160114727441\\
64	1.52871982636947\\
65	1.52455740252168\\
66	1.52881396551966\\
67	1.52041366900985\\
68	1.51920423367879\\
69	1.52572400377125\\
70	1.52896228076713\\
71	1.50694486971597\\
72	1.5191129634431\\
73	1.53589170731109\\
74	1.51554168107362\\
75	1.52114588847516\\
76	1.5073119787335\\
77	1.53177729643885\\
78	1.49938469465641\\
79	1.52493589459921\\
80	1.51681359788873\\
81	1.50273815004913\\
82	1.53692226281642\\
83	1.51074892880607\\
84	1.53056388532285\\
85	1.52037492759945\\
86	1.51626804395327\\
87	1.5065737477751\\
88	1.52211768630356\\
89	1.52115051067854\\
90	1.521429717609\\
91	1.50829188644711\\
92	1.53568509407678\\
93	1.51780971278461\\
94	1.5267654175131\\
95	1.51765382436885\\
96	1.51725665471583\\
97	1.51173138823043\\
98	1.50648110697348\\
99	1.50703573590319\\
100	1.5100847459919\\
};
\end{axis}
\end{tikzpicture}%
  }
  \subfloat[$\lambda=1$]{
    \input{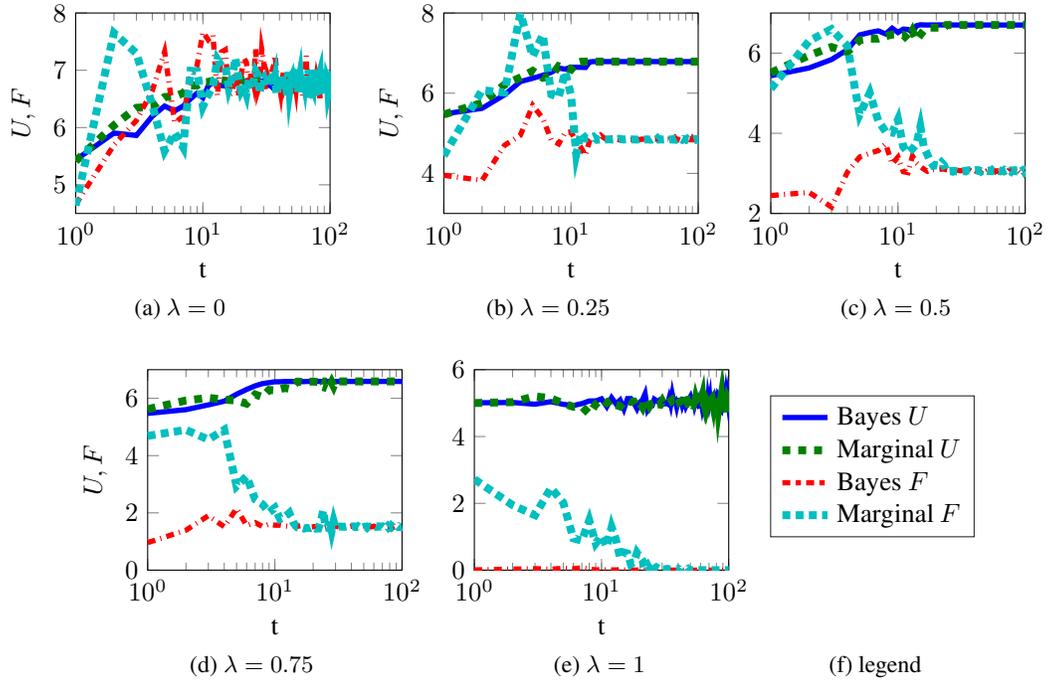}
  }
  \subfloat[legend]{
    \raisebox{4em}{\definecolor{mycolor1}{rgb}{0.00000,0.75000,0.75000}%

\begin{tikzpicture}

  \begin{axis}[%
    hide axis,
    xmin=10,
    xmax=50,
    ymin=0,
    ymax=0.4,
    legend style={draw=white!15!black,legend cell align=left}
    ]
    \addlegendimage{color=blue, line width=2.0pt}
    \addlegendentry{Bayes $U$};
    \addlegendimage{color=black!50!green, dashed, line width=3.0pt};
    \addlegendentry{Marginal $U$};
    \addlegendimage{color=red, line width=2.0pt, dashdotted}
    \addlegendentry{Bayes $F$};
    \addlegendimage{color=mycolor1, dotted, line width=3.0pt};
    \addlegendentry{Marginal $F$};
    
  \end{axis}
\end{tikzpicture}}
  }
  \caption{\textbf{Synthetic data, utility-fairness trade-off.} This plot is generated from the same data as Figure~\ref{fig_exp_1}. However, now we are plotting the utility and fairness of each individual policy separately. In all cases, it can be seen that the Bayesian policy achieves the same utility as the non-Bayesian policy, while achieving a lower fairness violation.}
  \label{fig_exp_1:tradeoff_extend}
\end{figure*}

\begin{figure*}
\centering
  \subfloat[$\lambda=0$]{
%
%
\begin{tikzpicture}

\begin{axis}[%
width=0.951\fwidth,
height=0.75\fwidth,
at={(0\fwidth,0\fwidth)},
scale only axis,
xmode=log,
xmin=1,
xmax=100,
xminorticks=true,
xlabel={$t \times 10$},
ymin=0.45,
ymax=0.7,
ylabel={V},
axis background/.style={fill=white}
]
\addplot [color=blue, line width=2.0pt, forget plot]
  table[row sep=crcr]{%
1	0.457219642968273\\
2	0.592769351348169\\
3	0.628816705720447\\
4	0.62505185351612\\
5	0.628217785699876\\
6	0.639218851111152\\
7	0.630873927371614\\
8	0.636227797683053\\
9	0.638026015044111\\
10	0.636823616047142\\
11	0.632410660316369\\
12	0.633597617267562\\
13	0.636527784056834\\
14	0.637892469025015\\
15	0.633703787028147\\
16	0.636328078671473\\
17	0.637880712599377\\
18	0.634684165215899\\
19	0.629978414138595\\
20	0.628159353629191\\
21	0.637486382196772\\
22	0.630647724209231\\
23	0.636203699395604\\
24	0.632324272807024\\
25	0.643830720098772\\
26	0.633034238724461\\
27	0.635123838924277\\
28	0.636057361395139\\
29	0.642104432395723\\
30	0.63957063775151\\
31	0.636676944667785\\
32	0.638328034343207\\
33	0.642713754763223\\
34	0.638813995795796\\
35	0.637222326954388\\
36	0.641876694490855\\
37	0.637397077631842\\
38	0.641452323880008\\
39	0.639449683802413\\
40	0.63998585281562\\
41	0.645562739043969\\
42	0.638373912813148\\
43	0.63869610092111\\
44	0.639248573866283\\
45	0.64553183974159\\
46	0.64574217834242\\
47	0.647159828876256\\
48	0.642734787921472\\
49	0.645351749272455\\
50	0.647986777824105\\
51	0.648498084835137\\
52	0.641666027891812\\
53	0.649788940171897\\
54	0.644719769480268\\
55	0.642816500136809\\
56	0.648563504192992\\
57	0.647484270514486\\
58	0.646522917641747\\
59	0.646362741362458\\
60	0.64943324763383\\
61	0.650725119238089\\
};
\addplot [color=red, dashed, line width=3.0pt, forget plot]
  table[row sep=crcr]{%
1	0.50533782292246\\
2	0.59699112902373\\
3	0.625888847773672\\
4	0.624084738953654\\
5	0.630607004491168\\
6	0.632831760088705\\
7	0.636757087308418\\
8	0.643065048813731\\
9	0.635831074924727\\
10	0.640172268918583\\
11	0.637618500393285\\
12	0.637063543459606\\
13	0.637801195651245\\
14	0.639132776275881\\
15	0.635237491522697\\
16	0.63711453971561\\
17	0.635829219303754\\
18	0.631722475647075\\
19	0.634325927457996\\
20	0.631533283036592\\
21	0.634408642218423\\
22	0.633190718318277\\
23	0.634003706915972\\
24	0.637844298466811\\
25	0.632637216101121\\
26	0.630133780104625\\
27	0.629845209068525\\
28	0.633468263655565\\
29	0.636356158428604\\
30	0.641203029255701\\
31	0.638991861329888\\
32	0.640709523818283\\
33	0.638816379513034\\
34	0.63827201778754\\
35	0.639309083888535\\
36	0.640322622034011\\
37	0.636038032575251\\
38	0.641624574388145\\
39	0.640674600649206\\
40	0.636610965813643\\
41	0.637518273518509\\
42	0.637490556877938\\
43	0.642085811731694\\
44	0.642900366689167\\
45	0.64514581492819\\
46	0.644982417065964\\
47	0.645169810588366\\
48	0.646840576097694\\
49	0.649222992097267\\
50	0.645768342344984\\
51	0.646975527164982\\
52	0.648342200211879\\
53	0.646810289670806\\
54	0.648718371906363\\
55	0.649127848411779\\
56	0.650500132527732\\
57	0.650312507421561\\
58	0.650414667141449\\
59	0.648262622639421\\
60	0.646204756681934\\
61	0.64655970208432\\
};
\end{axis}
\end{tikzpicture}%
  }
  \subfloat[$\lambda=0.25$]{
%
%
\begin{tikzpicture}

\begin{axis}[%
width=0.951\fwidth,
height=0.75\fwidth,
at={(0\fwidth,0\fwidth)},
scale only axis,
xmode=log,
xmin=1,
xmax=100,
xminorticks=true,
xlabel={$t \times 10$},
ymin=-0.4,
ymax=0.2,
axis background/.style={fill=white}
]
\addplot [color=blue, line width=2.0pt, forget plot]
  table[row sep=crcr]{%
1	-0.107309117160114\\
2	-0.184362598610003\\
3	-0.202687343033588\\
4	-0.16405353282543\\
5	-0.0404647596781372\\
6	-0.142185989426937\\
7	-0.0867070351173507\\
8	-0.103112003957054\\
9	-0.147973876301626\\
10	-0.0559405398752546\\
11	-0.0904189398323078\\
12	-0.143167279778263\\
13	-0.090396490677853\\
14	-0.0933118500216149\\
15	-0.118988174174151\\
16	-0.0678767358522148\\
17	-0.129987246948872\\
18	-0.186594510932254\\
19	-0.0215939023932779\\
20	-0.104510953104091\\
21	-0.13723838749799\\
22	-0.18620095487661\\
23	-0.212719706095191\\
24	-0.225196694541515\\
25	-0.113776632704301\\
26	-0.143391389100894\\
27	-0.243940532329244\\
28	-0.125244153045619\\
29	-0.266760554277998\\
30	-0.275116708400049\\
31	-0.219899848624522\\
32	-0.218521639522065\\
33	-0.221299282448636\\
34	-0.233220662963895\\
35	-0.209380517899935\\
36	-0.236064029388241\\
37	-0.212863027765073\\
38	-0.201394889444524\\
39	-0.237944741868967\\
40	-0.259988067335625\\
41	-0.212928270520192\\
42	-0.279107144092697\\
43	-0.247823881984387\\
44	-0.244473723197659\\
45	-0.27712897889869\\
46	-0.2939114684982\\
47	-0.262173085855822\\
48	-0.224572653789633\\
49	-0.277207371017433\\
50	-0.264808868869637\\
51	-0.255175102937293\\
52	-0.29510233737979\\
53	-0.284122136547406\\
54	-0.243506723452448\\
55	-0.263471235968001\\
56	-0.273090002692023\\
57	-0.282418540838345\\
58	-0.252055818552359\\
59	-0.258663444499106\\
60	-0.291293216722242\\
61	-0.229192294502873\\
};
\addplot [color=red, dashed, line width=3.0pt, forget plot]
  table[row sep=crcr]{%
1	0.156154700909796\\
2	-0.202529646913159\\
3	-0.334501624221845\\
4	-0.318155142166199\\
5	-0.153483607896056\\
6	-0.207611067782485\\
7	-0.185766432059466\\
8	-0.267942726423269\\
9	-0.230990254470568\\
10	-0.187853728223043\\
11	-0.207663739985019\\
12	-0.248003162170955\\
13	-0.226471600713461\\
14	-0.235871101675925\\
15	-0.286725135969644\\
16	-0.264493187308551\\
17	-0.253062643392712\\
18	-0.27076819747302\\
19	-0.241941070940066\\
20	-0.259674995051562\\
21	-0.261194466691931\\
22	-0.243502937063522\\
23	-0.258896757797132\\
24	-0.229200967686043\\
25	-0.243544823178511\\
26	-0.246523459323759\\
27	-0.239579465861304\\
28	-0.2058848765737\\
29	-0.244769717827798\\
30	-0.277044962490945\\
31	-0.261511328888575\\
32	-0.261486322591778\\
33	-0.261059257567426\\
34	-0.240391853350955\\
35	-0.209165659984946\\
36	-0.215611668997483\\
37	-0.219133607027449\\
38	-0.240239168432912\\
39	-0.26495817875911\\
40	-0.290806207461142\\
41	-0.271959186404205\\
42	-0.268198074806926\\
43	-0.26509097620102\\
44	-0.324057188614507\\
45	-0.32565688601495\\
46	-0.311684764206851\\
47	-0.333186997244714\\
48	-0.27544649692533\\
49	-0.250408797980645\\
50	-0.261630623670323\\
51	-0.250494309595537\\
52	-0.258998661381812\\
53	-0.277524904868859\\
54	-0.290114185134193\\
55	-0.268412694296168\\
56	-0.268939296683932\\
57	-0.267282191301071\\
58	-0.271482157224587\\
59	-0.272799271069239\\
60	-0.255638700719698\\
61	-0.255788000793323\\
};
\end{axis}
\end{tikzpicture}%
  }
  \subfloat[$\lambda=0.5$]{
%
%
\begin{tikzpicture}

\begin{axis}[%
width=0.951\fwidth,
height=0.75\fwidth,
at={(0\fwidth,0\fwidth)},
scale only axis,
xmode=log,
xmin=1,
xmax=100,
xminorticks=true,
xlabel={$t \times 10$},
ymin=-1.4,
ymax=0,
axis background/.style={fill=white}
]
\addplot [color=blue, line width=2.0pt, forget plot]
  table[row sep=crcr]{%
1	-1.09622117340752\\
2	-0.439738831631238\\
3	-0.477010262732245\\
4	-0.459566994825697\\
5	-0.267438768251134\\
6	-0.261884198822098\\
7	-0.314800806678737\\
8	-0.323934739140698\\
9	-0.204202454295437\\
10	-0.220866546747302\\
11	-0.224784320984486\\
12	-0.174850401323677\\
13	-0.204336810453115\\
14	-0.279220020694667\\
15	-0.304215110722433\\
16	-0.111899168962922\\
17	-0.196698433914746\\
18	-0.212212607848973\\
19	-0.234718297036081\\
20	-0.322668180703975\\
21	-0.286071421580403\\
22	-0.259499649273306\\
23	-0.316375891732036\\
24	-0.273840483206877\\
25	-0.355624719979204\\
26	-0.379757992540423\\
27	-0.288379194513057\\
28	-0.259927407512447\\
29	-0.331340818267421\\
30	-0.342494252379114\\
31	-0.343352143939611\\
32	-0.415313928296665\\
33	-0.422017029459941\\
34	-0.335179784452087\\
35	-0.440060447624601\\
36	-0.514029990888593\\
37	-0.43325890065029\\
38	-0.381527909556519\\
39	-0.357585870240933\\
40	-0.342513238337931\\
41	-0.351232807159593\\
42	-0.491960446194131\\
43	-0.414521352705201\\
44	-0.426725954088604\\
45	-0.442441956965056\\
46	-0.48463572780352\\
47	-0.539087119210198\\
48	-0.444488011314339\\
49	-0.50689510900555\\
50	-0.577616030570053\\
51	-0.438450894005509\\
52	-0.538629948619004\\
53	-0.427097161747581\\
54	-0.449574311025686\\
55	-0.549606168957645\\
56	-0.592101430233288\\
57	-0.633608692550753\\
58	-0.517199531938076\\
59	-0.606102743738442\\
60	-0.513954328614164\\
61	-0.657646711598371\\
};
\addplot [color=red, dashed, line width=3.0pt, forget plot]
  table[row sep=crcr]{%
1	-0.138097439995355\\
2	-0.748893406957869\\
3	-0.82460686564522\\
4	-0.686300787106743\\
5	-0.590927424830682\\
6	-0.644914108373764\\
7	-0.621536393076511\\
8	-0.732227183784745\\
9	-0.678985832465308\\
10	-0.677538882070215\\
11	-0.649492934072179\\
12	-0.626953188541119\\
13	-0.514430272564223\\
14	-0.552610078526062\\
15	-0.651427496731807\\
16	-0.610373827755142\\
17	-0.618730957216695\\
18	-0.62662953336171\\
19	-0.647029602765731\\
20	-0.771071949704786\\
21	-0.881634832338758\\
22	-0.787624432290519\\
23	-0.749027999750425\\
24	-0.768887901516103\\
25	-0.766265274524894\\
26	-0.754819034015115\\
27	-0.682709696959525\\
28	-0.755854713298122\\
29	-0.795893759226808\\
30	-0.87634212964454\\
31	-0.837761717305747\\
32	-0.800158278551069\\
33	-0.788079320268578\\
34	-0.757587585669062\\
35	-0.735977528675732\\
36	-0.792743783163313\\
37	-0.826465829671237\\
38	-0.79949075466688\\
39	-0.763326650871933\\
40	-0.812995026125147\\
41	-0.841258496638301\\
42	-0.811392209795322\\
43	-0.925172456274953\\
44	-0.981952927632142\\
45	-0.903004676499468\\
46	-0.985923064692685\\
47	-1.00004897431729\\
48	-0.981936952406439\\
49	-0.955363797496084\\
50	-0.95804465183494\\
51	-1.02561492499815\\
52	-1.00549051871228\\
53	-0.993986917287425\\
54	-1.03469402307911\\
55	-1.0713271054811\\
56	-1.11079025820582\\
57	-1.12157352569169\\
58	-1.13388551393297\\
59	-1.10161020503734\\
60	-1.10864470943151\\
61	-1.03972034327087\\
};
\end{axis}
\end{tikzpicture}%
  }
  \\
  \subfloat[$\lambda=0.75$]{
%
%
\begin{tikzpicture}

\begin{axis}[%
width=0.951\fwidth,
height=0.75\fwidth,
at={(0\fwidth,0\fwidth)},
scale only axis,
xmode=log,
xmin=1,
xmax=100,
xminorticks=true,
xlabel={$t \times 10$},
ymin=-1.4,
ymax=-0,
ylabel={V},
axis background/.style={fill=white}
]
\addplot [color=blue, line width=2.0pt, forget plot]
  table[row sep=crcr]{%
1	-0.577093123554267\\
2	-0.248954758659022\\
3	-0.35764860428935\\
4	-0.384489263771175\\
5	-0.137724767472587\\
6	-0.287617263649582\\
7	-0.238022447402015\\
8	-0.227177403202236\\
9	-0.167308886039736\\
10	-0.161376703486556\\
11	-0.244705523926808\\
12	-0.225120239981884\\
13	-0.207276792295031\\
14	-0.171776517037101\\
15	-0.172712713325723\\
16	-0.185088779203056\\
17	-0.167798962290238\\
18	-0.164028130972016\\
19	-0.203935674606876\\
20	-0.229890198812524\\
21	-0.206840332491921\\
22	-0.238091646259464\\
23	-0.175086145068783\\
24	-0.256161743484268\\
25	-0.191277333731401\\
26	-0.211347306437156\\
27	-0.19898722167119\\
28	-0.208658653631961\\
29	-0.259816824867176\\
30	-0.237928172498554\\
31	-0.23358861416853\\
32	-0.316512017404449\\
33	-0.204916004601392\\
34	-0.239785151677314\\
35	-0.285154711351199\\
36	-0.201371426057616\\
37	-0.279393824605483\\
38	-0.236736659060055\\
39	-0.258409042059961\\
40	-0.212586736782501\\
41	-0.366452766032414\\
42	-0.225163197594042\\
43	-0.400058658771094\\
44	-0.254925092948163\\
45	-0.283929055975851\\
46	-0.288689458039081\\
47	-0.303848260007582\\
48	-0.306914990520981\\
49	-0.299002998751941\\
50	-0.259695035886033\\
51	-0.320960950250418\\
52	-0.291040251313417\\
53	-0.354678279092358\\
54	-0.226764497936105\\
55	-0.406969528769398\\
56	-0.35735127127503\\
57	-0.450091430573555\\
58	-0.384435368648727\\
59	-0.344316593562978\\
60	-0.342268282505291\\
61	-0.356751765642486\\
};
\addplot [color=red, dashed, line width=3.0pt, forget plot]
  table[row sep=crcr]{%
1	-0.558867096944863\\
2	-1.09288402759395\\
3	-0.839601895925869\\
4	-1.13434859230935\\
5	-0.938751243985718\\
6	-1.02852382062057\\
7	-0.815833675753988\\
8	-1.05133374684411\\
9	-0.880020524351822\\
10	-0.710895330347298\\
11	-0.757053571799602\\
12	-0.693359796897153\\
13	-0.609044947073203\\
14	-0.619878551256528\\
15	-0.770333297321465\\
16	-0.745953732884505\\
17	-0.752773746980868\\
18	-0.842252441020057\\
19	-0.844781239988941\\
20	-0.999171417591342\\
21	-1.15237330771864\\
22	-1.01314018000917\\
23	-0.939869183249194\\
24	-0.82712673891788\\
25	-0.828311262541784\\
26	-0.773600331997152\\
27	-0.745596732067385\\
28	-0.841592443963456\\
29	-0.921138544961516\\
30	-1.02085899320577\\
31	-0.928805249697245\\
32	-0.87047255661201\\
33	-0.86578428332466\\
34	-0.821524625735823\\
35	-0.802852201942028\\
36	-0.869243511936604\\
37	-0.857984019399831\\
38	-0.789458059808354\\
39	-0.729281534798616\\
40	-0.84646490496439\\
41	-0.772860543050829\\
42	-0.79444912383881\\
43	-0.850558520633302\\
44	-0.943529592739689\\
45	-0.911562918124687\\
46	-0.953111291385843\\
47	-1.00327605127269\\
48	-1.02712018838934\\
49	-1.00136473675006\\
50	-1.01619500858576\\
51	-1.16391263914022\\
52	-1.20232686441201\\
53	-1.09666354315773\\
54	-1.16203449202619\\
55	-1.15049813617522\\
56	-1.15281503321226\\
57	-1.21027496105179\\
58	-1.15787421113916\\
59	-1.28995903552509\\
60	-1.21658911309429\\
61	-1.13001821316672\\
};
\end{axis}
\end{tikzpicture}%
  }
  \subfloat[$\lambda=1$]{
%
%
\begin{tikzpicture}

\begin{axis}[%
width=0.951\fwidth,
height=0.75\fwidth,
at={(0\fwidth,0\fwidth)},
scale only axis,
xmode=log,
xmin=1,
xmax=100,
xminorticks=true,
xlabel={$t \times 10$},
ymin=-2,
ymax=-0,
axis background/.style={fill=white}
]
\addplot [color=blue, line width=2.0pt, forget plot]
  table[row sep=crcr]{%
1	-0.00232770753700901\\
2	-1.97313695873102e-05\\
3	-2.37767934892516e-06\\
4	-5.41730805759902e-07\\
5	-7.32149168349694e-08\\
6	-2.39811834130854e-08\\
7	-5.18082968701592e-09\\
8	-5.80788316935145e-09\\
9	-1.76256829256548e-09\\
10	-1.38567073595432e-09\\
11	-2.7766696901406e-10\\
12	-2.3480673996106e-10\\
13	-8.92285754128819e-11\\
14	-1.19961233710043e-10\\
15	-2.67771625177127e-11\\
16	-2.1807283694217e-11\\
17	-9.888893758119e-12\\
18	-2.04587275550225e-11\\
19	-1.00294880288958e-11\\
20	-7.27273007814714e-12\\
21	-2.18409079376077e-12\\
22	-1.90552725474449e-12\\
23	-3.08552681136802e-12\\
24	-1.04407443807789e-12\\
25	-6.89534351010657e-13\\
26	-7.04452116553748e-13\\
27	-5.33265371320719e-13\\
28	-2.21733300835158e-13\\
29	-1.09917521624577e-13\\
30	-1.97092353458009e-13\\
31	-1.06901072975012e-13\\
32	-1.02563119096526e-13\\
33	-8.84436177317179e-14\\
34	-3.79100124163196e-14\\
35	-4.2067887937253e-14\\
36	-2.70027547549659e-14\\
37	-2.66635307650848e-14\\
38	-2.67287785600447e-14\\
39	-2.64342999326352e-14\\
40	-2.65179986462852e-14\\
41	-2.68229872585042e-14\\
42	-2.66997278710527e-14\\
43	-2.69707767201082e-14\\
44	-2.67044068810534e-14\\
45	-2.67210612428623e-14\\
46	-2.65715988911875e-14\\
47	-2.65002787170287e-14\\
48	-2.64873004782108e-14\\
49	-2.61218989350948e-14\\
50	-2.64084264642284e-14\\
51	-2.65721748735916e-14\\
52	-2.66424481150276e-14\\
53	-2.65125945760817e-14\\
54	-2.61390327175519e-14\\
55	-2.63298827930954e-14\\
56	-2.66575354658841e-14\\
57	-2.65806977191069e-14\\
58	-2.63970270948243e-14\\
59	-2.63521868646625e-14\\
60	-2.62708242678811e-14\\
61	-2.67337836777301e-14\\
};
\addplot [color=red, dashed, line width=3.0pt, forget plot]
  table[row sep=crcr]{%
1	-1.61749222996759\\
2	-0.501016985730073\\
3	-0.230141055920469\\
4	-0.211363679210412\\
5	-0.180351722603156\\
6	-0.199490315348852\\
7	-0.167503225130075\\
8	-0.197680427138255\\
9	-0.168326964955281\\
10	-0.108154945624742\\
11	-0.121796007045675\\
12	-0.123244092080481\\
13	-0.120059074298961\\
14	-0.12768082663036\\
15	-0.118695747205944\\
16	-0.14620525838291\\
17	-0.156154755993665\\
18	-0.15210970699618\\
19	-0.171018095781681\\
20	-0.159637204861449\\
21	-0.147168529193469\\
22	-0.282154360398568\\
23	-0.245469097598784\\
24	-0.252947839856794\\
25	-0.242482705863201\\
26	-0.244915288625687\\
27	-0.224359689620484\\
28	-0.214533978554422\\
29	-0.216114585568234\\
30	-0.208860180236654\\
31	-0.221529291908727\\
32	-0.205563531932023\\
33	-0.233055721377914\\
34	-0.23298981535382\\
35	-0.215062245554676\\
36	-0.226954153936872\\
37	-0.222039885512241\\
38	-0.198543948266946\\
39	-0.191065849011378\\
40	-0.189777971418282\\
41	-0.188793398961995\\
42	-0.190732460573578\\
43	-0.185246409681183\\
44	-0.184603807555449\\
45	-0.188647955865062\\
46	-0.191033781025243\\
47	-0.179745728112148\\
48	-0.178803266307534\\
49	-0.163926246089407\\
50	-0.173423292392264\\
51	-0.179302094659936\\
52	-0.181333239906458\\
53	-0.177046092214897\\
54	-0.184951112414626\\
55	-0.181338109208384\\
56	-0.147815635344293\\
57	-0.149881515239679\\
58	-0.14463523042435\\
59	-0.144488753661275\\
60	-0.150296785419916\\
61	-0.157708726945062\\
};
\end{axis}
\end{tikzpicture}%
  }
  \subfloat[legend]{
    \raisebox{4em}{\begin{tikzpicture}

  \begin{axis}[%
    hide axis,
    xmin=10,
    xmax=50,
    ymin=0,
    ymax=0.4,
    legend style={draw=white!15!black,legend cell align=left}
    ]
    \addlegendimage{color=mycolor1, line width=2.0pt}
    \addlegendentry{Bayes};
    \addlegendimage{color=mycolor2, dashed, line width=3.0pt};
    \addlegendentry{Marginal};
  \end{axis}
\end{tikzpicture}}
  }

  \caption{\textbf{COMPAS dataset.} Demonstration of balance on the COMPAS dataset. The plots show the value measured on the holdout set for the \textbf{Bayes} and \textbf{Marginal} balance.
  Figures (a-e) show the utility achieved under different choices of $\lambda$ as we we observe each of the  6,000 training data points. Utility and fairness are measured on the empirical distribution of the remaining data and it can be seen that the Bayesian approach dominates as soon as fairness becomes important, i.e. $\lambda > 0$.  }
  \label{fig:compas-dbn_extend}
\end{figure*}

\begin{figure*}
\centering
  \subfloat[$\lambda=0$]{
%
%
\begin{tikzpicture}

\begin{axis}[%
width=0.951\fwidth,
height=0.75\fwidth,
at={(0\fwidth,0\fwidth)},
scale only axis,
xmode=log,
xmin=1,
xmax=100,
xminorticks=true,
xlabel style={font=\color{white!15!black}},
xlabel={t},
ymin=0.5,
ymax=0.64,
ylabel style={font=\color{white!15!black}},
ylabel={V},
axis background/.style={fill=white}
]
\addplot [color=blue, line width=2.0pt, forget plot]
  table[row sep=crcr]{%
1	0.533404412801252\\
2	0.567059367237681\\
3	0.574983684728794\\
4	0.595440941591605\\
5	0.597329784902714\\
6	0.598376899476387\\
7	0.608381899604062\\
8	0.614299903055019\\
9	0.614682328515722\\
10	0.622047920597586\\
11	0.624110283420211\\
12	0.623040076545587\\
13	0.622922815525939\\
14	0.622693703044924\\
15	0.624972493335543\\
16	0.623768267741923\\
17	0.625076223581217\\
18	0.624417108492841\\
19	0.624469368091222\\
20	0.626122024040929\\
21	0.627990230164747\\
22	0.627328710101433\\
23	0.627229984116765\\
24	0.627284500665436\\
25	0.628167660760683\\
26	0.62745907557843\\
27	0.627233400174376\\
28	0.626561170176999\\
29	0.626140421275828\\
30	0.627206627096838\\
31	0.626985313402765\\
32	0.627098543124935\\
33	0.626653517790801\\
34	0.62669323165553\\
35	0.626164136918594\\
36	0.625226096500675\\
37	0.624375614738648\\
38	0.626043052672986\\
39	0.625037401619658\\
40	0.624754173295781\\
41	0.625734326489491\\
42	0.624858959455676\\
43	0.624748862220927\\
44	0.625474170196668\\
45	0.624772055502031\\
46	0.624434789925371\\
47	0.624134265867323\\
48	0.624465786011358\\
49	0.62488377850369\\
50	0.624799243863898\\
51	0.623188769161788\\
52	0.622787015457619\\
53	0.625093107691527\\
54	0.62416250160529\\
55	0.624845855606283\\
56	0.625039701294087\\
57	0.625218860435963\\
58	0.626623043783051\\
59	0.626765171689463\\
60	0.626873002508487\\
};
\addplot [color=red, dashed, line width=3.0pt, forget plot]
  table[row sep=crcr]{%
1	0.537691550449067\\
2	0.52692425944197\\
3	0.51952806147473\\
4	0.516060917995437\\
5	0.525734464151558\\
6	0.526857982759685\\
7	0.527688486772251\\
8	0.553006242367054\\
9	0.580639537966924\\
10	0.583055435798432\\
11	0.583845485337254\\
12	0.585535021854244\\
13	0.589716220033718\\
14	0.591727972382982\\
15	0.592321109916687\\
16	0.593212336283605\\
17	0.59331788850473\\
18	0.592639117087132\\
19	0.594276492899578\\
20	0.593631864589812\\
21	0.593134373439542\\
22	0.592526975573227\\
23	0.59343823978082\\
24	0.59471503376963\\
25	0.594382031634667\\
26	0.59347186036185\\
27	0.594295880998185\\
28	0.592974355499971\\
29	0.594610054709994\\
30	0.594316274450942\\
31	0.59587687254927\\
32	0.594384171167264\\
33	0.59368169358206\\
34	0.593672061191477\\
35	0.59309575835021\\
36	0.591830149985295\\
37	0.592248365155305\\
38	0.592785655298101\\
39	0.592284775003569\\
40	0.591763395381349\\
41	0.591416781456517\\
42	0.593000957180953\\
43	0.591369641506618\\
44	0.592580298295271\\
45	0.592677941655292\\
46	0.591707641469623\\
47	0.592929785718232\\
48	0.591688464121018\\
49	0.591343754331541\\
50	0.589724575501398\\
51	0.590358577218326\\
52	0.590327436945083\\
53	0.590665615799822\\
54	0.595774337247164\\
55	0.595592315397091\\
56	0.595804057900644\\
57	0.59578874458756\\
58	0.596457066440588\\
59	0.597530717491227\\
60	0.597648444447298\\
};
\end{axis}
\end{tikzpicture}%
  }
  \subfloat[$\lambda=0.25$]{
%
%
\begin{tikzpicture}

\begin{axis}[%
width=0.951\fwidth,
height=0.75\fwidth,
at={(0\fwidth,0\fwidth)},
scale only axis,
xmode=log,
xmin=1,
xmax=100,
xminorticks=true,
xlabel style={font=\color{white!15!black}},
xlabel={t},
ymin=-0.7,
ymax=0.1,
axis background/.style={fill=white}
]
\addplot [color=blue, line width=2.0pt, forget plot]
  table[row sep=crcr]{%
1	-0.207196320566641\\
2	-0.425253875761661\\
3	-0.446970299122139\\
4	-0.41023006864754\\
5	-0.373322872219761\\
6	-0.367076094492257\\
7	-0.412878957537136\\
8	-0.413231937902242\\
9	-0.396049100003292\\
10	-0.373751675886177\\
11	-0.307771466008203\\
12	-0.142776108558127\\
13	-0.0925268949955039\\
14	-0.043827489725773\\
15	-0.0362719426975273\\
16	-0.0534966546269057\\
17	-0.0470706573695844\\
18	-0.0439917899143233\\
19	-0.0731027476537716\\
20	-0.0710715562201365\\
21	0.0127395664509126\\
22	0.0125535248449869\\
23	-0.0189231624399058\\
24	-0.0277232488542588\\
25	-0.0134581577090796\\
26	-0.0217588559150046\\
27	-0.0216893000684138\\
28	-0.0301266863372562\\
29	-0.0423852719635449\\
30	-0.0837378934262495\\
31	-0.0752088135448239\\
32	-0.0701158205484715\\
33	-0.0550343965029364\\
34	-0.0598645896360828\\
35	-0.0535803870353569\\
36	-0.0486584450371605\\
37	-0.0528834512409008\\
38	-0.09374799126579\\
39	-0.108641858592263\\
40	-0.116196652395813\\
41	-0.116755534241694\\
42	-0.0790210057246608\\
43	-0.101120407287318\\
44	-0.119825974775341\\
45	-0.114650814449016\\
46	-0.127224682409873\\
47	-0.15229978166135\\
48	-0.150131385782543\\
49	-0.158781774372714\\
50	-0.15623051925632\\
51	-0.167159607725622\\
52	-0.192822190369984\\
53	-0.187108430802122\\
54	-0.181673668158932\\
55	-0.208850712289315\\
56	-0.2069069917245\\
57	-0.211187039445116\\
58	-0.20459968944502\\
59	-0.190336379312404\\
60	-0.193366570484612\\
};
\addplot [color=red, dashed, line width=3.0pt, forget plot]
  table[row sep=crcr]{%
1	-0.310007172235672\\
2	-0.423714921637176\\
3	-0.410133018536837\\
4	-0.439747920197301\\
5	-0.455407707547421\\
6	-0.498202819436065\\
7	-0.5120224711627\\
8	-0.520697951706963\\
9	-0.523509545840905\\
10	-0.513259555010702\\
11	-0.534619050417941\\
12	-0.524377332258372\\
13	-0.540265530899788\\
14	-0.546714144110614\\
15	-0.557246115736649\\
16	-0.556180713008244\\
17	-0.506115024268118\\
18	-0.452408989176526\\
19	-0.438493393679697\\
20	-0.301299392319608\\
21	-0.143032833276735\\
22	-0.22920493119444\\
23	-0.36646075589015\\
24	-0.294566905747852\\
25	-0.243896037045222\\
26	-0.26842926302992\\
27	-0.276511738663329\\
28	-0.334838492474182\\
29	-0.36257984378723\\
30	-0.363663238136526\\
31	-0.377404312691054\\
32	-0.339176332999594\\
33	-0.316367727873905\\
34	-0.0775681732383819\\
35	-0.115417577113182\\
36	-0.174373859692571\\
37	-0.0543473840941811\\
38	-0.0646683542813737\\
39	-0.0736473944879555\\
40	-0.108832687588216\\
41	-0.153868009813003\\
42	-0.148133426159907\\
43	-0.191274332541181\\
44	-0.207928678469066\\
45	-0.206719417210556\\
46	-0.189150556247738\\
47	-0.20885630192149\\
48	-0.179619375326677\\
49	-0.196970384143575\\
50	-0.192503247410312\\
51	-0.237320806979659\\
52	-0.226844801344638\\
53	-0.261983151124036\\
54	-0.288565282674517\\
55	-0.298741252764581\\
56	-0.301494430295086\\
57	-0.296720140763373\\
58	-0.287235205015912\\
59	-0.278661589789526\\
60	-0.283053655854063\\
};
\end{axis}
\end{tikzpicture}%
  }
  \subfloat[$\lambda=0.5$]{
%
%
\begin{tikzpicture}

\begin{axis}[%
width=0.951\fwidth,
height=0.75\fwidth,
at={(0\fwidth,0\fwidth)},
scale only axis,
xmode=log,
xmin=1,
xmax=100,
xminorticks=true,
xlabel style={font=\color{white!15!black}},
xlabel={t},
ymin=-2,
ymax=-0,
axis background/.style={fill=white}
]
\addplot [color=blue, line width=2.0pt, forget plot]
  table[row sep=crcr]{%
1	-0.139557730363354\\
2	-0.238920667169255\\
3	-0.0997916326764938\\
4	-0.182229548458017\\
5	-0.205267446075831\\
6	-0.279826712648142\\
7	-0.332249305176689\\
8	-0.323237535828966\\
9	-0.334848601807552\\
10	-0.312990041492882\\
11	-0.324619975130561\\
12	-0.328870423071226\\
13	-0.313326571427902\\
14	-0.319808076558834\\
15	-0.30709006893331\\
16	-0.317914410723469\\
17	-0.331045983578041\\
18	-0.342385254226511\\
19	-0.358246869022327\\
20	-0.369248314322897\\
21	-0.377973473224499\\
22	-0.415440383538921\\
23	-0.38990705668132\\
24	-0.36686729244785\\
25	-0.375042658748737\\
26	-0.366006828119552\\
27	-0.383213880975782\\
28	-0.386114407046102\\
29	-0.380250046047834\\
30	-0.399759884881641\\
31	-0.437509783568944\\
32	-0.439357807489347\\
33	-0.443493819937183\\
34	-0.43823346019647\\
35	-0.448743781654721\\
36	-0.46057428729186\\
37	-0.465056676021326\\
38	-0.455095054216753\\
39	-0.463602980734887\\
40	-0.497134299702848\\
41	-0.456405475996486\\
42	-0.461607953659762\\
43	-0.467431543238734\\
44	-0.481699178907957\\
45	-0.497488409531242\\
46	-0.491566651303831\\
47	-0.489462737779279\\
48	-0.508962806633321\\
49	-0.497498682067592\\
50	-0.49354033957002\\
51	-0.497912056057912\\
52	-0.496093017678467\\
53	-0.505412512936155\\
54	-0.503382028933859\\
55	-0.52416214442533\\
56	-0.522603926762207\\
57	-0.544550008258789\\
58	-0.532153628446791\\
59	-0.520440892587704\\
60	-0.532834541878595\\
};
\addplot [color=red, dashed, line width=3.0pt, forget plot]
  table[row sep=crcr]{%
1	-0.759593109224573\\
2	-0.851910476827388\\
3	-0.931248398068395\\
4	-0.899099878562075\\
5	-0.7313280766324\\
6	-0.866207448068707\\
7	-0.990998285997023\\
8	-1.0272085536674\\
9	-1.04372995832914\\
10	-1.07714261034557\\
11	-1.07998626920429\\
12	-1.1166916978057\\
13	-1.09921640121892\\
14	-1.11870116207787\\
15	-1.14149626270435\\
16	-1.13602637450519\\
17	-1.19256259507435\\
18	-1.18855636890274\\
19	-1.16210813506113\\
20	-1.19312997954371\\
21	-1.16517236023971\\
22	-1.09072259193678\\
23	-1.07693553149219\\
24	-1.09320599058905\\
25	-1.12532578022789\\
26	-1.14329006175479\\
27	-1.17276265772882\\
28	-1.15208234230952\\
29	-1.18083392582687\\
30	-1.20703772222562\\
31	-1.22005481633382\\
32	-1.21874241865254\\
33	-1.26208489068812\\
34	-1.22937949908594\\
35	-1.26117480001724\\
36	-1.2690068818918\\
37	-1.26700389402755\\
38	-1.27433797411748\\
39	-1.28632957244659\\
40	-1.31543453440127\\
41	-1.32705144995612\\
42	-1.33602820271486\\
43	-1.3294527996667\\
44	-1.31996093710498\\
45	-1.32018176557428\\
46	-1.32494053379554\\
47	-1.33735592039034\\
48	-1.34339401770564\\
49	-1.33957969692087\\
50	-1.35053901850318\\
51	-1.34476886404259\\
52	-1.37641480485034\\
53	-1.35196334938376\\
54	-1.38384298807204\\
55	-1.40343614175046\\
56	-1.38977437984097\\
57	-1.37450630646827\\
58	-1.39440827751891\\
59	-1.41230095819747\\
60	-1.42020708154538\\
};
\end{axis}
\end{tikzpicture}%
  }
  \\
  \subfloat[$\lambda=0.75$]{
%
%
\begin{tikzpicture}

\begin{axis}[%
width=0.951\fwidth,
height=0.75\fwidth,
at={(0\fwidth,0\fwidth)},
scale only axis,
xmode=log,
xmin=1,
xmax=100,
xminorticks=true,
xlabel style={font=\color{white!15!black}},
xlabel={t},
ymin=-0.8,
ymax=-0,
ylabel style={font=\color{white!15!black}},
ylabel={V},
axis background/.style={fill=white}
]
\addplot [color=blue, line width=2.0pt, forget plot]
  table[row sep=crcr]{%
1	-0.0698670388969574\\
2	-0.0961890539986359\\
3	-0.132699144478001\\
4	-0.183101635988515\\
5	-0.250742140214741\\
6	-0.293266160267643\\
7	-0.276102246336143\\
8	-0.293448152823888\\
9	-0.344386347148344\\
10	-0.319149736907851\\
11	-0.297734928600805\\
12	-0.311571017833386\\
13	-0.294400148828445\\
14	-0.302231089798049\\
15	-0.243116324401653\\
16	-0.212083661389009\\
17	-0.233746426259225\\
18	-0.25286643122445\\
19	-0.236721961942786\\
20	-0.227153876544933\\
21	-0.239960732080747\\
22	-0.252464630359502\\
23	-0.221722655206653\\
24	-0.242234634999856\\
25	-0.215727187789023\\
26	-0.207245184391738\\
27	-0.198602366718053\\
28	-0.216362141232685\\
29	-0.203299221987791\\
30	-0.231551022346517\\
31	-0.223148752456014\\
32	-0.223513820452022\\
33	-0.234256655945202\\
34	-0.236933370776921\\
35	-0.266032198231509\\
36	-0.278628900515368\\
37	-0.264393610861771\\
38	-0.274638150183318\\
39	-0.29837283368362\\
40	-0.300363381099991\\
41	-0.302679151560454\\
42	-0.314870541600552\\
43	-0.278041572603173\\
44	-0.305742628530841\\
45	-0.286874359979552\\
46	-0.303919077702958\\
47	-0.328469119416157\\
48	-0.295742198577038\\
49	-0.32463667447715\\
50	-0.312820348014414\\
51	-0.322105149700822\\
52	-0.324641287036458\\
53	-0.297480905675212\\
54	-0.324260928208976\\
55	-0.33325489418501\\
56	-0.341060371144084\\
57	-0.342033707353245\\
58	-0.328434744104012\\
59	-0.330087267902574\\
60	-0.325796950505763\\
};
\addplot [color=red, dashed, line width=3.0pt, forget plot]
  table[row sep=crcr]{%
1	-0.798003405407076\\
2	-0.795088043311391\\
3	-0.748309023109161\\
4	-0.612609627478908\\
5	-0.58164859025222\\
6	-0.427949774021936\\
7	-0.454360176873906\\
8	-0.437345512177144\\
9	-0.402815955453931\\
10	-0.390244257342745\\
11	-0.419601914557706\\
12	-0.440904840008107\\
13	-0.39469526906995\\
14	-0.35483201391564\\
15	-0.353091583286641\\
16	-0.385734545472699\\
17	-0.387596782234304\\
18	-0.406852358977653\\
19	-0.409701930267887\\
20	-0.394138146001548\\
21	-0.417418249782155\\
22	-0.427383645577882\\
23	-0.393622565060579\\
24	-0.382161985890078\\
25	-0.375151768834539\\
26	-0.364797112866392\\
27	-0.397971634740718\\
28	-0.428159932609351\\
29	-0.382225019086256\\
30	-0.410457553255195\\
31	-0.435679219800089\\
32	-0.412057421507584\\
33	-0.441729441705593\\
34	-0.440313423318161\\
35	-0.416706751428254\\
36	-0.418300438155118\\
37	-0.397801980051729\\
38	-0.424732450230448\\
39	-0.437152223855908\\
40	-0.433518092412821\\
41	-0.39931985670458\\
42	-0.435267220777679\\
43	-0.437592218978853\\
44	-0.419300957364138\\
45	-0.416881037406579\\
46	-0.396128656626779\\
47	-0.425764606269481\\
48	-0.438534695701502\\
49	-0.432377081083798\\
50	-0.441463980518391\\
51	-0.444304763034727\\
52	-0.452474923000981\\
53	-0.448621837213215\\
54	-0.42064435309106\\
55	-0.421020067123781\\
56	-0.454665887594057\\
57	-0.465649610557996\\
58	-0.472048153258424\\
59	-0.471067458917294\\
60	-0.448310148007703\\
};
\end{axis}
\end{tikzpicture}%
  }
  \subfloat[$\lambda=1$]{
%
%
\begin{tikzpicture}

\begin{axis}[%
width=0.951\fwidth,
height=0.75\fwidth,
at={(0\fwidth,0\fwidth)},
scale only axis,
xmode=log,
xmin=1,
xmax=100,
xminorticks=true,
xlabel style={font=\color{white!15!black}},
xlabel={t},
ymin=-0.7,
ymax=-0,
axis background/.style={fill=white}
]
\addplot [color=blue, line width=2.0pt, forget plot]
  table[row sep=crcr]{%
1	-0.015991389253571\\
2	-8.07710251594484e-14\\
3	-2.59936539286564e-14\\
4	-2.60348785986828e-14\\
5	-2.60224200315188e-14\\
6	-2.59820525418944e-14\\
7	-2.59965938614317e-14\\
8	-2.60060611148019e-14\\
9	-2.59506757310226e-14\\
10	-2.58498969706858e-14\\
11	-2.58919987094477e-14\\
12	-2.59483989064604e-14\\
13	-2.595303495495e-14\\
14	-2.59150661948695e-14\\
15	-2.59299891535716e-14\\
16	-2.59070821300713e-14\\
17	-2.58991891382556e-14\\
18	-2.59151399206172e-14\\
19	-2.59036430407802e-14\\
20	-2.59284539232954e-14\\
21	-2.58943058916708e-14\\
22	-2.58760956319817e-14\\
23	-2.59459746304027e-14\\
24	-2.59175295022054e-14\\
25	-2.59825035699981e-14\\
26	-2.58938635371844e-14\\
27	-2.58310058320324e-14\\
28	-2.58420733678091e-14\\
29	-2.58783117412223e-14\\
30	-2.58440162581022e-14\\
31	-2.58429971080601e-14\\
32	-2.58351474843313e-14\\
33	-2.5869000612965e-14\\
34	-2.59091421141991e-14\\
35	-2.58739836061497e-14\\
36	-2.59131536622373e-14\\
37	-2.5829626726869e-14\\
38	-2.58271634195331e-14\\
39	-2.59047142325266e-14\\
40	-2.58443241715192e-14\\
41	-2.58396707757949e-14\\
42	-2.58215472522796e-14\\
43	-2.57845499373457e-14\\
44	-2.57547777456893e-14\\
45	-2.58627556084514e-14\\
46	-2.58798513083072e-14\\
47	-2.58572825558847e-14\\
48	-2.58443111610931e-14\\
49	-2.58880695607746e-14\\
50	-2.60105757326481e-14\\
51	-2.59875602889306e-14\\
52	-2.60626044265013e-14\\
53	-2.58993130083538e-14\\
54	-2.5891459860968e-14\\
55	-2.58690168759975e-14\\
56	-2.59416063798498e-14\\
57	-2.59585459545927e-14\\
58	-2.5878549181498e-14\\
59	-2.57975375951699e-14\\
60	-2.59211019483637e-14\\
};
\addplot [color=red, dashed, line width=3.0pt, forget plot]
  table[row sep=crcr]{%
1	-0.677360003940489\\
2	-0.319188261527385\\
3	-0.238573589451842\\
4	-0.200742996733391\\
5	-0.188801791973792\\
6	-0.157848169305053\\
7	-0.147363581418297\\
8	-0.151590768638053\\
9	-0.164301512639608\\
10	-0.198659955535581\\
11	-0.185472157543024\\
12	-0.189488927671266\\
13	-0.156665545344133\\
14	-0.141346012616817\\
15	-0.142828076379696\\
16	-0.143980205895357\\
17	-0.14494774267628\\
18	-0.145557100703442\\
19	-0.14609092309342\\
20	-0.137324402147981\\
21	-0.134764797162596\\
22	-0.140554448009126\\
23	-0.142899121135694\\
24	-0.143712900049346\\
25	-0.138774590312461\\
26	-0.13836037206626\\
27	-0.135282806601259\\
28	-0.149453779259262\\
29	-0.147470735870606\\
30	-0.154207490480793\\
31	-0.155796490189568\\
32	-0.158732344859596\\
33	-0.163488711963298\\
34	-0.166258826196893\\
35	-0.172242532053593\\
36	-0.168635982124319\\
37	-0.170553091320478\\
38	-0.171303646165357\\
39	-0.185837521836149\\
40	-0.180259389616782\\
41	-0.174263442475532\\
42	-0.162013378918257\\
43	-0.167114368469105\\
44	-0.169568455307836\\
45	-0.173271142995335\\
46	-0.191054413001525\\
47	-0.202850261177353\\
48	-0.209494088522992\\
49	-0.213131370871494\\
50	-0.211058535404276\\
51	-0.207988868337055\\
52	-0.20444195101075\\
53	-0.209951638264728\\
54	-0.218554354326512\\
55	-0.227181804255464\\
56	-0.233332843632866\\
57	-0.240435706083566\\
58	-0.246103352664639\\
59	-0.243526218854078\\
60	-0.236038905669898\\
};
\end{axis}
\end{tikzpicture}%
  }
  \subfloat[legend]{
    \raisebox{4em}{\begin{tikzpicture}

  \begin{axis}[%
    hide axis,
    xmin=10,
    xmax=50,
    ymin=0,
    ymax=0.4,
    legend style={draw=white!15!black,legend cell align=left}
    ]
    \addlegendimage{color=mycolor1, line width=2.0pt}
    \addlegendentry{Bayes};
    \addlegendimage{color=mycolor2, dashed, line width=3.0pt};
    \addlegendentry{Marginal};
  \end{axis}
\end{tikzpicture}}
  }
\caption{\textbf{Sequential allocation.} Performance measured with respect to the empirical model of the holdout COMPAS data, when the DM's actions affect which data will be seen. This means that wif a prisoner was not released, then the dependent variable $y$ will remain unseen. For that reason, the performance of the Bayesian approach dominates the classical approach even when fairness is not an issue, i.e. $\lambda = 0$.}
\label{fig:sequential-allocation_extend}
\end{figure*}



\end{document}